\documentclass{article}

\usepackage{microtype}
\usepackage{graphicx}
\usepackage{wrapfig}
\usepackage{subcaption}
\usepackage{booktabs} % for professional tables

\usepackage{hyperref}
\usepackage{titletoc} 

\usepackage[accepted]{icml2026}

\usepackage{amsmath}
\usepackage{amssymb}
\usepackage{mathtools}
\usepackage{amsthm}

\usepackage{textcomp, gensymb}
\usepackage{multirow}
\usepackage{colortbl}
\usepackage[table]{xcolor}
\usepackage{overpic}
\usepackage{soul}
\usepackage{rotating}
\usepackage[capitalize,noabbrev]{cleveref}

\crefname{section}{Sec.}{Secs.}
\Crefname{section}{Section}{Sections}
\Crefname{table}{Table}{Tables}
\crefname{table}{Tab.}{Tabs.}
\Crefname{figure}{Figure}{Figures}
\crefname{figure}{Fig.}{Figs.}

\usepackage{tikz} % 添加TikZ支持
\usetikzlibrary{positioning} % 用于精确定位
\usetikzlibrary{calc} % 用于计算相对位置

\usepackage{afterpage}
\theoremstyle{plain}
\newtheorem{theorem}{Theorem}[section]
\newtheorem{proposition}[theorem]{Proposition}

\theoremstyle{definition}

\theoremstyle{remark}

\usepackage[textsize=tiny]{todonotes}
\usepackage{dirtree}

\icmltitlerunning{ Efficient Multi-Scale Transformer for Accumulative Context Weather Forecasting }

\begin{document}

\twocolumn[
  \icmltitle{  EMFormer: Efficient Multi-Scale Transformer for\\Accumulative Context Weather Forecasting   }

  % It is OKAY to include author information, even for blind submissions: the
  % style file will automatically remove it for you unless you've provided
  % the [accepted] option to the icml2026 package.

  % List of affiliations: The first argument should be a (short) identifier you
  % will use later to specify author affiliations Academic affiliations
  % should list Department, University, City, Region, Country Industry
  % affiliations should list Company, City, Region, Country

  % You can specify symbols, otherwise they are numbered in order. Ideally, you
  % should not use this facility. Affiliations will be numbered in order of
  % appearance and this is the preferred way.
  \icmlsetsymbol{equal}{*}

  \begin{icmlauthorlist}
    % \icmlauthor{Hao Chen}{equal,yyy}
    % \icmlauthor{Tao Han}{equal,yyy,comp}
    \icmlauthor{Hao Chen}{yyy}
    \icmlauthor{Tao Han}{yyy}
    \icmlauthor{Jie Zhang}{yyy}
    \icmlauthor{Song Guo}{yyy}
    \icmlauthor{Fenghua Ling}{comp}
    \icmlauthor{Lei Bai}{comp}
    % \icmlauthor{Firstname7 Lastname7}{comp}
    %\icmlauthor{}{sch}
    % \icmlauthor{Firstname8 Lastname8}{sch}
    % \icmlauthor{Firstname8 Lastname8}{yyy,comp}
    %\icmlauthor{}{sch}
    %\icmlauthor{}{sch}
  \end{icmlauthorlist}

  \icmlaffiliation{yyy}{Department of Computer Science and Engineering, Hong Kong University of Science and Technology, Hong Kong, China}
  \icmlaffiliation{comp}{Shanghai AI Laboratory, Shanghai, China}
  % \icmlaffiliation{sch}{School of ZZZ, Institute of WWW, Location, Country}

  \icmlcorrespondingauthor{Song Guo}{songguo@ust.hk}
  \icmlcorrespondingauthor{Jie Zhang}{csejzhang@ust.hk}

  % You may provide any keywords that you find helpful for describing your
  % paper; these are used to populate the "keywords" metadata in the PDF but
  % will not be shown in the document
  \icmlkeywords{Weather Forecasting, Efficient Transformer, Typhoon Track Prediction}

  \vskip 0.3in
]

% this must go after the closing bracket ] following \twocolumn[ ...

% This command actually creates the footnote in the first column listing the
% affiliations and the copyright notice. The command takes one argument, which
% is text to display at the start of the footnote. The \icmlEqualContribution
% command is standard text for equal contribution. Remove it (just {}) if you
% do not need this facility.

% Use ONE of the following lines. DO NOT remove the command.
% If you have no special notice, KEEP empty braces:
\printAffiliationsAndNotice{}  % no special notice (required even if empty)
% Or, if applicable, use the standard equal contribution text:
% \printAffiliationsAndNotice{\icmlEqualContribution}

\begin{abstract}

Long-term weather forecasting is critical for socioeconomic planning and disaster preparedness.
While recent approaches employ finetuning to extend prediction horizons, they remain constrained by the issues of catastrophic forgetting, error accumulation, and high training overhead.
To address these limitations, we present a novel pipeline across pretraining, finetuning and forecasting to enhance long‑context modeling while reducing computational overhead. First, we introduce an Efficient Multi‑scale Transformer (EMFormer) to extract multi‑scale features through a single convolution in both training and inference. Based on the new architecture, we further employ an accumulative context finetuning to improve temporal consistency without degrading short‑term accuracy. Additionally, we propose a composite loss that dynamically balances different terms via a sinusoidal weighting, thereby adaptively guiding the optimization trajectory throughout pretraining and finetuning.
Experiments show that our approach achieves great performance in weather forecasting and extreme event prediction, substantially improving long-term forecast accuracy. Moreover, EMFormer demonstrates strong generalization on vision benchmarks (ImageNet-1K and ADE20K). Code: \url{https://github.com/chenhao-zju/emformer}

% while delivering a 5.69$\times$ speedup over conventional multi-scale modules
  
\end{abstract}

\section{Introduction}
\label{sec:intro}

\textbf{Why do we need data-driven methods in weather forecasting?} 
Long‑term weather forecasting is a critical challenge with significant socioeconomic implications, affecting sectors such as aviation, maritime navigation, and finance. Traditional Numerical Weather Prediction (NWP) generates forecasts by solving partial differential equations~\cite{nwp, lynch2008origins, kalnay2002atmospheric}, but it suffers from cumulative errors and high computation. In contrast, data‑driven models~\cite{fourcastnet, panguweather, fengwu, aurora} learn atmospheric patterns directly from historical observations. By producing forecasts from learned representations rather than iterative physical integration, these approaches minimize error propagation and achieve greater computational efficiency.

\begin{figure}[t] % 使用figure*环境确保图片出现在页面顶部
    \centering
    \begin{minipage}{\columnwidth} % 限制为单栏宽度
        \centering

        \begin{subfigure}{1.0\textwidth}
            \centering
            \captionsetup{justification=centering}
            \caption{Z500 Denormalized RMSE $\downarrow$ (6-hour prediction) }
            \vspace{-0.1cm} % 行间距
            \includegraphics[width=1.0\linewidth]{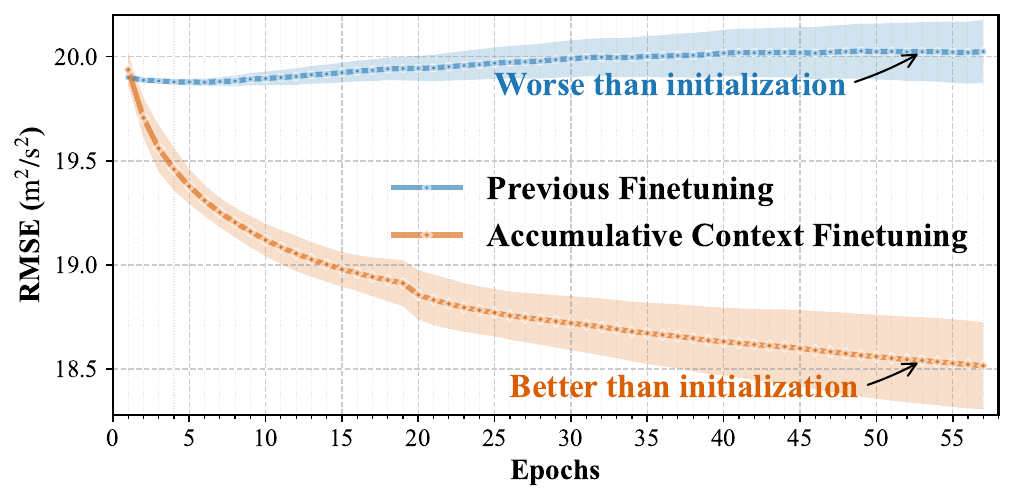}
            \label{fig:motiv_a}
        \end{subfigure}%
        % \hspace{0.3\textwidth} % 居中占位
        % \hfill
        \vspace{-0.6cm} % 行间距
        \begin{subfigure}{1.0\textwidth}
            \centering
            \captionsetup{justification=centering}
            \caption{Z500 Denormalized RMSE $\downarrow$ (5-day prediction)}
            \vspace{-0.1cm} % 行间距
            \includegraphics[width=1.0\linewidth]{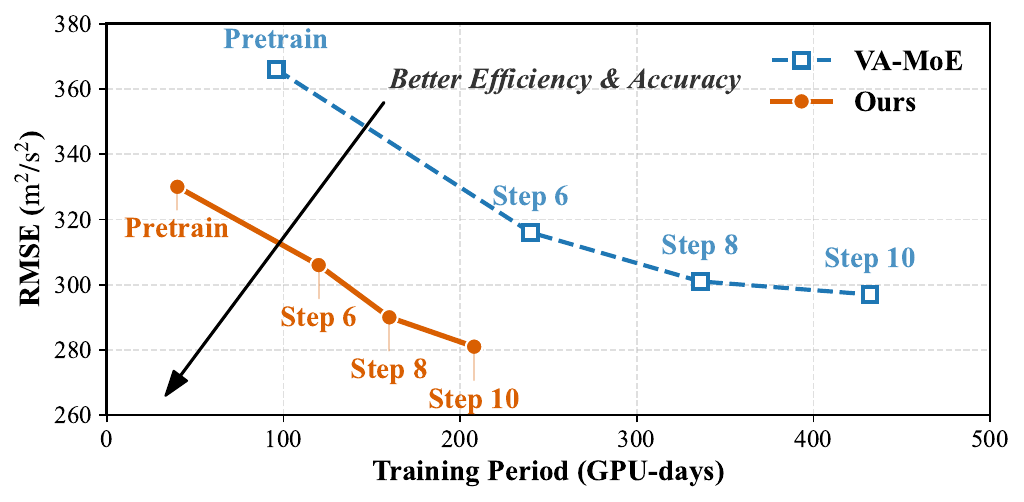}
            \label{fig:motiv_b}
            % \vspace{-0.6cm} % 行间距
        \end{subfigure}
        \vspace{-1cm}
        % 整体大标题
        \caption{ Denormalized Z500 RMSE ($m^2/s^2$) for short-term (6-hour) and medium-term (5-day) forecasts. (a) Training convergence comparison in 6-step finetuning: \textcolor[HTML]{d95f02}{accumulative context finetuning} and \textcolor[HTML]{1f77b4}{previous finetuning}. (b) Medium-term forecast performance: The \textcolor[HTML]{d95f02}{proposed method} consistently outperforms \textcolor[HTML]{1f77b4}{VA-MoE} across pretraining and multi-step finetuning (6, 8, 10 steps). Two models are trained by us with A100.
        }
        \vspace{-4mm}
        \label{fig:motivation}
    \end{minipage}
\end{figure}

% motivation:

% \textbf{Why do we need efficient structure in finetuning?} 
% Although AI-driven methods performs better than NWP by learning historical patterns, but still suffering from step-wise error accumulation in auto-regression. To improve the accuracy in long-term weather forecasting, many AI-based methods adopt the long-term finetuning strategy. Although this mechanism can effectively increase the length of reasoning, it still faces two problems. 1) As the length of inference deepens, the model will inevitably forget several early stages. 2) The longer the time step of finetuning, the longer the training time required. To tackle the first issue, this work introduces kv cache finetuning to improve the temporal consistency and maintain the accuracy of earliest step forecasts. But the kv-cache finetuning will further increase the training burden. Thus, it is crucial to design an efficient model to reduce the training burden and further lower the burden during the finetuning stage.

\textbf{Why is an Efficient Architecture Essential for Finetuning?} While data-driven methods outperform NWP by capturing atmospheric dynamics, they remain prone to stepwise error accumulation during auto-regressive forecasting. To enhance long-term accuracy, many approaches employ extended finetuning on multi-step sequences. Although this improves forecast horizons, it introduces two critical limitations: (1) as inference length increases, the model gradually forgets information from earlier steps (\textcolor[HTML]{1f77b4}{blue line} in \cref{fig:motiv_a}), and (2) longer finetuning sequences demand greater training time and computational resources (\textcolor[HTML]{1f77b4}{blue line} in \cref{fig:motiv_b}). To address the first issue, we introduce accumulative context finetuning,  which explicitly preserves historical information to ensure temporal consistency.
Yet, this technique inevitably exacerbates the computational burden. Consequently, it is crucial to design an efficient framework that reduces computational cost during training and finetuning.

To address the efficiency and stability bottlenecks in long-context forecasting, we propose \textbf{a novel pipeline} comprising pretraining, finetuning, and forecasting. During pretraining, we introduce a hierarchical pruning-recovering framework coupled with an Efficient Multi‑scale Transformer (EMFormer) to lower computational cost. For finetuning, an accumulative context mechanism is employed to strengthen long‑horizon representations, ensuring temporal consistency across multi‑step predictions. To further refine optimization, we design a variable‑ and geography‑aware loss that adapts to the inherent heterogeneity of atmospheric variables across both physical properties and geographical regions.

We make three core contributions: (i) \textbf{Efficient Multi-Scale Architecture via Hard-aware Design.} 
While multi-scale transformers are effective, their training costs are often prohibitive. 
% \textbf{Efficient multi-convolution layer via custom CUDA kernels.} 
% While multi-scale transformers and re-parameterization techniques are well-established, their optimization during training remains computationally expensive. 
To address this, we propose EMFormer, which integrates a novel multi-convolution (multi-convs) layer optimized with custom CUDA kernels. Unlike standard re-parameterization methods that only accelerate inference, our method enables multi-scale feature capture via a single convolution during both training and inference.
% fuse branches for inference, we redesign the underlying CUDA kernel to enable multi-scale feature capture via a single convolution during training. 
This redesign preserves representational power while accelerating forward and backward passes by 5.69$\times$ compared to traditional multi-scale implementations. (ii) \textbf{Accumulative Finetuning for Long-context Consistency.} We introduce a specialized finetuning strategy tailored for long-context weather forecasting. 
By injecting historical Key–Value (KV) pairs into current generation steps and employing a memory-pruning mechanism, we explicitly bound memory usage while strengthening long-term temporal dependencies.
This ensures sustained accuracy over extended horizons without compromising short-term performance. 
% This method (a) injects historical Key–Value (KV) pairs into current generation steps to enhance long-term temporal dependencies, and (b) employs a memory module to prune the KV pairs, explicitly bounding memory usage. 
% This design improves long-range forecast accuracy without compromising short-term performance or computational efficiency. 
(iii) \textbf{Sinusoidal Weighted Optimization Objective.} We design a composite objective function featuring a sinusoidal weighting mechanism to address the heterogeneity of atmospheric data. This includes a latitude-adaptive term, which accounts for spatial distortion and a variable-adaptive term that balances learning dynamics across distinct physical variables, ensuring robust optimization across diverse atmospheric conditions.

In addition to theoretical analysis, experiments are presented in \cref{fig:motivation}. \cref{fig:motiv_a} plots the denormalized Z500 RMSE (\(m^2/s^2\)) curves for 6‑hour forecasts under two finetuning strategies. While conventional finetuning leads to progressively deteriorating first‑step RMSE, the proposed accumulative finetuning steadily reduces the error, demonstrating superior stability in short‑term forecasts. \cref{fig:motiv_b} compares the 5‑day predictions between VA‑MoE~\cite{vamoe} and Ours with distinct finetuning steps. Our approach achieves lower RMSE with shorter finetuning period: at Step 10, it reaches about 280 \(m^2/s^2\) after 210 GPU‑days, whereas VA‑MoE requires 430 GPU‑days to attain 295 \(m^2/s^2\). Although both methods improve with extended finetuning, our method outperforms VA‑MoE with fewer GPU‑days. Beyond atmosphere, EMFormer also delivers competitive performance on vision tasks such as classification and segmentation, surpassing existing methods.

% conclusion and main contributions:

% \input{sections/relatedwork}
\section{Methodology} 
\label{sec:method}

\begin{figure*}[t]
  \centering
   \includegraphics[width=0.95\linewidth]{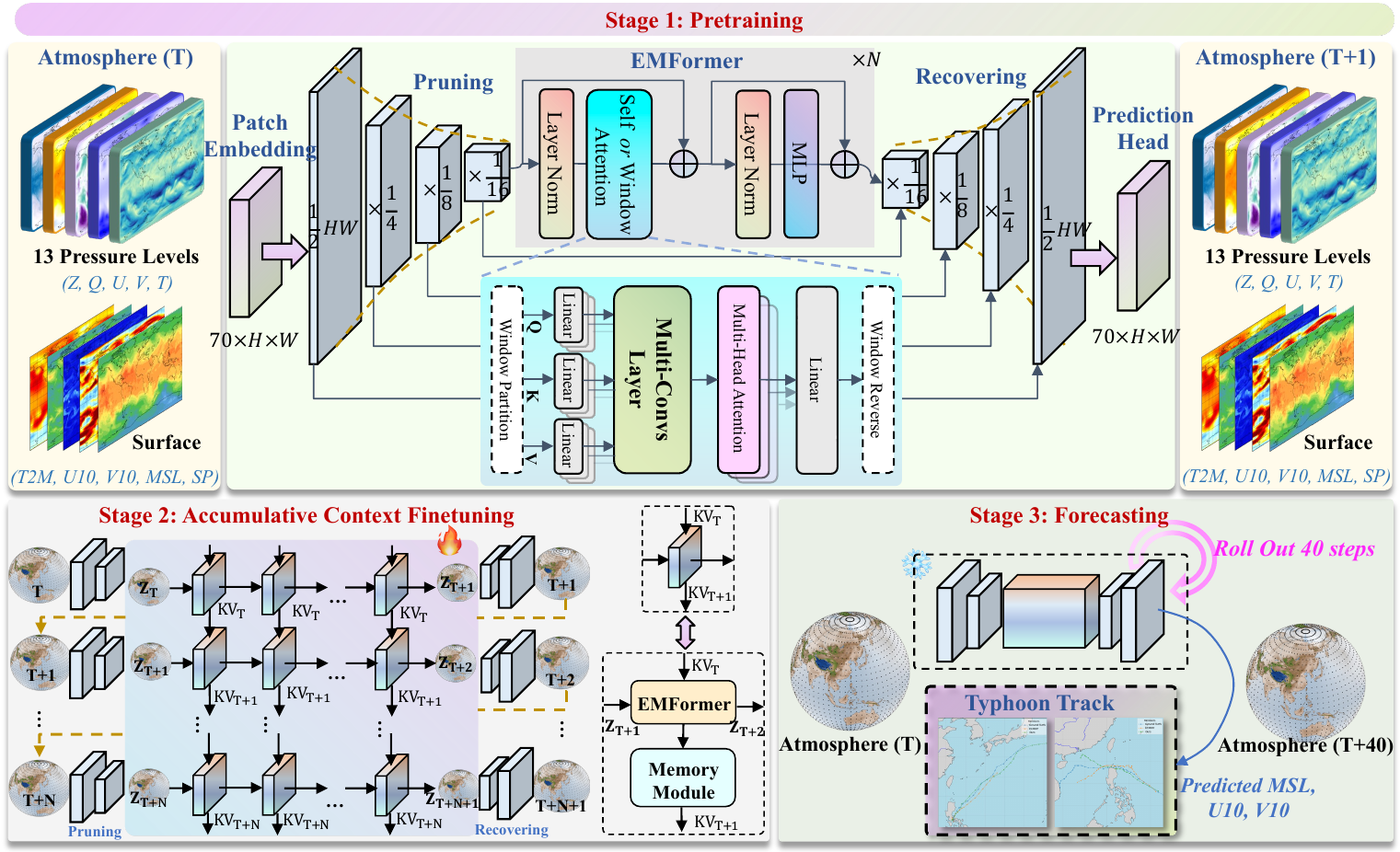}
   % \vspace{-0.3cm}
   \caption{Illustration of the novel pipeline with three stages. \textcolor{red}{Stage 1:} EMFormer is pretrained on atmospheric variables with pruning-recovering architecture that includes a pruning module, a series of EMFormer blocks, and a recovering module; \textcolor{red}{Stage 2:} accumulative context finetuning; \textcolor{red}{Stage 3:} The forecasting stage with weather forecasting and typhoon track prediction.
   }
   \label{fig:framework}
   \vspace{-11pt}
\end{figure*}

This section presents a novel pipeline for weather forecasting, comprising: (1) single‑step pretraining, (2) multi‑step finetuning, and (3) multi‑step forecasting. The forecasting task is defined and the framework is outlined in \cref{subsec:problem}. The core contribution, EMFormer, which incorporates a multi‑convolution (multi‑convs) layer, is introduced in \cref{subsec:emtransformer}. \cref{subsec:kv} then describes the accumulative context finetuning with a memory module for cache management. Finally, \cref{subsec:loss} details a sinusoidal‑weighted loss that combines variable‑adaptive and latitude‑weighted terms. 

In addition, two propositions concerning the multi‑convs layer (\cref{thm:multiconvs}) and loss function (\cref{thm:loss}) are formulated. Proofs of the propositions and supporting experiments are provided in \cref{sec:proof} and \cref{subsec:add_proofresult}, respectively. 

% The pseudo-codes of multi-convs layer and accumulative finetuning are provided in \cref{alg:multi_convs_layer}.

% In this section, we first formally define the problem in \cref{subsec:problem}. Then, we present the architecture of Variable-Aware Transformer with Channel=Adapted Expert(CAE) and Shared Expert in \cref{subsec:transformer}. Finally, we introduce the implementation details of our method in \cref{subsec:implementation}. 

\subsection{Overview} 
\label{subsec:problem}

% STCast is composed of three components: the Encoder, Processor, and Decoder. The Processor employs an alternating strategy that integrates window-based attention with self-attention. This hybrid design allows the model to capture both local and global dependencies within the input distribution. \textbf{More details are provided in the Appendix.}

% As illustrated in \cref{fig:framework}, this pipeline is applied to the weather forecasting task, where the model \(\Phi\) predicts future atmospheric states \(\mathbf{X}^{t+1}\) based on historical inputs \(\mathbf{X}^{t}\). Specifically, \(\mathbf{X}^{t+1}=\Phi(\mathbf{X}^{t})\), where \(\mathbf{X}^{t}\) includes upper-air variables \(\mathbf{P}^{t}\in\mathbb{R}^{H\times W\times 13\times N}\) across 13 pressure levels and surface variables \(\mathbf{S}^{t}\in\mathbb{R}^{H\times W\times M}\), with N and M denoting the number of variables per pressure and surface level, respectively. \textbf{More details are provided in the Appendix.}

As illustrated in \cref{fig:framework}, this pipeline addresses weather forecasting, in which a model \(\Phi\) predicts future atmospheric states \(\mathbf{X}^{t+1}\) from historical inputs \(\mathbf{X}^{t}\), such that \(\mathbf{X}^{t+1}=\Phi(\mathbf{X}^{t})\). The input \(\mathbf{X}^{t}\) comprises upper-air variables \(\mathbf{P}^{t}\in\mathbb{R}^{H\times W\times 13\times N}\) across 13 pressure levels and surface variables \(\mathbf{S}^{t}\in\mathbb{R}^{H\times W\times M}\), where \(N\) and \(M\) denote the number of variables per level and surface, respectively. 

% \textbf{More details are provided in \cref{sec:add_structure}.}

% In the single-step pretraining stage, this work introduces an efficient hierarchical structure to learn the atmospheric patterns, which prunes the input variables \(\mathbf{X}^{t}\) from a high resolution $HW$ to $\frac{1}{8}HW$, and then trains the latent embedding \(\mathbf{Z}^{t}\in\mathbb{R}^{C\times \frac{1}{8}HW}\) with a series of Efficient Hierarchical Transformer (EHTramsformer), where $C$ denotes the embedding dimension. 

% In single-step pretraining, we introduce an efficient hierarchical framework to learn atmospheric patterns. The input variables \(\mathbf{X}^{t}\) are first patchly divided and spatially pruned from a high resolution of \(HW\) to \(\frac{1}{16}HW\). The resulting latent representation, \(\mathbf{Z}^{t} \in \mathbb{R}^{C \times \frac{1}{16}HW}\), is then processed by a series of Efficient Hierarchical Transformers (EMFormers), where \(C\) denotes the dimension of embedding.

In single‑step pretraining, we employ an efficient framework to capture atmospheric patterns. The input variables \(\mathbf{X}^{t}\) are first partitioned into patches and spatially pruned from the resolution \(HW\) to \(\frac{1}{16}HW\). The latent representation, \(\mathbf{Z}^{t} \in \mathbb{R}^{C \times \frac{1}{16}HW}\), is then processed through a stack of EMFormers, where \(C\) denotes dimension.

During multi‑step accumulative‑context finetuning, the model learns to perform iterative forecasting by leveraging a memory module that selectively prunes and propagates historical KV states. In each step after initialization, the model conditions its prediction and the retained KV pairs from the preceding step, outputting both the prediction and an updated cache: \(\mathbf{X}^{t+2}, \mathbf{KV}^{t+1} = \Phi(\mathbf{X}^{t+1}, \mathbf{KV}^{t})\). In initial step, KV values are empty and thus no KV are injected.

% In the multi-step forecasting stage, this work unifies two key subtasks to address diverse forecasting challenges: global deterministic forecasting \(\Phi_g\), and typhoon track prediction \(\Phi_{tc}\). In the global forecasting task, this work adopts an auto-regression strategy to generate multi-step forecasts, the generated prediction will serve as the input of the next step, which is formulated as $\mathbf{X}^{t+N+1}, \mathbf{KV}^{t+N}=\Phi(\mathbf{X}^{t+N}, \mathbf{KV}^{t+N-1})$. And the generated results also provide some variables, like MSL, U10, and V10, which are used to predict the typhoon track.

% The principal contributions of this work are the EMFormer module and KV cache finetuning strategy, detailed in the following subsections. Apart from those two contribution, this paper also provides the loss function.

The multi-step forecasting stage unifies two key subtasks: global deterministic forecasting \(\Phi_g\) and typhoon track prediction \(\Phi_{tc}\). For global forecasting, autoregressive generates iterative predictions, where each output serves as the input for the subsequent step:\(\mathbf{X}^{t+N+1}, \mathbf{KV}^{t+N} = \Phi(\mathbf{X}^{t+N}, \mathbf{KV}^{t+N-1})\). The resulting forecasts provide atmospheric variables, such as mean sea-level pressure (MSL) and 10-meter wind components (U10, V10), which are subsequently utilized for typhoon track prediction.

The principal contributions are: (1) the \textbf{Efficient Multi-scale Transformer (EMFormer)}, and (2) the \textbf{Accumulative Context Finetuning} with a memory module, both of which are detailed in the subsections. In addition, we introduce a \textbf{novel loss} specifically designed for the atmosphere.

\subsection{EMFormer}
\label{subsec:emtransformer}

% The EMFormer consists of self or window attention, MLP, and two-hop residual connection, which is similar to the previous Transformer structure. Within the attention structure, the module can switch between self- and window- attention by adding and removing window partition and window reverse. The specific setting is introducing a extra Multi-Convs Layer to capture the hierarchical features from the latent space $\mathbf{Z}^{t}\in\mathbb{R}^{3C\times \frac{1}{8}HW}$. $3C$ denotes the concatenated features of Q, K, V. Different from the plain multi-scale module that capture the multi-scale information with three convolution layers of different convolution kernels, as shown in \cref{fig:multiconvs} (a), EMFormer introduces a novel multi-convs layer, which can capture multi-scale information by operating one convolution layer, as shown in \cref{fig:multiconvs} (c). At the same time, the multi-convs layer has the same function with plain multi-scale module, while be accelerated many times. To verify this function, this paper splits it into two parts, including forward path and backward path.

Building upon standard Transformer components, self- or window-based attention, MLPs, and residual connections, EMFormer introduces an efficient Multi-Convs Layer to capture multi-scale features from the latent space \(\mathbf{Z}^{t} \in \mathbb{R}^{3C \times \frac{1}{16}HW}\). Here, \(3C\) corresponds to the query-key-value concatenation. Unlike conventional multi-scale modules that employ separate convolution layers with different kernel sizes (Fig. \ref{fig:multiconvs}a), the proposed Multi-Convs Layer captures multi-scale information through a \textbf{single, fused convolution operation} (Fig. \ref{fig:multiconvs}c). This design preserves the representational capacity of a multi-branch network while achieving significant acceleration. To validate equivalence, we examine both the forward and backward paths. The pseudo-code of CUDA kernel is provided in \cref{alg:multi_convs_layer}.

In the forward pass, the equivalence of the fused operation to separate multi‑scale convolutions follows the linearity of convolution. We consider three kernels \(K_1, K_3, K_5\) of sizes \(1\times1\), \(3\times3\), and \(5\times5\), all applied with stride 1. The latent embedding \(\mathbf{Z}^{t}\) is reshaped from \(\mathbb{R}^{3C \times \frac{1}{16}HW}\) to \(\mathbb{R}^{3C \times H_0 \times W_0}\), where \(H_0 = \frac{1}{4}H\) and \(W_0 = \frac{1}{4}W\). The summed multi‑scale result among three separate convolutions (Fig.~\ref{fig:multiconvs}a) can be obtained equivalently by a single convolution:
\begin{align}
\mathbf{Z}'^{t} = \sum_{i=0}^{H_0-1} \sum_{j=0}^{W_0-1} \bigl( K_{1} \oplus K_{3} \oplus K_{5} \bigr) \odot \mathbf{Z}^{t}[i,j,5],
\end{align}
where \(K_{1} \oplus K_{3} \oplus K_{5}\) denotes by aligning and adding \(K_1, K_3, K_5\) at their centers (Fig.~\ref{fig:multiconvs}b,c), \(\oplus\) denotes element‑wise addition after zero‑padding each kernel to the largest kernel (\(5\times5\)), and \(\mathbf{Z}^{t}[i,j,5]\) is the \(5\times5\) region centered at \((i,j)\). This formulation preserves mathematical equivalence while reducing the forward pass to a single convolution, thereby lowering computation cost.

\begin{figure}[t]
  \centering
   \begin{overpic}[width=\linewidth]{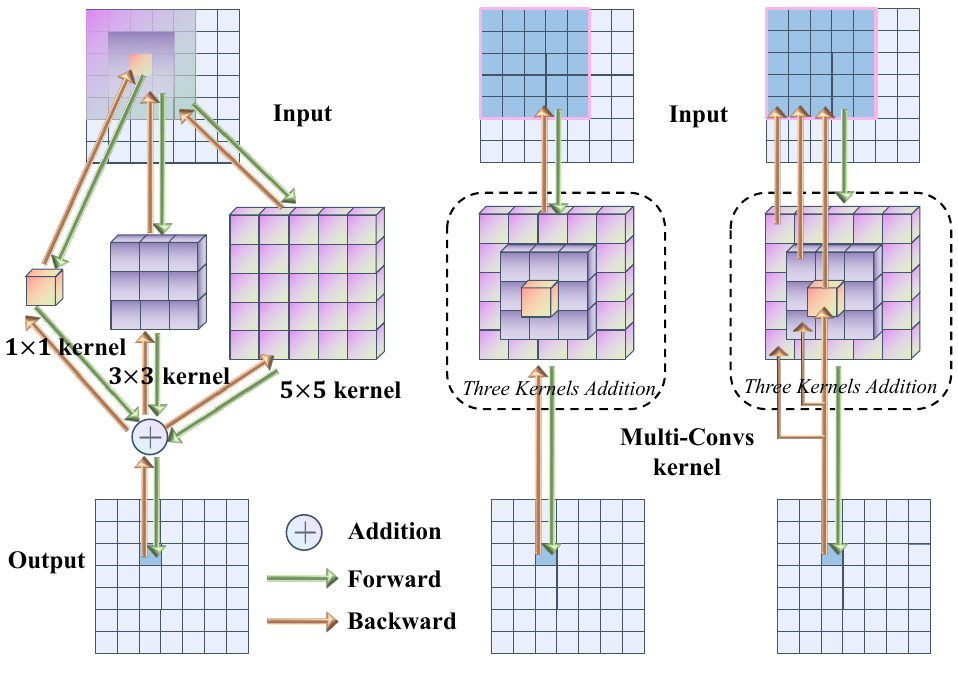} 

    % \put(7, 71.6){\fontsize{7pt}{16.8pt}\selectfont \textbf{(a) Plain Convs} }
    % \put(39, 71.6){\fontsize{7pt}{16.8pt}\selectfont \textbf{(b) Re-Parameterization} }
    % \put(71, 71.6){\fontsize{7pt}{16.8pt}\selectfont \textbf{(c) Multi-Convs Layer} }
    \put(7.7 , 0){\fontsize{7pt}{16.8pt}\selectfont \textbf{(a) Plain Convs} }
    \put(39  , 0){\fontsize{7pt}{16.8pt}\selectfont \textbf{(b) Re-Parameterization} }
    \put(71  , 0){\fontsize{7pt}{16.8pt}\selectfont \textbf{(c) Multi-Convs Layer} }
    
    \end{overpic}
   \vspace{-5.5mm}
   \caption{Illustration of Multi-Convs Layer within EMFormer.}
   \label{fig:multiconvs}
   \vspace{-0.3cm}
\end{figure}

In the backward pass, gradients for the three distinct kernels are computed separately, allowing each kernel to learn scale‑specific features while leveraging the fused forward operation. After the forward pass, the outputs of the architectures in Fig. \ref{fig:multiconvs} a, b, and c are identical, denoted \(\mathbf{Z}'^{t}\). Unlike the standard implementation in Fig. \ref{fig:multiconvs} a, our multi‑convs layer merges the three gradient‑computation loops into one, updating the gradients of all three convolutions in a single parallelized loop. With \(L\) and \(\mathbf{Z}^{t}[i,j,r]\) denoting loss and \(r \times r\) region centered at \((i,j)\), the process is formulated as:
\begin{align}
    \frac{\partial L}{\partial K_{1}},\,
    &\frac{\partial L}{\partial K_{3}},\,
    \frac{\partial L}{\partial K_{5}}
    = \sum_{i=0}^{H_0-1} \sum_{j=0}^{W_0-1}
    \Bigl[ 
    \frac{\partial L}{\partial \mathbf{Z}'^{t}[i,j]} \cdot \mathbf{Z}^{t}[i,j,1], \nonumber \\
    & \frac{\partial L}{\partial \mathbf{Z}'^{t}[i,j]} \cdot \mathbf{Z}^{t}[i,j,3], 
    \frac{\partial L}{\partial \mathbf{Z}'^{t}[i,j]} \cdot \mathbf{Z}^{t}[i,j,5] \Bigr],
\end{align}
The key difference between Fig.\ref{fig:multiconvs} a/c in backward is thus the consolidation of the gradient updates from three sequential computations into a single, parallel computation.

The standard re-parameterization module (Fig. \ref{fig:multiconvs}b) achieves computational efficiency during inference. However, applying this technique during training, the combined module mathematically degenerates into single convolution. Under standard differentiation, the gradients for the separate branches become coupled, making them functionally equivalent to single kernel, differing only in its initialization. To preserve the distinct optimization dynamics of each scale, we implement a custom CUDA kernel (Fig. \ref{fig:multiconvs}c) that decouples the backward pass, maintaining independent gradient paths for each kernel and ensuring each branch retains its unique representation throughout training.

% \begin{theorem}
%   \label{thm:multiconvs}
%   If the input feature and the initialization are the same, our multi-convs layer has the same function with the plain multi-scale module with fewer computation cost.
% \end{theorem}

% \begin{theorem}
% \label{thm:multiconvs}
% Given identical input features and initialization, the proposed multi‑convs layer produces functionally equivalent outputs to the plain multi‑scale module, while achieving a lower computational cost in both forward and backward passes.
% \end{theorem}

\begin{proposition}[Efficiency and Equivalence of Multi-Conv Layer]
\label{thm:multiconvs}
Let $\mathcal{M}_{\text{plain}}$ denote a standard multi-scale module with kernels $K_{r \in \{1,3,5\}}$, and let $\mathcal{M}_{\text{mc}}$ denote the multi-convs layer. Given identical input features $\mathbf{Z}^t$ and identical kernel initialization, the following properties hold:

\qquad \textbf{1. Function equivalence}: For spatial position $(i,j)$,
\begin{equation}
    \mathcal{M}_{\text{plain}}(\mathbf{Z}^t)[i,j] = \mathcal{M}_{\text{mc}}(\mathbf{Z}^t)[i,j].
\end{equation}

\qquad \textbf{2. Gradient equivalence}: The gradients to each weight $K_r$ are identical:
\begin{equation}
    \frac{\partial L}{\partial K_r}\bigg|_{\mathcal{M}_{\text{plain}}} = \frac{\partial L}{\partial K_r}\bigg|_{\mathcal{M}_{\text{mc}}}, \quad \forall r \in \{1,3,5\}
\end{equation}

\qquad \textbf{3. Computational efficiency}: The multi-convs layer reduces the computation complexity from $\mathcal{O}(N_{\text{kernels}} \cdot H_0 \cdot W_0 \cdot r^2)$ to $\mathcal{O}(H_0 \cdot W_0 \cdot r_{\max}^2)$, where $N_{\text{kernels}}=3$, $r_{\max}=5$.

% \begin{enumerate}
%     \item \textbf{Function equivalence}: For all spatial positions $(i,j)$,
%     \begin{equation}
%         \mathcal{M}_{\text{plain}}(\mathbf{Z}^t)[i,j] = \mathcal{M}_{\text{mc}}(\mathbf{Z}^t)[i,j].
%     \end{equation}

%     \item \textbf{Gradient equivalence}: The gradients to each weight $K_r$ are identical:
%     \begin{equation}
%         \frac{\partial L}{\partial K_r}\bigg|_{\mathcal{M}_{\text{plain}}} = \frac{\partial L}{\partial K_r}\bigg|_{\mathcal{M}_{\text{mc}}}, \quad \forall r \in \{1,3,5\}
%     \end{equation}
    
%     \item \textbf{Computational efficiency}: The multi-convs layer reduces the computation complexity from $\mathcal{O}(N_{\text{kernels}} \cdot H_0 \cdot W_0 \cdot r^2)$ to $\mathcal{O}(H_0 \cdot W_0 \cdot r_{\max}^2)$ for both forward and backward, where $N_{\text{kernels}}=3$, $r_{\max}=5$.
% \end{enumerate}

\end{proposition}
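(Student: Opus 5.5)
The plan is to make both modules explicit as functions of $(K_1,K_3,K_5)$ and $\mathbf{Z}^t$. Each claim then reduces to linearity of cross-correlation in the kernel argument.

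Write $\iota_r(K_r)$ for the zero-padding of the $r\times r$ kernel $K_r$ into a centered $5\times5$ array. The plain module is then
\[
\mathcal{M}_{\text{plain}}(\mathbf{Z}^t)[i,j]=\sum_{r\in\{1,3,5\}}\langle K_r,\mathbf{Z}^t[i,j,r]\rangle .
\]
Here $\langle\cdot,\cdot\rangle$ sums over channels and window offsets, and stride 1 with "same" padding keeps all three outputs on the $H_0\times W_0$ grid. The multi-convs layer is
\[
\mathcal{M}_{\text{mc}}(\mathbf{Z}^t)[i,j]=\big\langle \textstyle\sum_r \iota_r(K_r),\,\mathbf{Z}^t[i,j,5]\big\rangle .
\]

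\textbf{Step 1 (function equivalence).} The centered $r\times r$ window is a sub-block of the $5\times5$ window at the same center. Hence $\langle \iota_r(K_r),\mathbf{Z}^t[i,j,5]\rangle=\langle K_r,\mathbf{Z}^t[i,j,r]\rangle$, because the padded entries contribute zero. Bilinearity of $\langle\cdot,\cdot\rangle$ then gives the equality at every $(i,j)$. Boundary positions need a remark: zero-padding the input by 2 for the fused kernel agrees with padding by $(r-1)/2$ for each branch, since the extra padded pixels only meet zero kernel entries.

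\textbf{Step 2 (gradient equivalence).} By Step 1 both modules define the same map $(K_1,K_3,K_5)\mapsto \mathbf{Z}'^t$, and $L$ depends on the kernels only through $\mathbf{Z}'^t$. The chain rule therefore gives
\[
\frac{\partial L}{\partial K_r}=\sum_{i,j}\frac{\partial L}{\partial \mathbf{Z}'^t[i,j]}\,\frac{\partial \mathbf{Z}'^t[i,j]}{\partial K_r},
\qquad
\frac{\partial \mathbf{Z}'^t[i,j]}{\partial K_r}=\mathbf{Z}^t[i,j,r],
\]
in both cases. For the fused form, the key identity is $\partial \langle \iota_r(K_r),\mathbf{Z}^t[i,j,5]\rangle/\partial K_r=\iota_r^{\ast}(\mathbf{Z}^t[i,j,5])=\mathbf{Z}^t[i,j,r]$. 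Here $\iota_r^{\ast}$ is the adjoint of the padding map, i.e. cropping to the central $r\times r$ block. This reproduces the backward formula stated before the proposition.

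I expect this step to be the main subtlety, because it must be reconciled with the earlier remark that naive re-parameterization "couples" gradients. I would handle it by stating precisely what the custom kernel computes. It keeps $K_1,K_3,K_5$ as separate leaf parameters and returns the three cropped gradients above, rather than treating $\sum_r\iota_r(K_r)$ as a single parameter. The gradient with respect to the merged kernel restricted to the centre would not be the per-branch gradient, but the per-branch gradients computed from the same upstream signal are identical to those of the plain module. Since the forward outputs coincide, the upstream signal $\partial L/\partial\mathbf{Z}'^t$ is also identical.

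\textbf{Step 3 (complexity).} I would count multiply-adds per channel. Both passes of the plain module touch $H_0W_0$ positions for each of $N_{\text{kernels}}$ kernels, with $r^2$ operations each, and require three separate convolution launches and three passes over $\mathbf{Z}^t$ and $\partial L/\partial\mathbf{Z}'^t$. The fused layer loads each $5\times5$ window once, at cost $O(H_0W_0 r_{\max}^2)$. In the backward pass it reuses the same window to write all three cropped gradients in one loop, since the $1\times1$ and $3\times3$ windows are sub-blocks of the loaded $5\times5$ window. Summed over branches, the plain cost is at least $H_0W_0 r_{\max}^2$ and at most $N_{\text{kernels}}H_0W_0 r_{\max}^2$. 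The claim is therefore best read as a reduction from $N_{\text{kernels}}$ passes to a single pass over the data. I would state it that way rather than as an asymptotic separation, because with fixed $N_{\text{kernels}}=3$ and $r_{\max}=5$ both are $O(H_0W_0)$.
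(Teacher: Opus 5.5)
Your Steps 1 and 2 follow the paper's proof. The paper also rewrites the sum of the three convolutions as one convolution with the centre-aligned, zero-padded kernel $K_1\oplus K_3\oplus K_5$. It argues gradient equivalence from the fact that each $\partial L/\partial K_r$ depends only on the shared upstream gradient $\partial L/\partial \mathbf{Z}'^t$ and the window $\mathbf{Z}^t[i,j,r]$. Your boundary-padding remark, the cropping adjoint $\iota_r^{\ast}$, and the point that two identical maps of $(K_1,K_3,K_5)$ have identical derivatives make this sharper than the paper's version.

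One sentence in Step 2 is false and should be removed. The gradient with respect to the merged kernel, cropped to its central $r\times r$ block, \emph{is} $\partial L/\partial K_r$. This is exactly your own identity: $\partial L/\partial K_r=\iota_r^{\ast}(G)$, where $G=\sum_{i,j}\frac{\partial L}{\partial \mathbf{Z}'^t[i,j]}\,\mathbf{Z}^t[i,j,5]$ is the gradient of the merged kernel. What distinguishes three branches from one merged parameter is the update, not the gradient. Gradient descent on the branches moves the effective kernel by $-\eta\sum_r\iota_r\iota_r^{\ast}G$, which is a triple step on the centre tap and a double step on the rest of the central $3\times3$ block. A single merged parameter moves by $-\eta G$. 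So Property 2 needs no reconciliation with the re-parameterization remark. Even ordinary autograd through $\sum_r\iota_r(K_r)$ returns the plain module's per-branch gradients, and the custom kernel matters only for cost.

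Step 3 takes a different line from the paper. The paper counts explicitly: arithmetic $H_0W_0C_{\text{in}}C_{\text{out}}\sum_r r^2$ against $H_0W_0C_{\text{in}}C_{\text{out}}\,r_{\max}^2$, which is strictly smaller because $1+9+25=35>25$. It adds a memory-traffic count in which the dominant feature-map term $H_0W_0(C_{\text{in}}+C_{\text{out}})$ is paid $N_{\text{kernels}}$ times by the plain module and once by the fused layer. You instead note that with $N_{\text{kernels}}=3$ and $r_{\max}=5$ fixed, both $\mathcal{O}$-expressions are $\mathcal{O}(H_0W_0)$, and you reread the claim as $N_{\text{kernels}}$ passes versus one. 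That is the more accurate reading of the notation, and your pass count matches the paper's memory argument.

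However, your sandwich bound between $r_{\max}^2$ and $N_{\text{kernels}}r_{\max}^2$ only shows the fused arithmetic is no worse. Since you already charge $r^2$ per branch, state the exact $35$ versus $25$. Also note that this ratio persists in the backward pass: all three weight gradients are crops of the single $5\times5$ gradient $G$, computed once, and the input gradient uses the combined kernel.
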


\textbf{Why does EMFormer introduce Multi-Convs Layer in Transformer?} The Multi‑Convs Layer is introduced to enable efficient capture of multi‑scale patterns, which is essential for tasks where target structures vary widely in size. By extracting features across multiple receptive fields in a single forward pass, the layer provides a computationally efficient alternative to stacking separate convolutional branches. These multi‑scale features allow the subsequent Multi‑Head Attention (MHA) module to compute affinities not only between individual points, but also between regions of different spatial extents. In contrast to standard attention, EMFormer can therefore model relationships among spatially heterogeneous structures, enhancing representational flexibility while maintaining low computation. 

% \textbf{The computation analysis is in \cref{param:computation}.}

% \begin{theorem}
%   \label{thm:multiconvs}
%   If $f:X\to Y$ is bijective, the cardinality of $X$ and $Y$ are the same.
% \end{theorem}

\subsection{Accumulative Context Finetuning}
\label{subsec:kv}

% To eliminate the error accumulation and improve the temporal consistency of weather forecasting, this work adopts a KV Cache Finetuning strategy in auto-regression to generate multi-step predictions. As shown in \cref{fig:kvcache}, the KV values from the previous steps will be concatenated and reserved in the KV cache. With the increasing of inference steps, the reserved cache may exceeds memory limitation. Thus, this work introduces a memory module to prune the overflow cache.

To mitigate error accumulation and enhance temporal consistency in multi-step forecasting, this work introduces accumulative finetuning within auto-regressive. As illustrated in \cref{fig:kvcache}, the KV pairs from previous steps of every block in EMFormer are concatenated and stored in cache. However, as the inference horizon extends, the accumulated cache may exceed memory constraints. To address this, we incorporate a memory module that dynamically prunes values while retaining information critical for maintaining consistency. The pseudo-code of KV pruning is in \cref{alg:h2o-simple}.

The memory module updates scores and prunes the cache with three steps. \textbf{(1) First}, the query \(\mathbf{Q} \in \mathbb{R}^{C \times L}\) is multiplied by the concatenated key \(\mathbf{K} \in \mathbb{R}^{C \times NL}\), which includes both the current key and cached keys from previous steps. This produces an attention map \(\mathbf{Attn} = (\mathbf{Q})^\top \mathbf{K}\), where \(N=5\) is the cache length and \(L = \frac{1}{16}HW\) represents the spatial dimension. The resulting attention map \(\mathbf{Attn} \in \mathbb{R}^{L \times NL}\) is then normalized along the key dimension using a softmax and mean function to generate the current scores, \(\mathbf{S}_{\text{cur}} = \text{Mean}(\text{Softmax}(\mathbf{Attn}))\in \mathbb{R}^{N}\). 

\textbf{(2) Next}, the scores are updated by blending the first \(N-1\) entries of \(\mathbf{S}_{\text{cur}}\) with the historical scores \(\mathbf{S}_{\text{his}}\). In the first step, since no historical scores exist, \(\mathbf{S}_{\text{his}}\) is initialized as a zero vector. For subsequent steps, the combination is governed by a hyperparameter \(\lambda =0.9\), as follows: \(\mathbf{S}_{\text{new}}[1:N-1] = \lambda \mathbf{S}_{\text{cur}}[1:N-1] + (1-\lambda)\mathbf{S}_{\text{his}}\). The \(N\)-th token in \(\mathbf{S}_{\text{cur}}\) is preserved without blending, i.e., \(\mathbf{S}_{\text{new}}[N] = \mathbf{S}_{\text{cur}}[N]\).

\textbf{(3) Finally}, the cache is pruned by selecting the top \(N-2\) scores from \(\mathbf{S}_{\text{new}}[1:N-1]\). These selected entries, together with the default \(N\)-th token, form the final \(N-1\) key-value pairs, passed to the next step as \(\mathbf{S}_{\text{his}}\). 

% \(\mathbf{S}_{\text{new}}\) also serves as \(\mathbf{S}_{\text{his}}\) in the next step.
% This process ensures efficient cache management while maintaining critical information.

\begin{figure}[t]
  \centering
   \includegraphics[width=1.0
   \linewidth]{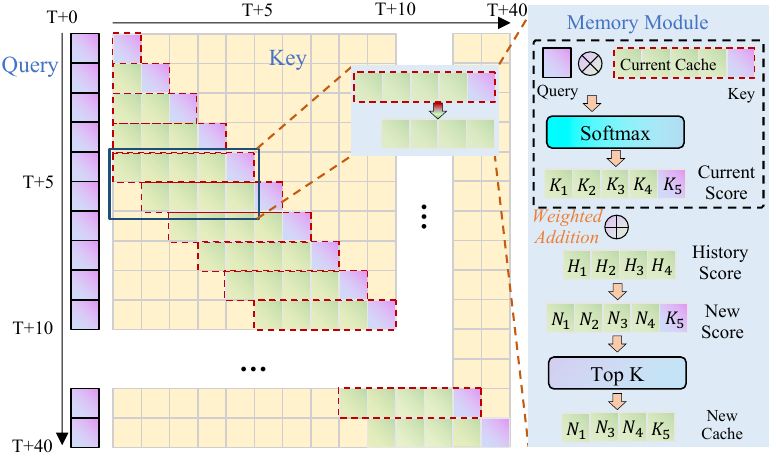}
   % \vspace{-0.05cm}
   \caption{Illustration of accumulative context finetuning.}
   \label{fig:kvcache}
   \vspace{-0.3cm}
\end{figure}

\begin{table*}[t]
    % \vspace{-5pt}
    % \vskip 0.13in
    \centering
    \caption{Performance of Ours with 4 baselines on 1.4\degree ERA5. Small RMSE (normalized, $\downarrow$) and bigger ACC (denormalized, $\uparrow$) indicate better. The best and second-best results are in \textbf{bold} and \underline{underline}. All competitors are collected from Oneforecast~\cite{oneforecast}. }
    
     \makebox[\textwidth]{
\resizebox{\linewidth}{!}{
    % \begin{small}
    %     \begin{sc}
            \renewcommand{\multirowsetup}{\centering}
            \begin{tabular}{l|cc|cc|cc|cc|cc}
                \toprule
                \multirow{4}{*}{Model} & \multicolumn{10}{c}{Metric}  \\
                \cmidrule(lr){2-11}
                &  \multicolumn{2}{c}{6-hour} & \multicolumn{2}{c}{1-day} & \multicolumn{2}{c}{4-day} & \multicolumn{2}{c}{7-day} & \multicolumn{2}{c}{10-day}   \\
                \cmidrule(lr){2-11}
               & RMSE& ACC & RMSE& ACC & RMSE& ACC & RMSE& ACC & RMSE& ACC \\

                \midrule
                % Pangu-weather \cite{panguweather} &0.0921&0.9958 &0.1635&0.9868&0.3723&0.9305&0.5576&0.8414&0.6680&0.7763     \\
                % % Fengwu \cite{fengwu} &0.1428  &0.9913 &0.2584 &0.9738  &0.5186 &0.8958 & 0.7874&0.8061 &1.2370  &0.7409    \\
                % Graphcast \cite{graphcast} &0.0887  &\underline{0.9962} &0.1634 & \underline{0.9878}  &0.3962 & 0.9308 &0.6026&0.8362 &0.7374  &0.7625    \\
                % Fuxi \cite{fuxi} &0.1106 & 0.9941 &0.1838&0.9838  &0.4329&0.9055& 0.6056&0.8116&  0.6913&0.7571  \\
                % % \midrule
                % Oneforecast \cite{oneforecast} &\textbf{0.0683} &\textbf{0.9976} &\underline{0.1294} &\textbf{0.9917} &\underline{0.2954} &\textbf{0.9577}& \underline{0.4823} &\underline{0.8839} &\underline{0.6306} &\underline{0.8025}  \\
                Pangu-weather\cite{panguweather} &0.0826&0.9876 &0.1571&0.9581&0.3380&0.8167&0.5092&0.5738&0.6215&0.3542     \\
                Graphcast\cite{graphcast} & 0.0626  & 0.9928  & 0.1304  & 0.9705   & 0.2861  & 0.8705  & 0.4597  & 0.6692  & 0.6009   & 0.4275    \\
                Fuxi\cite{fuxi} &0.0987 & 0.9820 &0.1708&0.9511  &0.4128&0.7379& 0.5972&0.4446&  0.6981&0.2391  \\
                Oneforecast\cite{oneforecast} &\textbf{0.0549} &\textbf{0.9943} &\underline{0.1231}    &\underline{0.9737}     &\underline{0.2732} &\underline{0.8825} & \underline{0.4468} &\underline{0.6888} &\underline{0.5918} &\underline{0.4457}  \\
                % \midrule
                % Graphcast \cite{graphcast} & 0.0751  & 0.9951 & 0.1451 &  0.9829  &0.3250 & 0.9300 &0.5109 &0.8452 &0.6427  &0.7644    \\
                
                % Oneforecast &\textbf{0.0726} &\textbf{0.9952} &\underline{0.1413} &\textbf{0.9835} &\underline{0.3126} &\textbf{0.9350}& \underline{0.4981} &\underline{0.8512} &\underline{0.6355} &\underline{0.7637}  \\
                
                % VA-MoE\cite{vamoe} & 0.0617 & \textbf{0.9956} & \textbf{0.1197} & \textbf{0.9740} & \textbf{0.2578} & \textbf{0.8927} & \textbf{0.4348} & \textbf{0.7019} & \textbf{0.5763} & \textbf{0.4715}  \\

                % Hourglass+Mamba & 0.0811 & 0.9872 & 0.1368 & 0.9655 & 0.2829 & 0.8719 & 0.4595 & 0.6695 & 0.5978 & 0.4365  \\

                % Ours\_v1 & 0.0814 & - & 0.1363 & - & 0.2797 & - & 0.4575 & - & 0.5997 & -  \\
                
                % Hourglass  & 0.0811 & 0.9872 & 0.1347 & 0.9664 & 0.2696 & 0.8811 & 0.4407 & 0.6878 & 0.5819 & 0.4514  \\

                \rowcolor{lightgray} Ours (w/o finetuning) & \underline{0.0626} & \underline{0.9931} & \textbf{0.1219} & \textbf{0.9749} & \textbf{0.2673} & \textbf{0.8845} & \textbf{0.4327} & \textbf{0.6978} & \textbf{0.5719} & \textbf{0.4614}  \\

                \midrule
                \midrule

                Ours (w/ finetuning) & 0.0599 & 0.9949 & 0.1139 & 0.9775 & 0.2539 & 0.8936 & 0.4072 & 0.7223 & 0.5094 & 0.5389  \\

                \bottomrule
            \end{tabular}
 %        \end{sc}
	% \end{small}
    }}
    \vspace{-0.1cm}
    
    % \vspace{-3mm}
    \label{tab:global}
\end{table*}

\begin{figure*}[t]
  \centering
   \includegraphics[width=1.0\linewidth]{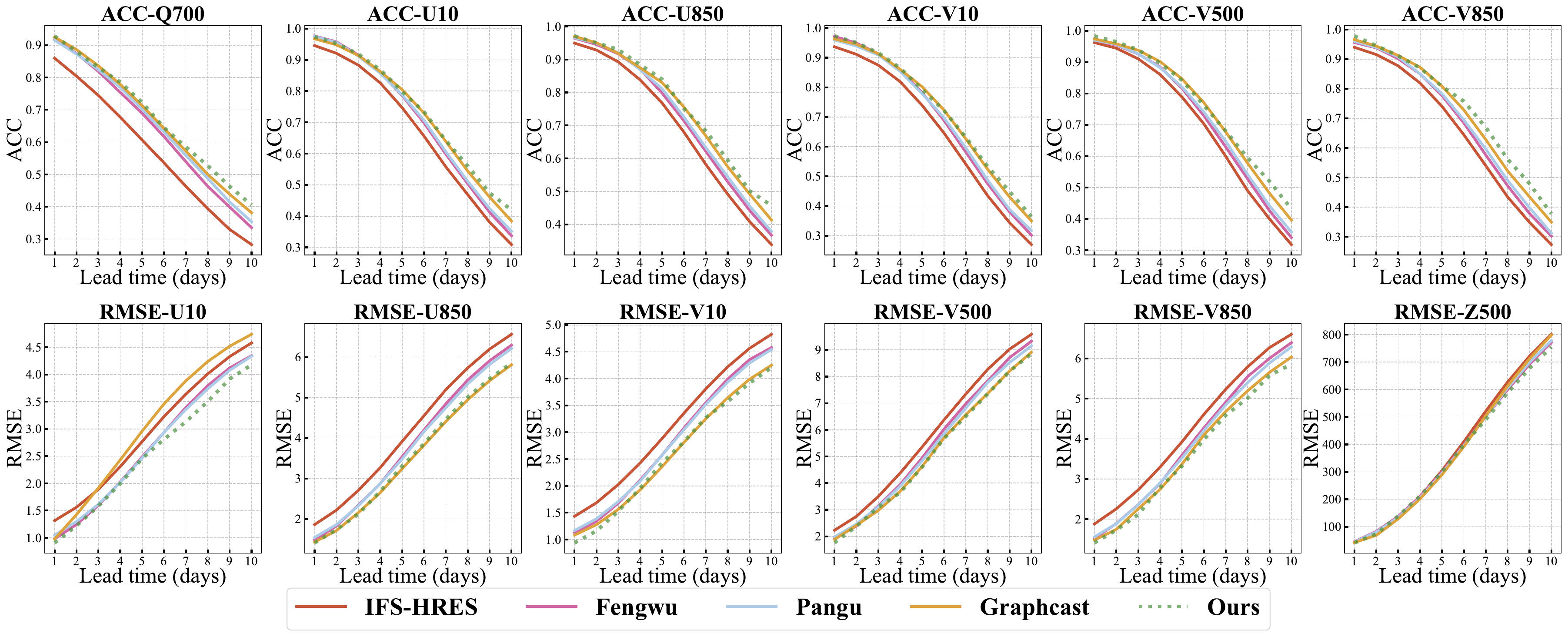}
   % \vspace{-0.8cm}
   \caption{ Comparison of our method with 4 competitors on denormalized RMSE $\downarrow$ and ACC $\uparrow$ in 0.25\degree ERA5. } 
   \label{fig:global}
   \vspace{-0.3cm}
\end{figure*}

% In this work, KV cache finetuning strategy provides the history information from the previous steps, which support the prediction in later steps and maintain the temporal consistency in the long context. This strategy guarantees the model to remember the history features even in the long-term inference, which avoids error accumulation by bringing the low-error history and avoids the degradation of the initial forecasts. To avoid the explosive growth of computation in the long-term forecasting, this work introduces a memory module to prune the KV cache, which maintain the most important information while removing the low-relevant information.

% The KV-cache finetuning strategy preserves historical information from earlier forecast steps, thereby supporting later predictions and maintaining temporal consistency across extended sequences. This approach enables the model to retain relevant features throughout long-term inference, mitigating error accumulation by incorporating low-error historical states and preventing degradation of initial forecasts. To address the growing computational burden of long-horizon forecasting, a memory module is introduced to selectively prune the KV cache, preserving the most salient information while discarding less relevant entries.

The accumulative context finetuning retains historical information from earlier steps, ensuring temporal consistency in long-term predictions. This method allows the model to preserve relevant features during long-term forecasting, reducing error accumulation by leveraging low-error historical states and preventing the degradation of initial forecasts. To mitigate the increasing computational demands of long-context forecasting, a memory module is employed to prune the KV pairs, retaining only the most critical information while discarding less relevant entries. This selective pruning balances efficiency and accuracy in long-term inference.

\subsection{Loss Function}
\label{subsec:loss}

Given that some variables exhibit minimal variation over time, we introduce a variable‑adaptive loss that optimizes different variables with distinct rates. In addition, we adopt a geography-adaptive loss for the latitudinal distribution. The final loss is designed to balance these two losses.

% The prediction objective minimizes the discrepancy between the forecast \(\mathbf{\hat{X}}^{t+1}\) and the ground truth \(\mathbf{X}^{t+1}\). 

% To adaptively adjust the importance of different variables during training, we adopt a dynamic weighting scheme:
% \begin{equation}
% \mathcal{L}_{\text{var}} = \frac{(\mathbf{\hat{X}}^{t+1} - \mathbf{X}^{t+1})^2}{e^{\mathbf{w}}} + \mathbf{w},
% \end{equation}  
% where \(\mathbf{\hat{X}}^{t+1}\) and \(\mathbf{X}^{t+1}\) are prediction and ground truth, \(e\) is the base of the natural logarithm and \(\mathbf{w} \in \mathbb{R}^{1 \times 1 \times C}\) is a learnable vector. The formulation allows the model to automatically balance the contribution of each variable through the scaling term \(e^{\mathbf{w}}\), while the penalty \(\mathbf{w}\) prevents excessive suppression of any variable.

To adaptively adjust the importance of different variables during training, we adopt a dynamic weighting scheme:
\begin{equation}
\mathcal{L}_{\text{var}} = \text{Mean}\bigl(\frac{(\mathbf{\hat{X}}^{t+1} - \mathbf{X}^{t+1})^2}{e^{\mathbf{w}}} + \mathbf{w}\bigr),
\end{equation}  
where \(\mathbf{\hat{X}}^{t+1}\) and \(\mathbf{X}^{t+1}\) denote prediction and ground‑truth states, \(e\) is the natural exponential base, and \(\mathbf{w} \in \mathbb{R}^{1 \times 1 \times C}\) is a learnable vector. To obtain a scalar loss value, `\text{Mean}' denotes Latitude-Longitude Mean function. This formulation enables the model to optimize each variable with distinct learning rates via the scaling factor \(e^{\mathbf{w}}\), while the additive penalty \(\mathbf{w}\) prevents excessive suppression of any variable.

Apart from dynamic loss, we introduce a latitude‑weighted loss that scales prediction errors according to the area of each grid cell, a practice consistent with standard evaluation metrics in meteorology. The loss is defined as:
\begin{equation}
\mathcal{L}_{\text{lat}} = \text{Mean}\bigl(\mathbf{L} \odot (\mathbf{\hat{X}}^{t+1} - \mathbf{X}^{t+1})^2\bigr),
\label{equ:latloss}
\end{equation}
where \(\mathbf{L} \in \mathbb{R}^{H \times 1}\) is a weight vector whose elements \(L_i\) are repeated across the longitude, and \(\odot\) denotes element‑wise multiplication. The weight \(L_i\) for latitude row \(i\) is given by:
\begin{equation}
L_i = N_{\text{lat}} \times \frac{\cos \phi_i}{\sum_{j=1}^{N_{\text{lat}}} \cos \phi_j},
\label{equ:l_define}
\end{equation}
where \(\phi_i\) and \(\phi_j\) represent the latitudes of \(i\) and \(j\), and \(N_{\text{lat}}\) is the total number of latitudes. This formulation ensures that errors are scaled in proportion to the area represented by each grid cell, aligning with evaluation metric.

% To balance the latitude-weighted and variable-weighted loss function, this work introduces a sine function to dynamically control the direction of the model convergence. The total loss function can be formulated as:
% \begin{equation}
%     \text{Loss} = \frac{1}{2}(1-sin(\theta))\text{Loss}_{lat}+\frac{1}{2}(1+sin(\theta))\text{Loss}_{var},
% \end{equation}
% where $sin(\theta)$ denotes the sine function, which control the value range of $\frac{1}{2}(1-sin(\theta))$ and $\frac{1}{2}(1+sin(\theta))$ to change from 0-1. 

% To balance the contributions of the latitude-weighted loss \(\mathcal{L}_{\text{lat}}\) and the variable-weighted loss \(\mathcal{L}_{\text{var}}\), we introduce a sinusoidal weighting scheme that dynamically adjusts their relative emphasis during training. The total loss is defined as a convex combination of the two terms:

% \begin{equation}
%     \mathcal{L} = \frac{1}{2}\bigl(1 - \sin(\theta)\bigr) \mathcal{L}_{\text{lat}} \;+\; \frac{1}{2}\bigl(1 + \sin(\theta)\bigr) \mathcal{L}_{\text{var}},
% \end{equation}

% where \(\theta\) is a learnable or scheduled parameter. The coefficients \(\frac{1}{2}(1 \mp \sin(\theta))\) each lie in \([0,1]\) and sum to one, thereby providing a smooth interpolation between the two loss components as \(\theta\) varies. This formulation allows the training to adaptively shift focus between latitude-corrected error minimization and variable-specific weighting throughout the optimization process.

To balance the latitude-weighted loss \(\mathcal{L}_{\text{lat}}\) and the variable-weighted loss \(\mathcal{L}_{\text{var}}\), this work introduces a sinusoidal weighting scheme that smoothly interpolates between them during training. The total loss is defined as the convex combination:
\begin{equation}
    \mathcal{L} = \frac{1}{2}\bigl(1 - \sin(\theta)\bigr) \mathcal{L}_{\text{lat}} \;+\; \frac{1}{2}\bigl(1 + \sin(\theta)\bigr) \mathcal{L}_{\text{var}},
    \label{equ:totalloss}
\end{equation}
where \(\theta\) is a learnable parameter. The coefficients \(\frac{1}{2}(1 \mp \sin\theta)\) lie in \([0,1]\) and sum to 1, enabling the optimization to shift adaptively from latitude‑corrected toward variable‑specific as training progresses.

\begin{table*}[t]
    % \vspace{-5pt}
    % \vskip 0.13in
    \centering
    \caption{ Comparative performance of Ours with 8 baselines on ten typhoon track forecasts in 2024. Best results are highlighted in \textbf{bold}. All competitors are collected from CMA (\url{tcdata.typhoon.org.cn}). }
    
    % Performance of Ours with 8 baselines on 10 Typhoon Track Forecasts in 2024. The best results are in \textbf{bold}. The average error is aggregated over lead times 6, 12, …, 96 hours.   
    
     \makebox[\textwidth]{
\resizebox{\linewidth}{!}{
    % \begin{small}
    %     \begin{sc}
            \renewcommand{\multirowsetup}{\centering}
            \begin{tabular}{l|cccccccccc|c}
                \toprule
                \multirow{2}{*}{Model} & \multicolumn{10}{c}{Average Error (km) $\downarrow$ }  \\
                % \cmidrule(lr){2-11}
                % &  \multicolumn{2}{c}{6-hour} & \multicolumn{2}{c}{1-day} & \multicolumn{2}{c}{4-day} & \multicolumn{2}{c}{7-day} & \multicolumn{2}{c}{10-day}   \\
                \cmidrule(lr){2-12}
               & AMPIL & BEBINCA & Ewiniar & GAEMI & KONG\-REY & KRATHON & MAN\-YI  & SHANSHAN  & YAGI  & Yinxing & Average \\

                \midrule
                AIFS &55.0  &213.4   &106.0    &98.5   &79.6   &187.1    &138.7   &161.2   &107.3   & \textbf{44.9}  &  119.17  \\
                
                AVNO &66.4   &160.3    &95.5    &147.0   &-   &190.9   &256.5   &281.0   &119.2   &128.9   & -   \\  %144.57
                
                ECMF &87.0   &211.5   &51.1    &123.7    &78.1    &132.4   &263.9   &179.7   &99.5  &167.5   & 139.44  \\
                
                Fengwu &99.3  &214.4  &95.3    &119.6    &80.0    &195.3     &147.7    &\textbf{90.8}   &122.5  &87.4   &  125.23  \\

                Fengqing  &122.9   & 167.6  &   -    &87.9    &62.0    & \textbf{87.5}     &195.6    & 190.0   & 103.5  & 141.1   &  -  \\

                Fourcastnet &163.8 &155.6   &111.3    &154.9    &114.2    &315.4    &252.1   &313.0   &285.3   &315.5  &  218.11   \\
                
                Pangu &78.3 &233.5   &\textbf{39.8}   &101.4    &54.9   &148.4   &130.0   &188.7   &114.7   &149.7  & 123.94    \\
                
                Graphcast & 122.4  & 171.5  & 40.2   & 134.2    & 91.1   & 167.1  & 136.2   & 182.9    & 106.9   & 55.3  &  120.78  \\
                
                Fuxi & \textbf{49.8} & 145.0   &76.7  &97.6  &57.9  &141.5  & 136.0   & -  &  125.2  &93.4  & -  \\
                \midrule
                \rowcolor{lightgray} Ours  & 66.7 & \textbf{138.2}   & 49.8 & \textbf{84.5}  & \textbf{44.9}   & 130.3   & \textbf{115.3}  & 120.3  & \textbf{75.6}   & 59.3  & \textbf{88.49} \\

                \bottomrule
            \end{tabular}
 %        \end{sc}
	% \end{small}
    }}
    % \vspace{-0.1cm}
    
    \vspace{-1mm}
    \label{tab:tc}
\end{table*}

\begin{table*}[t]
    % \vspace{-5pt}
    % \vskip 0.13in
    \centering
    % \small
    \caption{Ablation study on finetuning strategies. `w/o' and `w/' denote `without' and `with'.}
    
     \makebox[\textwidth]{
\resizebox{\linewidth}{!}{
    % \begin{small}
    %     \begin{sc}
            \renewcommand{\multirowsetup}{\centering}
            \begin{tabular}{l|cc|cc|cc|cc|cc|ccc}
                \toprule
                % \multirow{4}{*}{Model} & \multicolumn{10}{c}{Metric}  \\
                % \cmidrule(lr){2-11}
                Finetuning  &  \multicolumn{2}{c}{6-hour} & \multicolumn{2}{c}{1-day} & \multicolumn{2}{c}{4-day} & \multicolumn{2}{c}{7-day} & \multicolumn{2}{c|}{10-day} & Params & GPU & Latency   \\
                \cmidrule(lr){2-11}
                Strategies  & RMSE& ACC & RMSE& ACC & RMSE& ACC & RMSE& ACC & RMSE& ACC & (M) & (G) & Avg/(ms)   \\

                \midrule

                % Ours\_v1 & 0.0814 & - & 0.1363 & - & 0.2797 & - & 0.4575 & - & 0.5997 & -  \\
                
                % Hourglass  & 0.0811 & 0.9872 & 0.1347 & 0.9664 & 0.2696 & 0.8811 & 0.4407 & 0.6878 & 0.5819 & 0.4514  \\

                \multicolumn{14}{c}{   \textbf{w/o Accumulative Finetuning}   } \\
                \midrule
                
                w/o Finetuning & 0.0626 & 0.9931 & 0.1219 & 0.9749 & 0.2673 & 0.8845 & 0.4327 & 0.6978 & 0.5719 & 0.4614   & 157.9  & 0.703  &  98.3 \\

                w/ Finetuning   & 0.0645 & 0.9917 & 0.1189 & 0.9756 & 0.2491 & 0.8869 & 0.4261 & 0.7098 & 0.5339 & 0.4958   & 157.9  &  0.726  &  99.1 \\

                w/ Lora Finetuning  & 0.0626 & 0.9931 & 0.1173 & 0.9759 & 0.2473 & 0.8936 & 0.4224 & 0.6981 & 0.5278 & 0.5041  & 171.3  & 4.919  &  111.3   \\

                \midrule
                \multicolumn{14}{c}{   \textbf{w/ Accumulative Finetuning (KV Cache Length=5)}   } \\
                \midrule

                w/o Pruning  & 0.0599 & 0.9950 & 0.1152 &  0.9766 &    0.2466 & 0.8852 &    0.4109 & 0.7191 &     0.5153 & 0.5316  &     157.9  & 4.576  &  118.5  \\

                w/ Pruning  & 0.0599 & 0.9949 & 0.1139 & 0.9775 &     0.2439 & 0.8936 &      0.4072 & 0.7223 & 0.5094 & 0.5389  &      157.9  & 4.913  &  120.5  \\

                \bottomrule
            \end{tabular}
 %        \end{sc}
	% \end{small}
    }}
    % \vspace{-0.1cm}
    
    \vspace{-1mm}
    \label{tab:ablation_finetune}
\end{table*}

\begin{table*}[t!]
    % \vspace{-5pt}
    % \vskip 0.13in
    \centering
    \small
    \caption{Ablation study on the backbone's blocks. All results are gained with 1.4\degree ERA5.}
    
%      \makebox[\textwidth]{
% \resizebox{\linewidth}{!}{
    % \begin{small}
    %     \begin{sc}
            \renewcommand{\multirowsetup}{\centering}
            \begin{tabular}{l|cc|cc|cc|cc|cc}
                \toprule
                % \multirow{4}{*}{Structure} & \multicolumn{10}{c}{Metric}  \\
                % \cmidrule(lr){2-11}
                \multirow{2}{*}{Block of Backbone}  &  \multicolumn{2}{c}{6-hour} & \multicolumn{2}{c}{1-day} & \multicolumn{2}{c}{4-day} & \multicolumn{2}{c}{7-day} & \multicolumn{2}{c}{10-day}   \\
                \cmidrule(lr){2-11}
               & RMSE& ACC & RMSE& ACC & RMSE& ACC & RMSE& ACC & RMSE& ACC \\

                \midrule

                % Hourglass, Hourglass+RWKV, Hourglass+windows

                Self-Attention Block  & 0.0711 & 0.9912 & 0.1287 & 0.9664 & 0.2696 & 0.8711 & 0.4603 & 0.6015 & 0.5978 & 0.4499  \\

                MambaVision Block & 0.0711 & 0.9912 & 0.1308 & 0.9655 & 0.2829 & 0.8619 & 0.4595 & 0.6695 & 0.5978 & 0.4365  \\

                % VRWKV Block  & 0.0814 & - & 0.1363 & - & 0.2797 & - & 0.4575 & - & 0.5997 & -  \\

                Self/Windows Attn Block  & 0.0709    & 0.9915     & 0.1298    & 0.9665    & 0.2786   & 0.8679   & 0.4566    & 0.6806    & 0.6016 & 0.4486  \\

                % Ours\_v1 & 0.0814 & - & 0.1363 & - & 0.2797 & - & 0.4575 & - & 0.5997 & -  \\
                
                % EHTransformer  & 0.0599 & 0.9949 & 0.1139 & 0.9775 & 0.2439 & 0.8936 & 0.4072 & 0.7223 & 0.5094 & 0.5389  \\

                EMFormer  & 0.0626 & 0.9931 & 0.1219 & 0.9749 & 0.2673 & 0.8845 & 0.4327 & 0.6978 & 0.5719 & 0.4614 \\

                \bottomrule
            \end{tabular}
 %        \end{sc}
	% \end{small}
    % }}
    % \vspace{-0.1cm}
    
    \vspace{-3mm}
    \label{tab:ablation_structure}
\end{table*}

% In the initialization, $\theta$ is set to $-\frac{1}{2}\pi$ when the learning rate is large, which means \(\frac{1}{2}\bigl(1 - \sin(\theta)\bigr)=1\) and \(\frac{1}{2}\bigl(1 + \sin(\theta)\bigr)=0\). With the decline of learning rate, $\theta$ gradually increases and making \(\frac{1}{2}\bigl(1 - \sin(\theta)\bigr)=0\) and \(\frac{1}{2}\bigl(1 + \sin(\theta)\bigr)=1\) gradually. The above process

% \begin{theorem}
%   \label{thm:loss}
%   Through the balancing of the sin function \(\frac{1}{2}(1 \mp \sin\theta)\), the loss function in \cref{equ:totalloss} will automatically transform from latitude‑weighted loss to variable-weighted loss during training.
% \end{theorem}

% \begin{theorem}
% \label{thm:loss}
% The sinusoidal weighting coefficients \(\frac{1}{2}(1 \mp \sin\theta)\) in \cref{equ:totalloss} induce an automatic transition during training: the total loss gradually shifts emphasis from the latitude‑weighted term toward the variable‑weighted term as the parameter \(\theta\) evolves from \(-\frac{1}{2}\pi\) to \(\frac{1}{2}\pi\).
% \end{theorem}

% \begin{theorem}
% \label{thm:loss}
% With the sinusoidal weighting coefficients \(\frac{1}{2}(1 \mp \sin\theta)\) in the total loss (\cref{equ:totalloss}), the parameter \(\theta\) automatically evolves from \(-\pi/2\) to \(\pi/2\) during training. Consequently, the loss function smoothly transitions its emphasis from the latitude‑weighted term to the variable‑weighted term.
% \end{theorem}

\begin{proposition}[Adaptive Loss Weighting]
\label{thm:loss}
Consider the loss function in \cref{equ:totalloss}. With learning rate $\eta > 0$,  $\theta_0 = -\pi/2$, and $\mathbf{w}_0 = \mathbf{0}$, the following properties hold:

\qquad 1. The parameter $\theta$ evolves monotonically from $-\pi/2$ to $\pi/2$ during training;

\qquad 2. The loss function $\mathcal{L}$ automatically transitions its emphasis from the latitude-weighted term $\mathcal{L}_{\text{lat}}$ to the variable-aware term $\mathcal{L}_{\text{var}}$ as training progresses.

% \begin{enumerate}
%     \item The parameter $\theta$ evolves monotonically from $-\pi/2$ to $\pi/2$ during training;
    
%     % \item There exists a finite training iteration $k^*$ such that for all $k \geq k^*$, the gradient $\frac{\partial \mathcal{L}}{\partial \theta} < 0$, ensuring continuous progression toward $\theta = \pi/2$;
    
%     \item The loss function $\mathcal{L}$ automatically transitions its emphasis from the latitude-weighted term $\mathcal{L}_{\text{lat}}$ to the variable-aware term $\mathcal{L}_{\text{var}}$ as training progresses.
% \end{enumerate}

\end{proposition}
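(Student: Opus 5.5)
The plan is to write out the gradient-descent dynamics of $\theta$ explicitly and reduce both claims to a sign condition on $\mathcal{L}_{\text{var}}-\mathcal{L}_{\text{lat}}$. Differentiating \cref{equ:totalloss} gives
\begin{equation}
\frac{\partial \mathcal{L}}{\partial \theta} = \frac{\cos\theta}{2}\bigl(\mathcal{L}_{\text{var}} - \mathcal{L}_{\text{lat}}\bigr),
\end{equation}
so the update with learning rate $\eta$ is $\theta_{k+1} = \theta_k + \frac{\eta}{2}\cos\theta_k\,(\mathcal{L}_{\text{lat}} - \mathcal{L}_{\text{var}})$. On the open interval $(-\pi/2,\pi/2)$ we have $\cos\theta>0$. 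Hence $\theta$ increases strictly whenever $\mathcal{L}_{\text{var}}<\mathcal{L}_{\text{lat}}$.

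The first step is to establish that inequality along the trajectory. At $\mathbf{w}_0=\mathbf{0}$, $\mathcal{L}_{\text{var}}=\text{Mean}(e^2)$ with $e=\hat{\mathbf{X}}^{t+1}-\mathbf{X}^{t+1}$. Also, by \cref{equ:l_define} the weights satisfy $\text{Mean}(\mathbf{L})=1$, so $\mathcal{L}_{\text{lat}}$ is a reweighted average of the same squared errors and the two terms start at comparable values.

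Next I would analyse the $\mathbf{w}$-dynamics. Per variable $c$, the term $\text{Mean}(e_c^2)e^{-w_c}+w_c$ is convex in $w_c$ with minimiser $w_c^*=\log \text{Mean}(e_c^2)$. Its minimum value is therefore $1+\log\text{Mean}(e_c^2)$, which is at most $\text{Mean}(e_c^2)$ by the inequality $1+\log x\le x$. So descent on $\mathbf{w}$ drives $\mathcal{L}_{\text{var}}$ strictly below the unweighted squared error, and hence below (or at worst comparable to) $\mathcal{L}_{\text{lat}}$. Moreover, once the normalized errors drop below $1/e$, $\mathcal{L}_{\text{var}}$ becomes negative while $\mathcal{L}_{\text{lat}}\ge 0$, so the sign condition holds robustly for the rest of training.

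With the sign fixed, the sequence $\theta_k$ is monotone increasing and bounded by $\pi/2$ for small enough $\eta$. To get the bound I would use $\cos\theta_k \approx \pi/2-\theta_k$ near the top, so each step cannot overshoot when $\frac{\eta}{2}|\mathcal{L}_{\text{lat}}-\mathcal{L}_{\text{var}}|<1$. Then $\theta_k$ converges to a limit $\theta^\star$. The increment vanishes only where $\cos\theta^\star=0$ or the losses coincide, and the latter is excluded by the strict inequality, so $\theta^\star=\pi/2$. This gives item 1. Item 2 then follows immediately: the coefficient $\frac12(1-\sin\theta)$ decreases monotonically from $1$ to $0$, and $\frac12(1+\sin\theta)$ increases from $0$ to $1$.

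The main obstacle is the initial point itself. Since $\cos(-\pi/2)=0$, $\theta_0=-\pi/2$ is exactly a stationary point of the $\theta$-update, so pure gradient descent would never move. I would handle this by showing that the equilibrium is unstable under the sign condition. Linearizing with $\delta=\theta+\pi/2$ gives $\delta_{k+1}=\delta_k\bigl(1+\frac{\eta}{2}(\mathcal{L}_{\text{lat}}-\mathcal{L}_{\text{var}})\bigr)$, so any arbitrarily small positive perturbation grows geometrically. Such a perturbation is supplied in practice by floating-point evaluation of $\cos(-\pi/2)$, which returns a tiny positive number, or by an explicit $\theta_0=-\pi/2+\epsilon$. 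The statement should then be read as holding for this perturbed initialization.
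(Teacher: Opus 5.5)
Your route is the paper's: the same $\theta$-gradient $\tfrac12\cos\theta\,(\mathcal{L}_{\text{var}}-\mathcal{L}_{\text{lat}})$, the same minimiser $\mathbf{w}^*=\ln\mathcal{E}$ with value $1+\ln\mathcal{E}$, and the same sign argument in the small-error regime. Your point that $\theta_0=-\pi/2$ is a fixed point is also right, and it matches the appendix version of the proposition, which starts from $\theta_0=-\pi/2+\epsilon$.

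The gap is the sign condition $\mathcal{L}_{\text{var}}<\mathcal{L}_{\text{lat}}$. You need it from the first step, both for monotonicity and for your instability argument at $-\pi/2$. But you only get it by assuming $\mathbf{w}$ already sits at its minimiser, and gradient descent does not supply that here. Since $\partial\mathcal{L}/\partial\mathbf{w}=\tfrac12(1+\sin\theta)(1-\mathcal{E}e^{-\mathbf{w}})$ and $\tfrac12(1+\sin(-\pi/2+\epsilon))\approx\epsilon^2/4$, the $\mathbf{w}$-updates are $O(\epsilon^2)$ while the $\theta$-updates are $O(\epsilon)$. So $\mathbf{w}$ stays essentially at $\mathbf{0}$ exactly while $\theta$ has to escape.

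At $\mathbf{w}=\mathbf{0}$ you have $\mathcal{L}_{\text{var}}=\text{Mean}(\mathcal{E})>0$, so \emph{negative once errors drop below $1/e$} fails there. Moreover $\mathcal{L}_{\text{lat}}-\mathcal{L}_{\text{var}}=\text{Mean}\bigl((\mathbf{L}-1)\odot\mathcal{E}\bigr)$ has no definite sign. If errors concentrate at high latitudes it is negative, and your own linearisation $\delta_{k+1}=\delta_k\bigl(1+\tfrac{\eta}{2}(\mathcal{L}_{\text{lat}}-\mathcal{L}_{\text{var}})\bigr)$ then pulls $\theta$ back to $-\pi/2$. For the same reason, \emph{hence below (or at worst comparable to) $\mathcal{L}_{\text{lat}}$} does not follow from $1+\log x\le x$, because $\text{Mean}(\mathcal{E})$ and $\text{Mean}(\mathbf{L}\odot\mathcal{E})$ are not ordered.

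The paper does not derive this step either. It postulates it as Assumption A.1 (time-scale separation: $\mathbf{w}$ equilibrates much faster than $\theta$, so $\mathcal{L}_{\text{var}}$ is replaced by $1+\ln\mathcal{E}$). Its argument then only yields monotone increase past the crossing $A=B^*$.

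You should state that assumption explicitly and weaken item 1 to \emph{eventually} monotone. With that, your argument coincides with the paper's and is tidier: the bound $\cos\theta\le\pi/2-\theta$ rules out overshoot, and your limit argument pins down $\pi/2$. For the limit step you need $\mathcal{L}_{\text{lat}}-\mathcal{L}_{\text{var}}$ bounded away from zero, not merely positive; the regime $\mathcal{L}_{\text{var}}\le -c<0$ provides this.

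Finally, drop the floating-point justification. In float32, $\mathrm{fl}(-\pi/2)$ lies slightly below $-\pi/2$ and its cosine is about $-4.4\times10^{-8}$. Under the sign condition that drives $\theta$ toward $-3\pi/2$, not $\pi/2$. Rely on the explicit perturbation $\theta_0=-\pi/2+\epsilon$, as the paper does.
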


\textbf{Why do we need sinusoidal weighting?} Atmospheric variables evolve at different rates (\textit{e.g.}, geopotential shows minimal variation over 6‑hour intervals, whereas temperature exhibits pronounced changes), making it essential to adjust their learning speeds separately, especially during the low-learning-rate stage. We therefore employ a variable-adaptive loss to assign effective per-variable learning rates, alongside a latitude-weighted loss to respect geographical scaling. Simply combining these losses degrades performance, as it conflates spatial and physical adjustments prematurely. Instead, we introduce a sinusoidal weighting mechanism that smoothly transitions the loss emphasis: initially prioritizing latitude-corrected error for stable coarse learning, and later shifting focus to variable-specific refinement for fine-grained optimization. This curriculum aligns with the multi-stage nature of training and the inherent heterogeneity of atmospheric data, yielding superior forecasts.

\section{Experiments}
\label{sec:experiment}

\begin{table}[t]
\renewcommand\arraystretch{.9}
\centering
% \tablestyle{6pt}{1.00}
\caption{Comparison of classification benchmarks on \textbf{ImageNet-1K} dataset~\cite{deng2009imagenet}. Throughput (TP) is measured on A100 with batch size of 128. All are tested with $224\times 224$ size. } 
% \vspace{-1mm}
\resizebox{1.0\linewidth}{!}{
\setlength{\tabcolsep}{.5mm}{
\begin{tabular}[t]{lcccc}
\toprule
\multirow{2}{*}{Model} & \#Params & FLOPs & TP & Top-1 \\
      & (M) & (G) & (Img/Sec) & (\%) \\

\midrule
\multicolumn{5}{c}{\textbf{Tiny Model}} \\
\midrule

ConvNeXt-T~\cite{liu2022convnet} &   28.6 & 4.5 & 3196 & 82.0\\
ResNetV2-50~\cite{wightman2021resnet} &  25.5 & 4.1 & 6402 & 80.4\\
Swin-T~\cite{liu2021swin} &   28.3 & 4.4 & 2758 & 81.3\\
TNT-S~\cite{han2021transformer} &   23.8 &  4.8 & 1478 & 81.5\\
Twins-S~\cite{chu2021twins} &   24.1 &  2.8 & 3596 & 81.7\\
DeiT-S~\cite{touvron2021training} &  22.1 & 4.2 & 4608 & 79.9 \\
PoolFormer-S36~\cite{yu2022metaformer} &  30.9 & 5.0 & 1656 & 81.4 \\
CrossViT-S~\cite{chen2021crossvit} &26.9 & 5.1 & 2832 & 81.0 \\
NextViT-S~\cite{li2022next} &  31.7 & 5.8 & 3834 & 82.5 \\
EfficientFormer-L3~\cite{li2022efficientformer} &   31.4 &  3.9 & 2845 & 82.4 \\
VMamba-T~\cite{liu2024vmamba} & 30.0 & 4.9 & 1282 & 82.6\\
EfficientVMamba-B~\cite{efficientvmamba} & 33.0 & 4.0 & 1482 & 81.8\\
VRWKV-S~\cite{duan2025visionrwkv}   & 23.8  & 4.6  &   -  &   80.1   \\
MambaVision-T~\cite{hatamizadeh2025mambavision} &  31.8 &  4.4 & 6298 &  82.3 \\
MambaOut-Tiny~\cite{yu2025mambaout}    &  27.0    &  4.0  &   -   &  82.7  \\
\rowcolor{lightgray} EMFormer-T    &  28.7    &  5.1  &   3378   &  \textbf{83.2}  \\

\midrule
\multicolumn{5}{c}{\textbf{Small Model}} \\
\midrule

ConvNeXt-S~\cite{liu2022convnet}  &  50.2 & 8.7 & 2008 & 83.1\\
ResNetV2-101~\cite{wightman2021resnet} &  44.5 &  7.8 & 4019 & 82.0\\
Swin-S~\cite{liu2021swin}  &  49.6 &  8.5 & 1720 & 83.2\\
Twins-B~\cite{chu2021twins} &  56.1 &  8.3 & 1926 & 83.1\\
PoolFormer-M36~\cite{yu2022metaformer} & 56.2 & 8.8 & 1170 & 82.1\\ 
NextViT-L~\cite{li2022next} &  57.8 & 10.8 & 2360 & 83.6\\
FasterViT-1~\cite{hatamizadeh2023fastervit} &  53.4 &  5.3 & 4188 &  83.2\\
VMamba-S~\cite{liu2024vmamba} &50.0 & 8.7 & 843 & 83.6\\
MambaVision-S~\cite{hatamizadeh2025mambavision} &  50.1 &  7.5 & 4700 &  83.3 \\
MambaOut-Small~\cite{yu2025mambaout}     &  48.0    &  9.0  &   -   &  \textbf{84.1}  \\
\rowcolor{lightgray} EMFormer-S &  45.5 &  7.4 & 2512 &  \textbf{84.1} \\

\midrule
\multicolumn{5}{c}{\textbf{Base Model}} \\
\midrule

ConvNeXt-B~\cite{liu2022convnet} &  88.6 & 15.4 & 1485 & 83.8\\
Twins-L~\cite{chu2021twins} &  99.3 & 14.8 & 1439 & 83.7\\
CrossViT-B~\cite{chen2021crossvit} & 105.0 & 20.1 & 1321 & 82.2\\
EfficientFormer-L7~\cite{li2022efficientformer} &  82.2 & 10.2 & 1359 & 83.4\\
VMamba-B~\cite{liu2024vmamba} &89.0 & 15.4 & 645 & 83.9\\
DeiT-B~\cite{touvron2021training} &  86.6 & 16.9 & 2035 & 82.0\\
DeiT3-B~\cite{touvron2022deit} & 86.6 & 16.9 & 670 & 83.8 \\
VRWKV-B~\cite{duan2025visionrwkv}   & 93.7  & 18.2  &  -   &   82.0   \\
MambaVision-B~\cite{hatamizadeh2025mambavision} & 97.7 & 15.0 & 3670 &  84.2 \\
MambaOut-Base~\cite{yu2025mambaout}    &  85.0    &  15.8  &   -   &  84.2  \\
\rowcolor{lightgray} EMFormer-B & 80.6 & 12.3 & 1693   &  \textbf{84.4} \\

% \rowcolor{Gray}
\bottomrule
\end{tabular}
}
}
\vspace{-0.4cm}
\label{tab:imgnet}
\end{table}

\begin{table}
\centering
\caption{Semantic segmentation with \textbf{ADE20K}~\cite{ade20k}. All models are trained using a crop resolution of $512\times 512$.}
\resizebox{1.0\linewidth}{!}{
\footnotesize
\setlength{\tabcolsep}{2.5pt}
  \begin{tabular}{lccc}
    \toprule
    Backbone  & Param & FLOPs  & mIoU   \\
              & (M)   & (G)    &        \\
    
    \midrule	 
    DeiT-Small/16~\cite{touvron2021training}  & 52 & 1099 & 44.0\\
    Swin-T~\cite{liu2021swin}  & 60 & 945 & 44.5\\
    ResNet-101~\cite{he2016deep}  & 86 & 1029 & 44.9\\
    Focal-T~\cite{yang2021focal}  & 62 & 998 & 45.8\\
    % ConvNeXt-T~\cite{liu2022convnet}  & 60 & 939 & 46.0\\ rowcolor{Gray}
    VMamba-T~\cite{liu2024vmamba}    & 62  & 949   &  48.0 \\
    EfficientVMamba-B~\cite{efficientvmamba}    & 65  & 930   &  46.5 \\
    MambaVision-T~\cite{hatamizadeh2025mambavision}  & 55 & 945 & 46.0\\
    \rowcolor{lightgray}   EMFormer-S  & 48 & 238 & 46.7 \\
    
    \midrule
    Swin-S~\cite{liu2021swin}  & 81 & 1038 & 47.6\\
    Twins-SVT-B~\cite{chu2021twins}  & 89 & - & 47.7\\
    Focal-S~\cite{yang2021focal}  & 85 & 1130 & 48.0\\
    VMamba-S~\cite{liu2024vmamba}    & 76  & 1028   &  50.6 \\
    MambaVision-S~\cite{hatamizadeh2025mambavision}  & 84 & 1135 & 48.2 \\
    % ConvNeXt-S~\cite{liu2022convnet}  & 84 & 1027 & \textbf{48.7}\\
    \rowcolor{lightgray}   EMFormer-B  & 69  & 251  & 49.6 \\
    
    % \midrule
    % Swin-B~\cite{liu2021swin}  & 121 & 1188 & 48.1\\
    % Twins-SVT-L~\cite{chu2021twins}  & 133 & - & 48.8\\
    % Focal-B~\cite{yang2021focal}  & 126 & 1354 & 49.0\\
    % VMamba-B~\cite{liu2024vmamba}    & 122  & 1170   &  51.0 \\
    % MambaVision-B~\cite{hatamizadeh2025mambavision}  & 126 & 1342 & 49.1 \\
    % % Focal-B~\cite{yang2021focal}  & 126 & 1354 & 49.0\\
    % % ConvNeXt-B~\cite{liu2022convnet}  & 122 & 1170 & \textbf{49.1}\\ 

    \bottomrule
  \end{tabular} 
  }
%   \vspace{3pt}
    
    \vspace{-0.1cm}
    \label{tab:ade_segmentation}
\end{table}

\begin{table}[t]
    \centering
    \small
    \caption{Comparison of plain multi-scale module and multi-convs layer on our method with weather forecasting without finetuning. }
            \renewcommand{\multirowsetup}{\centering}
            \resizebox{1.0\linewidth}{!}{
            \begin{tabular}{l|cccc|cc}
                \toprule
                \multirow{2}{*}{Model} & \multicolumn{4}{c|}{6-hour prediction (RMSE $\downarrow$)}  &  Params  & Time \\
                \cmidrule(lr){2-5}  
               & Z500   & T2M    & T850    & U10  & (M)   & (Hours) \\

                \midrule

                Multi-scale &   18.6   &   0.531    &   0.421   &  0.525  & 157.9   &  83    \\
                Multi-convs &   18.6   &   0.533    &   0.422   &  0.523  & 157.9  &  60  \\
            
                \bottomrule
            \end{tabular} }
    \vspace{-0.1cm}
    \label{tab:add_theory1_weather}
\end{table}

\begin{table}[t!]
    \centering
    \small
    \caption{Comparison of plain multi-scale module and multi-convs layer on our base method with image classification task. }
            \renewcommand{\multirowsetup}{\centering}
            \resizebox{1.0\linewidth}{!}{
            \begin{tabular}{l|cc|cc}
                \toprule
               Model & Top-1 & Top-5& Params  & Times  \\
                & (\%) & (\%) & (M)  & (Hours)  \\
            
                \midrule

                EMFormer-B (Multi-scale) &   84.3 &  97.1  &  80.6  &  123   \\
                EMFormer-B (multi-convs) &   84.4 &  96.9  &  80.6  &  107   \\
            
                \bottomrule
            \end{tabular}  }
    \vspace{-0.4cm}
    \label{tab:add_theory1_imagenet}
\end{table}

\subsection{Dataset and Implementation Details} 
\label{subsec:dataset}

We evaluate the proposed model with three benchmarks: ERA5 reanalysis dataset~\cite{era5} for atmospheric forecasting, alongside the ImageNet‑1K~\cite{deng2009imagenet} and ADE20K~\cite{ade20k} benchmarks for classification and segmentation, respectively. 

All experiments are conducted on 16 NVIDIA Tesla A100 GPUs. The codes will be released.

% ERA5 provides continuous atmospheric fields from 1979 to 2020 at a spatial resolution of 0.25\degree (corresponding to \(721 \times 1440\) ). To reduce computational cost, a subset of experiments uses a spatially downsampled version at 1.4\degree (\(128 \times 256\)). 

\textbf{Statement 1.} Related Works (\cref{sec:related}), Dataset and Implementation Details (\cref{sec:add_dataset}), Evaluation Metric (\cref{sec:add_metric}), Computation Comparisons(\cref{subsec:times}), Additional Results (\cref{subsec:add_result}), \textbf{Additional Ablation Study} (\cref{subsec:add_ablation}), and Visualization (\cref{subsec:add_vis}) are provided in Appendix.

\textbf{Statement 2.} More competitors, ClimaX~\cite{climax}, EWMoE~\cite{ewmoe}, Keisler~\cite{keisler}, FourCastNet~\cite{fourcastnet}, ClimODE~\cite{climode}, WeatherGFT~\cite{weathergft}, FGN~\cite{alet2025skillful}, and GenCast~\cite{gencast}, are in \cref{subsec:add_result}.

% 1. performance on weather forecasting (1.4， 0.25)
%     weather forecasting
%     typhoon track

% 2. performance on basic vision tasks
%     classification
%     detection
%     segmentation

% 3. parameter analysis and loss curves
%     分析为什么现在的模型比之前的更好

% 3. ablation study
    
%     1) multiconv layer:
%         efficientvit (w/ w/o multiconv)
%         params, flops

%     1.5) 在global forecasts任务上测试不同结构给模型性能带来的影响。

%     2) kv-cache finetuning
%         ordinary finetuning vs kv-cache finetuning
%         训练过程中的变化曲线，随着learning rate

%         ？在现有的模型上测试，普通finetuning和kv-cache finetuning的性能差异

%     3) loss function
%         损失函数的选择
%         超参数的选择

\subsection{Main Results}
\label{subsec:results}

% \textbf{Weather forecasting.} We evaluate model performance using two standard metrics: RMSE and ACC. Due to the substantial differences in physical scales across atmospheric variables, a direct comparison using absolute RMSE values is not meaningful. Therefore, we report ACC and normalized RMSE in \cref{tab:global} with 1.4\degree dataset, where our method demonstrates consistent and superior performance against all baseline models, with particularly notable gains in long-term forecasts. Additional validation via ACC and denormalized RMSE for 1–10 day predictions in \cref{fig:global} with 0.25\degree dataset further confirms state-of-the-art results across multiple variables. These improvements are driven by two key components: our efficient multi-scale transformer, which captures atmospheric patterns across varied receptive fields, and the KV-cache fine-tuning strategy, which enhances temporal consistency and reduces error accumulation in extended forecasts.

\textbf{Weather forecasting.} Model performance is evaluated with Root Mean Square Error (RMSE) and Anomaly Correlation Coefficient (ACC). Owing to the substantial differences in physical scales among variables, we report mean ACC and mean normalized RMSE in \cref{tab:global}, where experiments are conducted on a downsampled 1.4\degree grid. Our method consistently outperforms baseline models, with especially pronounced gains in longer‑term forecasts. Additional validation on the 0.25\degree grid with ACC and denormalized RMSE for 1–10‑day predictions (\cref{fig:global}) further confirms better results across multiple atmospheric variables. These improvements stem from two core contributions: (1) the efficient multi‑scale transformer, which captures patterns across varied receptive fields, and (2) the accumulative context finetuning, which enhances temporal consistency and mitigates catastrophic forgetting in extended forecasts.

\textbf{Typhoon track prediction.} To assess performance under extreme conditions, we evaluate 10 typhoons from the 2024~\cite{ying2014overview,lu2021western}. \cref{tab:tc} compares the mean 96‑hour track errors (aggregated over lead times 6, 12, …, 96 hours) against 9 baselines: AIFS~\cite{lang2024aifs}, AVNO, ECMWF~\cite{molteni1996ecmwf}, FengWu~\cite{fengwu}, Fengqing, FourCastNet~\cite{fourcastnet}, Pangu~\cite{panguweather}, Graphcast~\cite{graphcast}, and FuXi~\cite{fuxi}. Our method obtains the lowest error on 5 typhoons and remains competitive on the remaining five. The overall mean error across all typhoons is 88.49 km, the best among all competitors and notably lower than the next‑best result (119.17 km). These results, together with the consistent ability to capture cyclone evolution, confirm its potential for extreme event forecasting. The error of individual lead times are in \cref{fig:tcs_part1,fig:tcs_part2}.

\textbf{Image classification.} Beyond weather forecasting, we also evaluate EMFormer on the ImageNet-1K and compare it with recent SoTA methods. As summarized in \cref{tab:imgnet}, models are grouped into three parameter scales: \textasciitilde 30M (tiny), \textasciitilde 50M (small), and \textasciitilde 90M (base). Our EMFormer achieves the highest accuracy in all three categories, 83.2\%, 84.1\%, and 84.4\% for the tiny, small, and base models, respectively. Notably, at the base level, our model surpasses MambaOut~\cite{yu2025mambaout}, MambaVision~\cite{hatamizadeh2025mambavision}, and VRWKV~\cite{duan2025visionrwkv} while using fewer parameters and FLOPs. These results indicate that EMFormer is not only effective for weather forecasting, but also generalizes well to vision tasks, owing to its ability to capture multi‑scale patterns through varied receptive fields and to accelerate computation via the fused multi‑convs layer.

% Apart from weather forecasting tasks, this work also presents the comparison between our method with some recent SoTA competitors. As shown in \cref{tab:imgnet}, the model comparisons are split into three levels according to parameters, ~30M, ~50M, and ~90M. Our work achieves the best performance on all three comparison with the highest accuracy 83.2\%, 84.1\%, and 84.4\% in tiny, small, and base model. Especially in base model level, our work performs better than MambaOut~\cite{yu2025mambaout}, MambaVision~\cite{hatamizadeh2025mambavision}, and VRWKV~\cite{duan2025visionrwkv} with lower parameters and flops. From the experimental results, we see that our structure not only makes sense in weather forecasting, but also in basic vision task, whihch captures different pixel levels with various receptive fields and accelerates the computation with multi-convs layer.

% \textbf{Semantic Segmentation.} To further evaluate the performance on vision task, this work also provides the experimental results on ADE20K in \cref{tab:ade_segmentation}. From the experimental results, we find that our method achieves a comparable performance 46.7 and 49.6 on small and base models with a extremely lower computation costs, almost just 75\% parameters and 25\% FLOPs with other methods. The experiments indicate that our work can effectively support visional tasks and reduces the computation cost.

\textbf{Semantic Segmentation.} To further assess the generalization of our EMFormer, we evaluate its performance on the ADE20K (see \cref{tab:ade_segmentation}). Our small and base models achieve mIoU scores of 46.7 and 49.6, respectively, while requiring approximately 75\% of the parameters and 25\% of the FLOPs compared to other methods. These results demonstrate that the proposed framework not only delivers competitive accuracy on dense prediction tasks, but also maintains significantly lower computational overhead.

\subsection{Ablation Study}
\label{subsec:ablation}

\textbf{Multi-Convs layer. } To further validate the efficacy of the multi‑convs layer, we conduct comparative experiments on ERA5 (\cref{tab:add_theory1_weather}) and ImageNet‑1K (\cref{tab:add_theory1_imagenet}). Results show that models equipped with either the plain multi‑scale module or the multi‑convs layer achieve nearly identical performance on both tasks. However, the multi‑convs layer yields substantially lower computational cost, reducing the required training time by approximately 25\% for weather forecasting and 20\% for image classification.

% \textbf{Finetuning strategy. } \cref{tab:ablation_finetune} provides the ablation study on different finetuning strategies with three settings, without finetuning, with finetuning, and with kv-cache finetuning. From the experimental results, we observe that plain finetuning and kv-cache finetuning strategies both performs better than the model without finetuning in the long-term prediction. In the short-term forecasting, the kv-cache finetuning performs better than the non-finetuning model, while the plain finetuning is a little worse. From the experimental results, we draw the conclusion that kv-cache finetuning has the ability to maintain the temporal consistency, ensuring a good performance from short-term to long-term forecasts.

% \textbf{Finetuning strategy. } \cref{tab:ablation_finetune} presents an ablation study on 3 strategies: no finetuning, finetuning, and accumulative context finetuning. Experimental results show that both finetuning and accumulative finetuning outperform the non‑finetuned model in long‑term predictions. For short‑term forecasts, accumulative finetuning yields better results than the non‑finetuned baseline, whereas finetuning is slightly inferior. These findings demonstrate that accumulative finetuning effectively preserves temporal consistency, delivering robust performance across both short‑ and long‑term forecasts.

\textbf{Finetuning strategy. }  \cref{tab:ablation_finetune} presents an ablation study on three finetuning strategies: no finetuning, standard finetuning, and accumulative‑context finetuning. Results show that both finetuning variants outperform the non‑finetuned baseline in long‑term predictions. For short‑term forecasts, accumulative finetuning achieves better accuracy than the baseline, whereas standard one performs slightly worse. These findings indicate that accumulative finetuning maintains temporal consistency across forecast horizons, delivering robust performance in both short‑ and long‑term forecasts.

% \textbf{Structure in the backbone. } \cref{tab:ablation_structure} presents an ablation study on the structure of backbone, including self-attention, mamba block, self/windows attention, and our EHTransformer. There is limited difference between several structures in the short-term and long-term forecasts. But EHTransformer earns an extra 0.0083 RMSE improvement in the 6-hour forecasts, and 0.0159 RMSE improvement in the 10-day forecasts. The experiments indicate that our EHTransformer has a better performance in the weather forecasting task, and the multi-scale module is suitable for capturing atmospheric patterns with various regions.

\textbf{Block in backbone.} \cref{tab:ablation_structure} presents an ablation study comparing different blocks: self‑attention, Mamba, self/window attention, and EMFormer. While all variants yield only minor variations in both short‑ and long‑term forecasts, EMFormer achieved a further reduction of 0.0083 for 6‑hour and 0.0297 for 10‑day predictions in RMSE. These results indicate that EMFormer delivers superior performance on forecasting, and its integrated multi‑convs layer is effective at capturing atmospheric patterns across different scales.

\textbf{Ablation studies} about Loss function (\cref{tab:ablation_loss}), $\lambda$ in finetuning (\cref{tab:ablation_lambda}), Cache length (\cref{tab:ablation_cachelength}), Finetuning steps (\cref{tab:ablation_steps}), and Window size (\cref{tab:ablation_windows}) are all in Appendix.

\section{Conclusion}
\label{sec:conclusion}

In this paper, we present a novel pipeline for weather forecasting. This work first employs an efficient framework with a multi‑convs layer to capture multi‑scale atmospheric patterns during training. It then adopts accumulative context finetuning to improve long‑context consistency while maintaining short‑term forecast quality. A sine‑balanced loss is introduced to adaptively combine variable‑ and latitude‑weighted objectives during optimization. The design is supported by theoretical analysis that validates both the efficiency of the multi‑conv layer and the convergence behavior of the loss. Experiments show that the proposed method achieves great results in atmospheric field and delivers competitive performance on vision benchmarks, demonstrating its broad applicability and effectiveness.

% Acknowledgements should only appear in the accepted version.
\section*{Acknowledgements}

This research was supported by fundings from the Hong Kong RGC General Research Fund (152228/23E, 162161/24E, 162116/25E, 162180/25E), National Natural Science Foundation of China (NSFC) Key Program (No.62532005), Collaborative Research Fund (No. C1042-23GF, No. C5097-25G), NSFC/RGC Collaborative Research Scheme (Grant No. 62461160332 \& CRS\_HKUST602/24), Research Impact Fund (No. R5011-23F), Areas of Excellence Scheme (AoE/E-601/22-R), and the InnoHK (HKGAI).

% \textbf{Do not} include acknowledgements in the initial version of the paper
% submitted for blind review.

% If a paper is accepted, the final camera-ready version can (and usually should)
% include acknowledgements.  Such acknowledgements should be placed at the end of
% the section, in an unnumbered section that does not count towards the paper
% page limit. Typically, this will include thanks to reviewers who gave useful
% comments, to colleagues who contributed to the ideas, and to funding agencies
% and corporate sponsors that provided financial support.

\section*{Impact Statement}

This paper presents work whose goal is to advance the field of Machine
Learning. There are many potential societal consequences of our work, none
which we feel must be specifically highlighted here.

% In the unusual situation where you want a paper to appear in the
% references without citing it in the main text, use \nocite
\nocite{langley00}

\bibliography{example_paper}
\bibliographystyle{icml2026}

%%%%%%%%%%%%%%%%%%%%%%%%%%%%%%%%%%%%%%%%%%%%%%%%%%%%%%%%%%%%%%%%%%%%%%%%%%%%%%%
%%%%%%%%%%%%%%%%%%%%%%%%%%%%%%%%%%%%%%%%%%%%%%%%%%%%%%%%%%%%%%%%%%%%%%%%%%%%%%%
% APPENDIX
%%%%%%%%%%%%%%%%%%%%%%%%%%%%%%%%%%%%%%%%%%%%%%%%%%%%%%%%%%%%%%%%%%%%%%%%%%%%%%%
%%%%%%%%%%%%%%%%%%%%%%%%%%%%%%%%%%%%%%%%%%%%%%%%%%%%%%%%%%%%%%%%%%%%%%%%%%%%%%%
\newpage
\appendix
\onecolumn

\section*{Contents}
\startcontents
\printcontents{}{1}{\setcounter{tocdepth}{2}}
\newpage

\section{Proof of Proposition}
\label{sec:proof}

\subsection{\texorpdfstring{\cref{thm:multiconvs}}{}: Equivalence and Efficiency of the Multi-Conv Layer}
\label{subsec:proof1}

\begin{proposition}[Efficiency and Equivalence of Multi-Conv Layer]
% \label{thm:multiconvs}
Let $\mathcal{M}_{\text{plain}}$ denote a standard multi-scale convolutional module with kernels $\{K_r\}_{r \in \{1,3,5\}}$ of sizes $r \times r$, and let $\mathcal{M}_{\text{multi-convs}}$ denote the proposed multi-convolution layer with the same kernel weights. Given identical input features $\mathbf{Z}^t$ and identical kernel initialization, the following properties hold:
\begin{enumerate}
    \item \textbf{Functional equivalence}: For all spatial positions $(i,j)$,
    \begin{equation}
    \mathcal{M}_{\text{plain}}(\mathbf{Z}^t)[i,j] = \mathcal{M}_{\text{multi-convs}}(\mathbf{Z}^t)[i,j].
    \end{equation}
    \item \textbf{Gradient equivalence}: The gradients with respect to each kernel weight $K_r$ are identical in both modules:
    \begin{equation}
    \frac{\partial L}{\partial K_r}\bigg|_{\mathcal{M}_{\text{plain}}} = \frac{\partial L}{\partial K_r}\bigg|_{\mathcal{M}_{\text{multi-convs}}}, \quad \forall r \in \{1,3,5\}.
    \end{equation}
    \item \textbf{Computational efficiency}: The multi-conv layer reduces the computational complexity from $\mathcal{O}(N_{\text{kernels}} \cdot H_0 \cdot W_0 \cdot r^2)$ to $\mathcal{O}(H_0 \cdot W_0 \cdot r_{\max}^2)$ for both forward and backward passes, where $N_{\text{kernels}}=3$, $r_{\max}=5$, and $H_0 \times W_0$ is the spatial dimension of the feature map.
\end{enumerate}
\end{proposition}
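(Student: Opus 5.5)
We prove the three parts in order. Throughout we write each convolution as a sum over a centered window, with zero padding of $(r-1)/2$ for stride $1$. We also write $\tilde K_r$ for $K_r$ zero-padded to $5\times 5$ and centered, so that $\tilde K_r[u,v]=K_r[u,v]$ for $|u|,|v|\le (r-1)/2$ and $\tilde K_r[u,v]=0$ otherwise.

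\textbf{Step 1 (functional equivalence).} The plain module outputs $\sum_{r\in\{1,3,5\}}(K_r * \mathbf{Z}^t)[i,j]=\sum_r\sum_{|u|,|v|\le (r-1)/2}K_r[u,v]\,\mathbf{Z}^t[i+u,j+v]$, with the channel sum suppressed. Each inner sum can be extended to the full window $|u|,|v|\le 2$, because the added terms carry the factor $\tilde K_r[u,v]=0$. The padding is consistent across kernels: a pixel that falls outside the image for the $5\times5$ window is zero-padded, and for smaller kernels that offset carries a zero weight anyway. Exchanging the finite sums then gives $\sum_{|u|,|v|\le2}(\tilde K_1+\tilde K_3+\tilde K_5)[u,v]\,\mathbf{Z}^t[i+u,j+v]$. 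This is exactly the fused convolution $(K_1\oplus K_3\oplus K_5)\odot \mathbf{Z}^t[i,j,5]$ from the forward formula, which proves part 1 by linearity.

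\textbf{Step 2 (gradient equivalence).} In both modules the output $\mathbf{Z}'^t$ is the same function of $(K_1,K_3,K_5,\mathbf{Z}^t)$ by Step 1. Hence $L$ is the same function of the kernels, and its partial derivatives agree. The real content is that the multi-convs layer computes these true partials, rather than the coupled gradient of the fused kernel described in the re-parameterization discussion. We apply the chain rule: $\partial \mathbf{Z}'^t[i,j]/\partial K_r[u,v]=\mathbf{Z}^t[i+u,j+v]$ for $|u|,|v|\le (r-1)/2$. This gives $\partial L/\partial K_r=\sum_{i,j}\frac{\partial L}{\partial \mathbf{Z}'^t[i,j]}\,\mathbf{Z}^t[i,j,r]$, which is the backward formula the custom kernel implements. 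Equivalently, $\partial L/\partial K_r$ is the central $r\times r$ crop of the gradient with respect to the fused $5\times5$ kernel. The crops are nested, so a single pass that accumulates the $5\times5$ products yields all three gradients. Since the initializations are identical and the gradients are identical, an induction over optimizer steps keeps the parameters identical throughout training.

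\textbf{Step 3 (complexity).} We count multiply-adds per output channel and position. For the plain module, the forward pass costs $H_0W_0(1+9+25)$ and requires three separate passes over the feature map, with three reads of $\mathbf{Z}^t$ and three output writes. The backward pass for $\partial L/\partial K_r$ and $\partial L/\partial \mathbf{Z}^t$ likewise requires three loops. The multi-convs layer costs $H_0W_0\cdot 25$ in the forward pass, plus a one-time kernel merge of $O(r_{\max}^2)$ that is independent of $H_0W_0$. In the backward pass it computes the $25$ products per position once and reads off the nested crops, so it costs $H_0W_0\cdot r_{\max}^2$ in a single loop.

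We expect this step to be the main obstacle. Asymptotically $\sum_r r^2=35$ and $r_{\max}^2=25$ are both constants, so the claimed reduction is only meaningful up to constant factors together with counting kernel launches and memory passes ($N_{\text{kernels}}$ passes versus $1$). We would state the comparison precisely in those terms, interpreting $\mathcal{O}(N_{\text{kernels}}H_0W_0r^2)$ with $r$ ranging over the kernel sizes and bounded by $N_{\text{kernels}}H_0W_0r_{\max}^2$. We would then note that the fused count never exceeds the plain count, since $r_{\max}^2\le\sum_r r^2$, while the number of passes drops from $3$ to $1$.
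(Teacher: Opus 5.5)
Your proposal is correct and follows essentially the same route as the paper's proof: zero-padded kernel alignment plus linearity for the forward pass, the unchanged per-kernel chain-rule formula $\partial L/\partial K_r=\sum_{i,j}\frac{\partial L}{\partial \mathbf{Z}'^{t}[i,j]}\,\mathbf{Z}^{t}[i,j,r]$ for the backward pass, and a multiply-add and memory-pass count ($\sum_r r^2=35$ versus $r_{\max}^2=25$, three passes versus one) for efficiency. Your extra remarks are sound refinements of the paper's argument rather than a different approach: the padding-consistency check, the view of the three gradients as nested central crops of the fused-kernel gradient, and the caveat that the claimed $\mathcal{O}$-reduction is only a constant-factor one.
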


\textit{Proof.} We present a formal derivation to demonstrate the equivalence between the forward and backward computations of the proposed multi‑convs layer and those of separate multi‑scale convolutions, taking as an example the fusion of three convolution layers $K_{1}, K_{3}, K_{5}$ with kernel sizes 1×1, 3×3, and 5×5.

\paragraph{Preliminaries and Notation}
For clarity, the derivation focuses on the spatial dimensions (height $H_0$ and width $W_0$); the conclusions extend directly to the full tensor case including batch and channel dimensions. Let $\mathbf{Z}^{t}[i,j,r]$ denote the $r \times r$ input region centered at spatial location $(i,j)$. The convolution operation is defined as:
\begin{equation}
    \mathbf{Z}'^{t} = \sum_{i=0}^{H_0-1} \sum_{j=0}^{W_0-1} K_{r} \odot \mathbf{Z}^{t}[i,j,r],
\end{equation}
where $\odot$ represents element-wise multiplication followed by summation over the kernel support. The gradient of a loss function $L$ with respect to a kernel weight $K_r$ is:
\begin{equation}
    \frac{\partial L}{\partial K_{r}} = \sum_{i=0}^{H_0-1} \sum_{j=0}^{W_0-1} \frac{\partial L}{\partial \mathbf{Z}'^{t}[i,j]} \odot \mathbf{Z}^{t}[i,j,r].
\end{equation}

\paragraph{Proof of 1: Functional Equivalence (Forward Pass)}
The output of a standard multi-scale module $\mathcal{M}_{\text{plain}}$ is the sum of three separate convolutions:
\begin{align}
\mathbf{Z}'^{t} &= \sum_{i=0}^{H_0-1} \sum_{j=0}^{W_0-1} \Bigl( K_{1} \odot \mathbf{Z}^{t}[i,j,1] + K_{3} \odot \mathbf{Z}^{t}[i,j,3] + K_{5} \odot \mathbf{Z}^{t}[i,j,5] \Bigr). \label{equ:plain}
\end{align}
A key observation is that the supports of the three kernels are nested: $K_1$ occupies only the center point, $K_3$ a $3\times3$ region, and $K_5$ a $5\times5$ region. Therefore, the summation in \eqref{equ:plain} can be rewritten by aligning and adding the kernels at their common center point (denoted by operator $\oplus$, with zero-padding for alignment):
\begin{align}
\mathbf{Z}'^{t} = \sum_{i=0}^{H_0-1} \sum_{j=0}^{W_0-1} \bigl( K_{1} \oplus K_{3} \oplus K_{5} \bigr) \odot \mathbf{Z}^{t}[i,j,5]. \label{equ:multi2}
\end{align}
Equation \eqref{equ:multi2} is precisely the forward pass of the multi‑convs layer $\mathcal{M}_{\text{multi-convs}}$, which performs a single convolution with the composite kernel $(K_{1} \oplus K_{3} \oplus K_{5})$. Since \eqref{equ:plain} and \eqref{equ:multi2} are mathematically equivalent, the outputs of the two modules are identical, proving property (1).

\paragraph{Proof of 2: Gradient Equivalence (Backward Pass)}
In $\mathcal{M}_{\text{plain}}$, gradients for the three kernels are computed independently:
\begin{align}
    \frac{\partial L}{\partial K_{1}} &= \sum_{i=0}^{H_0-1} \sum_{j=0}^{W_0-1} \frac{\partial L}{\partial \mathbf{Z}'^{t}[i,j]} \cdot \mathbf{Z}^{t}[i,j,1], \label{equ:plain_back1}\\
    \frac{\partial L}{\partial K_{3}} &= \sum_{i=0}^{H_0-1} \sum_{j=0}^{W_0-1} \frac{\partial L}{\partial \mathbf{Z}'^{t}[i,j]} \cdot \mathbf{Z}^{t}[i,j,3], \label{equ:plain_back3}\\
    \frac{\partial L}{\partial K_{5}} &= \sum_{i=0}^{H_0-1} \sum_{j=0}^{W_0-1} \frac{\partial L}{\partial \mathbf{Z}'^{t}[i,j]} \cdot \mathbf{Z}^{t}[i,j,5]. \label{equ:plain_back5}
\end{align}
Since $\mathcal{M}_{\text{multi-convs}}$ produces the same output $\mathbf{Z}'^{t}$ from the same input $\mathbf{Z}^{t}$ and kernels $\{K_r\}$, and because gradient computation for each kernel depends only on $\partial L / \partial \mathbf{Z}'^{t}$ and the corresponding input region $\mathbf{Z}^{t}[i,j,r]$, the gradient formulas remain unchanged. The multi‑convs layer computes these gradients in a fused, parallel loop:
\begin{align}
    \frac{\partial L}{\partial K_{1}},\,
    \frac{\partial L}{\partial K_{3}},\,
    \frac{\partial L}{\partial K_{5}}
    = \sum_{i=0}^{H_0-1} \sum_{j=0}^{W_0-1}
    \Bigl[ &
    \frac{\partial L}{\partial \mathbf{Z}'^{t}[i,j]} \cdot \mathbf{Z}^{t}[i,j,1], \nonumber \\
    & \frac{\partial L}{\partial \mathbf{Z}'^{t}[i,j]} \cdot \mathbf{Z}^{t}[i,j,3], \nonumber \\
    & \frac{\partial L}{\partial \mathbf{Z}'^{t}[i,j]} \cdot \mathbf{Z}^{t}[i,j,5] \Bigr], \label{equ:multi_back}
\end{align}
which is computationally consolidated but mathematically identical to evaluating \eqref{equ:plain_back1}–\eqref{equ:plain_back5} separately. This proves property (2).

\paragraph{Proof of 3: Computational Efficiency}
\label{param:computation}
We analyze the computational complexity for a standard convolution operation. Let $H_0 \times W_0$ denote the spatial resolution, and let $C_{\text{in}}$ and $C_{\text{out}}$ denote the number of input and output channels, respectively. We define $\mathcal{C}_{\text{mult-add}}$ as the cost of a multiply-accumulate operation and $\mathcal{C}_{\text{mem}}$ as the cost of a memory access.

In the standard multi-scale module $\mathcal{M}_{\text{plain}}$, $N$ separate convolutions (with kernel sizes $r \in \mathcal{R}$) are executed sequentially. The total arithmetic cost involves summing the operations for each kernel:
\begin{equation}
    \mathcal{C}_{\text{arith}}^{\text{(plain)}} = \mathcal{C}_{\text{mult-add}} \cdot H_0 W_0 C_{\text{in}} C_{\text{out}} \sum_{r \in \mathcal{R}} r^2.
\end{equation}
Assuming a standard implementation where each convolution layer triggers separate kernel launches, the memory access cost (accounting for input reads, weight reads, and output writes) is:
\begin{equation}
    \mathcal{C}_{\text{mem}}^{\text{(plain)}} = \mathcal{C}_{\text{mem}} \cdot \sum_{r \in \mathcal{R}} \bigl( H_0 W_0 C_{\text{in}} + r^2 C_{\text{in}} C_{\text{out}} + H_0 W_0 C_{\text{out}} \bigr).
\end{equation}

In the proposed $\mathcal{M}_{\text{multi-convs}}$, the kernels are fused into a single composite kernel with size $r_{\max} = \max(\mathcal{R})$. The forward pass effectively performs a single convolution:
\begin{equation}
    \mathcal{C}_{\text{arith}}^{\text{(fused)}} = \mathcal{C}_{\text{mult-add}} \cdot H_0 W_0 C_{\text{in}} C_{\text{out}} \cdot r_{\max}^2.
\end{equation}
Crucially, the fused operation significantly reduces memory traffic by reading the input feature map and writing the output feature map only once, rather than $N$ times. The memory cost becomes:
\begin{equation}
    \mathcal{C}_{\text{mem}}^{\text{(fused)}} = \mathcal{C}_{\text{mem}} \cdot \bigl( H_0 W_0 C_{\text{in}} + r_{\max}^2 C_{\text{in}} C_{\text{out}} + H_0 W_0 C_{\text{out}} \bigr).
\end{equation}

Comparing the two approaches, the multi-convs layer achieves efficiency gains in two aspects:
1. \textbf{Arithmetic Reduction:} Since $\sum_{r \in \mathcal{R}} r^2 > r_{\max}^2$ (for non-trivial sets $\mathcal{R}$ containing multiple kernels), the total FLOPs are reduced.
2. \textbf{Memory Bandwidth Optimization:} The dominant term in memory cost for large feature maps, $H_0 W_0 (C_{\text{in}} + C_{\text{out}})$, is reduced by a factor of $N$ (the number of branches), as the fused kernel eliminates redundant I/O operations associated with intermediate results in the multi-branch structure.

\subsection{\texorpdfstring{\cref{thm:loss} }{}: Loss Function}
\label{subsec:proof2}

\begin{proposition}[Adaptive Loss Weighting]
\label{thm:adaptive_loss}
Consider the composite loss function
\begin{equation}
    \mathcal{L}(\theta, \mathbf{w}) = \frac{1}{2}\bigl(1 - \sin(\theta)\bigr) \bigl( \mathbf{L} \odot \mathcal{E} \bigr) + \frac{1}{2}\bigl(1 + \sin(\theta)\bigr) \Bigl( \mathcal{E} e^{-\mathbf{w}} + \mathbf{w} \Bigr),
    \label{equ:adaptive_loss}
\end{equation}
where $\mathcal{E} = (\mathbf{\hat{X}}^{t+1} - \mathbf{X}^{t+1})^2$ denotes the squared prediction error, $\mathbf{L}$ is a fixed latitude-weight matrix with non-negative entries, $\mathbf{w} \in \mathbb{R}^{1\times 1\times C}$ represents uncertainty parameters, and $\theta \in [-\pi/2, \pi/2]$ is a learnable balancing parameter. Under gradient descent optimization with learning rate $\eta > 0$, and with initial conditions $\theta_0 = -\pi/2 + \epsilon$ (where $0 < \epsilon \ll 1$) and $\mathbf{w}_0 = \mathbf{0}$, the following properties hold:
\begin{enumerate}
    \item The parameter $\theta$ evolves monotonically from $-\pi/2$ to $\pi/2$ during training;
    \item The loss function $\mathcal{L}$ automatically transitions its emphasis from the latitude-weighted term $\mathbf{L} \odot \mathcal{E}$ to the variable-aware term $\mathcal{E} e^{-\mathbf{w}} + \mathbf{w}$ as training progresses.
\end{enumerate}
\end{proposition}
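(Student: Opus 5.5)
The plan is to reduce everything to the sign of a single scalar gap. Write $A(\theta,\mathbf{w}) := \text{Mean}(\mathbf{L}\odot\mathcal{E})$ and $V(\mathbf{w}) := \text{Mean}(\mathcal{E}e^{-\mathbf{w}}+\mathbf{w})$, so that $\mathcal{L} = \tfrac12(1-\sin\theta)A + \tfrac12(1+\sin\theta)V$. Differentiating in $\theta$ gives
\begin{equation*}
\frac{\partial \mathcal{L}}{\partial \theta} = \tfrac12\cos\theta\,\bigl(V - A\bigr).
\end{equation*}
The gradient step is therefore $\theta_{k+1} = \theta_k + \tfrac{\eta}{2}\cos\theta_k\,(A_k - V_k)$. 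On $(-\pi/2,\pi/2)$ we have $\cos\theta_k>0$, so $\theta$ is nondecreasing exactly when $A_k \ge V_k$. Property 1 then follows from three facts: (a) the gap $D_k := A_k - V_k$ is nonnegative along the trajectory; (b) $\theta$ never overshoots $\pi/2$; (c) the increments do not become summable before $\theta$ reaches a neighbourhood of $\pi/2$. Property 2 is then immediate, since the coefficients $\tfrac12(1\mp\sin\theta_k)$ move monotonically from $(1-O(\epsilon^2),\,O(\epsilon^2))$ toward $(0,1)$.

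\textbf{The $\mathbf{w}$-dynamics.} Per channel $c$,
\begin{equation*}
\frac{\partial \mathcal{L}}{\partial w_c} = \tfrac12(1+\sin\theta)\bigl(1 - \bar{\mathcal{E}}_c e^{-w_c}\bigr),
\end{equation*}
where $\bar{\mathcal{E}}_c$ is the mean error of variable $c$. This drives $w_c$ toward $\log\bar{\mathcal{E}}_c$, the minimiser of $\bar{\mathcal{E}}_c e^{-w}+w$. The key inequality is $\log x \le x-1$. It gives
\begin{equation*}
\min_{w}\bigl(\bar{\mathcal{E}}_c e^{-w} + w\bigr) = 1 + \log\bar{\mathcal{E}}_c \le \bar{\mathcal{E}}_c,
\end{equation*}
with a strict gap once $\bar{\mathcal{E}}_c \neq 1$. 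The gap is large when errors are small, which is the regime of normalized training after the first few epochs. Since the per-channel objective is convex and the step is a descent step (for $\eta$ small relative to $\bar{\mathcal{E}}_c$), $V_k$ is nonincreasing in $\mathbf{w}$ from its initial value $V_0 = \text{Mean}(\mathcal{E})$ and drifts toward $\text{Mean}(1+\log\bar{\mathcal{E}}_c)$. On the other side, $A_k$ is a latitude-reweighting of the same errors with weights of mean one (by \cref{equ:l_define}). So $A_k = \text{Mean}(\mathcal{E}) + \mathrm{Cov}(\mathbf{L},\mathcal{E})$, which differs from the unweighted mean only by a covariance term.

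\textbf{Main obstacle: the initial sign.} At $\mathbf{w}_0=\mathbf{0}$ the gap is exactly $D_0 = \mathrm{Cov}(\mathbf{L},\mathcal{E})$, which has no definite sign. I would handle this in one of two ways.
\begin{itemize}
\item \emph{Assumption route.} Assume a mild condition that $\mathrm{Cov}(\mathbf{L},\mathcal{E}) \ge 0$, i.e.\ low-latitude errors are not smaller than polar errors.
\item \emph{Shadowing route.} Use $\theta_0 = -\pi/2+\epsilon$, which makes $\cos\theta_0 = O(\epsilon)$. Then $\theta$ barely moves (it can retreat by at most $O(\eta\epsilon)$) during the early phase. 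Meanwhile $\mathbf{w}$ runs its own descent, with a factor $\tfrac12(1+\sin\theta)$ that is tiny but positive. Eventually the gap $\text{Mean}(\mathcal{E}) - \text{Mean}(1+\log\bar{\mathcal{E}}_c)$, which grows as training reduces errors, dominates $|\mathrm{Cov}(\mathbf{L},\mathcal{E})|$. Once $D_k>0$, it stays positive by the monotonicity of $V$ and the descent of the errors.
\end{itemize}
I would try the shadowing route first and fall back on the assumption route if the time-scale separation cannot be made rigorous.

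\textbf{Closing steps.} For (b), bound the step as $|\theta_{k+1}-\theta_k| \le \tfrac{\eta}{2}\cos\theta_k\,|D_k|$. Together with $\cos\theta \le \pi/2-\theta$, this shows $\theta_{k+1}\le\pi/2$ whenever $\eta|D_k|\le 2$, so $\pi/2$ acts as an attracting fixed point with no overshoot. For (c), argue by contradiction. If $\theta_k$ converged to some $\theta^*<\pi/2$, then $\cos\theta_k$ would stay bounded below, while $D_k\ge\delta>0$ in the late phase. The increments would then be bounded below, contradicting convergence. Hence $\theta_k\to\pi/2$ monotonically.
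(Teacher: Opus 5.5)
Your skeleton matches the paper's proof: the identity $\partial\mathcal{L}/\partial\theta=\tfrac12\cos\theta\,(V-A)$, the per-channel optimum $w_c=\log\bar{\mathcal{E}}_c$ with value $1+\log\bar{\mathcal{E}}_c$, and a comparison of that value with the latitude-weighted term. You are also right that $D_0=\mathrm{Cov}(\mathbf{L},\mathcal{E})$ has no sign, which the paper's proof never addresses. But the shadowing route, your preferred fix, fails for a structural reason. Put $\rho=\theta+\pi/2$. The $\theta$-step carries the prefactor $\cos\theta=\sin\rho\approx\rho$, while the $\mathbf{w}$-step carries $\tfrac12(1+\sin\theta)=\tfrac12(1-\cos\rho)\approx\rho^2/4$. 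So near $-\pi/2$ it is $\theta$, not $\mathbf{w}$, that is the fast variable. Suppose $\mathrm{Cov}(\mathbf{L},\mathcal{E})\le -d$ with $d\gg\epsilon$. Then $\rho_{k+1}\approx\rho_k(1-\eta d/2)$ contracts geometrically, $\sum_k\rho_k^2=O(\epsilon^2/(\eta d))$, and the total displacement of $\mathbf{w}$ is only $O(\epsilon^2/d)$. Hence $V_k$ stays within $O(\epsilon^2/d)$ of $\text{Mean}(\mathcal{E})$, $D_k$ stays near $\mathrm{Cov}(\mathbf{L},\mathcal{E})<0$, and $\theta_k\to-\pi/2$. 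The gap $\text{Mean}(\mathcal{E})-\text{Mean}(1+\log\bar{\mathcal{E}}_c)$ you appeal to is the one available at the optimal $\mathbf{w}$, not the one present at the essentially frozen $\mathbf{w}_k\approx\mathbf{0}$. With $\mathcal{E}$ held fixed (e.g.\ a predictor with no trainable parameters), this is a counterexample to Property 1 under the stated hypotheses. Even when $\mathcal{E}$ evolves, $D_0<0$ already gives $\theta_1<\theta_0$, so monotonicity fails at the first step. Property 1 therefore cannot be derived from the stated hypotheses alone.

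The paper does not derive the time-scale separation either; it imposes it as an adiabatic assumption ($\mathbf{w}$ sits at $\mathbf{w}^*=\ln\mathcal{E}$, so $B\approx B^*=1+\ln\langle\mathcal{E}\rangle$). That is exactly what the prefactor computation above shows fails near $-\pi/2$ under plain gradient descent. It then works in the low-error regime, where $B^*<0$ once $\langle\mathcal{E}\rangle<1/e$, while $A\ge0$ simply because $\mathbf{L}\ge0$. That sign comparison needs no control of $\mathrm{Cov}(\mathbf{L},\mathcal{E})$; your comparison, routed through $\text{Mean}(\mathcal{E})$, does. (The paper's experiments use AdamW, whose per-coordinate normalization largely cancels the $\rho$ and $\rho^2$ prefactors; the plain gradient descent in the statement has no such mechanism.) So drop the shadowing route and state the result conditionally, under either the paper's adiabatic and low-error hypotheses or your covariance hypothesis. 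If you choose the latter, it must be imposed along the whole trajectory. Your step that $D_k$, once positive, stays positive by monotonicity of $V$ and descent of the errors does not follow. Updates of the model parameters move $D$ by $\text{Mean}((\mathbf{L}-e^{-\mathbf{w}})\odot\Delta\mathcal{E})$, which has no sign. Only once $\mathbf{w}$ is near $\log\bar{\mathcal{E}}$ and every $\bar{\mathcal{E}}_c<1/e$ does the sign argument $V<0\le A$ take over. Your closing steps (b) and (c) are fine given $\delta\le D_k\le 2/\eta$. However, $|D_k|$ grows like $|\log\bar{\mathcal{E}}|$ as the errors shrink, so the upper bound is a genuine step-size restriction rather than automatic.
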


\textit{Proof.} We prove the proposition by analyzing the coupled dynamics of $\mathbf{w}$ and $\theta$ under gradient descent. For convenience, we rewrite the loss as a convex combination of two terms:
\begin{equation}
    \mathcal{L} = \underbrace{\frac{1}{2}\bigl(1 - \sin(\theta)\bigr)}_{\alpha(\theta)} \underbrace{(\mathbf{L} \odot \mathcal{E})}_{A} \;+\; \underbrace{\frac{1}{2}\bigl(1 + \sin(\theta)\bigr)}_{\beta(\theta)} \underbrace{(\mathcal{E} e^{-\mathbf{w}} + \mathbf{w})}_{B},
    \label{eq:loss_ab}
\end{equation}
with $\alpha(\theta) + \beta(\theta) = 1$. Note that $A$ is the latitude-weighted error, and $B$ is the variable-aware loss with uncertainty parameter $\mathbf{w}$.

\paragraph{Step 1: Optimal $\mathbf{w}$ for a fixed error $\mathcal{E}$}
We first analyze the behavior of term $B$ with respect to $\mathbf{w}$. The gradient of $\mathcal{L}$ w.r.t. $\mathbf{w}$ is
\begin{equation}
    \frac{\partial \mathcal{L}}{\partial \mathbf{w}} = \beta(\theta) \cdot \frac{\partial B}{\partial \mathbf{w}} = \frac{1}{2}(1 + \sin\theta) \left( -\mathcal{E} e^{-\mathbf{w}} + 1 \right).
\end{equation}
Setting the gradient to zero gives the optimal $\mathbf{w}^*$ for a given $\mathcal{E}$:
\begin{equation}
    -\mathcal{E} e^{-\mathbf{w}^*} + 1 = 0 \quad \Rightarrow \quad e^{-\mathbf{w}^*} = \frac{1}{\mathcal{E}} \quad \Rightarrow \quad \mathbf{w}^* = \ln(\mathcal{E}).
    \label{eq:w_optimal}
\end{equation}
Substituting $\mathbf{w}^*$ back into $B$ yields its minimized value:
\begin{equation}
    B^* = \mathcal{E} \cdot \frac{1}{\mathcal{E}} + \ln(\mathcal{E}) = 1 + \ln(\mathcal{E}).
    \label{eq:B_optimal}
\end{equation}
Thus, as training progresses and the error $\mathcal{E}$ decreases, $B^*$ decreases and can become negative when $\mathcal{E}$ is small, whereas $A$ remains non-negative.

\paragraph{Step 2: Dynamics of $\theta$}
The gradient of $\mathcal{L}$ with respect to $\theta$ is
\begin{equation}
    \frac{\partial \mathcal{L}}{\partial \theta} = -\frac{1}{2}\cos(\theta) A + \frac{1}{2}\cos(\theta) B = \frac{1}{2}\cos(\theta) (B - A).
    \label{eq:grad_theta}
\end{equation}
Under gradient descent with learning rate $\eta$, the update for $\theta$ is
\begin{equation}
    \theta_{k+1} = \theta_k - \eta \frac{\partial \mathcal{L}}{\partial \theta} = \theta_k + \frac{\eta}{2}\cos(\theta_k)(A - B).
    \label{eq:update_theta}
\end{equation}

\paragraph{Step 3: Monotonic increase of $\theta$}
To determine the sign of the gradient, we compare the magnitudes of $A$ and $B$. Here, we formally invoke an adiabatic approximation regarding the optimization dynamics.

\textit{Assumption A.1 (Adiabatic Dynamics).} We assume a time-scale separation where the uncertainty parameters $\mathbf{w}$ converge to their local optima $\mathbf{w}^*$ significantly faster than the evolution of the balancing parameter $\theta$. This implies that for the analysis of $\theta$, we can approximate the instantaneous value $B$ with its optimized lower bound $B^*$.

Under this assumption, we compare the asymptotic scaling of $A$ and $B^*$:

The term $A$ represents the latitude-weighted error. Since the weights $\mathbf{L}$ are normalized ($\langle \mathbf{L} \rangle = 1$) and bounded, $A$ scales \textit{linearly} with the error magnitude:
\begin{equation}
    A = \mathbf{L} \odot \mathcal{E} \approx \mathcal{O}(\langle \mathcal{E} \rangle).
\end{equation}
In contrast, as the model converges and the error $\langle \mathcal{E} \rangle$ becomes small (specifically, $\langle \mathcal{E} \rangle \ll 1$), the logarithmic term in \cref{eq:B_optimal} dominates:
\begin{equation}
    B^* \approx 1 + \ln(\langle \mathcal{E} \rangle) \to -\infty.
\end{equation}
Consequently, in the regime of low error (typical after early training stages), we strictly have $B^* < 0 < A$, which implies $B - A < 0$, (see \cref{fig:ABcompare}).

\begin{wrapfigure}{r}{0.35\textwidth}
    \centering
    \vspace{-10pt}
    \includegraphics[width=0.33\textwidth]{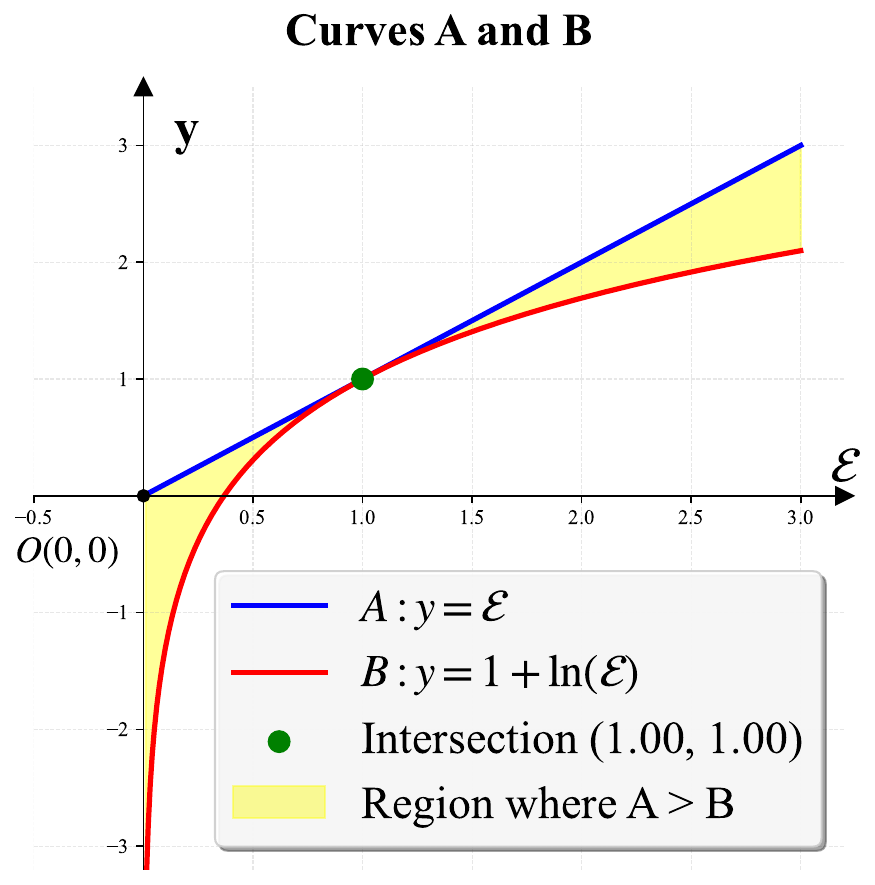}
    \caption{Asymptotic behavior of the terms $A$ (linear) and $B^*$ (logarithmic) as the mean error $\langle \mathcal{E} \rangle$ decreases. The intersection point where $A = B^*$ marks the threshold beyond which $\theta$ increases monotonically.}
    \label{fig:ABcompare}
    \vspace{-10pt}
\end{wrapfigure}

Substituting \(A - B > 0\) into the update rule \eqref{eq:update_theta}, and recalling that after the initial perturbation we have \(\cos(\theta_k) > 0\), we obtain
\begin{equation}
\theta_{k+1} = \theta_k + \underbrace{\frac{\eta}{2}\cos(\theta_k)}_{>0} \underbrace{(A - B)}_{>0}.
\end{equation}
Therefore, \(\partial \mathcal{L}/\partial \theta < 0\) for all subsequent steps, causing \(\theta\) to increase monotonically from its initial perturbed value (\(> -\pi/2\)) toward \(\pi/2\).

\paragraph{Step 4: Convergence and automatic loss transition}
As $\theta$ increases, $\sin\theta \to 1$, which implies
\begin{equation}
    \alpha(\theta) = \frac{1}{2}(1-\sin\theta) \to 0, \qquad 
    \beta(\theta) = \frac{1}{2}(1+\sin\theta) \to 1.
\end{equation}
Consequently, the weighting coefficient $\alpha(\theta)$ on the latitude-weighted term $A$ diminishes to zero, while $\beta(\theta)$ on the variable-aware term $B$ approaches one. This dynamic implements a soft curriculum schedule: the loss initially emphasizes the stable latitude-weighted error for coarse learning and automatically shifts focus to the uncertainty-aware term for fine-grained refinement as training progresses.

\vspace{0.7cm}
\paragraph{Statement.} The experimental verification of the theoretical results in \cref{subsec:proof1} (\cref{thm:multiconvs}) and \cref{subsec:proof2} (\cref{thm:loss}) is provided in \cref{subsec:add_proofresult}. The runtime comparison between the plain multi-scale module and the proposed multi-convs layer is detailed in \cref{tab:conv_performance}. Furthermore, the evolution of the balancing parameter $\theta$ during training is visualized in \cref{fig:lr_theta}.
\vspace{0.7cm}

\section{Related work}
\label{sec:related}

\subsection{Data-Driven Weather Forecasting}

Before the advent of data-driven methods, numerical weather prediction (NWP) was the predominant approach, producing forecasts by solving partial differential equations on high-resolution global grids~\cite{nwp, lynch2008origins, kalnay2002atmospheric}. While these methods deliver physically consistent and rigorously validated forecasts~\cite{molteni1996ecmwf, ritchie1995implementation}, they remain computationally prohibitive, especially at fine spatial scales.

The availability of large-scale atmospheric reanalysis datasets has since catalyzed a shift toward data-driven weather forecasting. Early exemplars such as FourCastNet~\cite{fourcastnet} and Pangu-Weather~\cite{panguweather} employed Fourier Neural Operators (FNO)~\cite{fno} and 3D Swin Transformers~\cite{liu2021swin}, respectively, to approximate atmospheric evolution. Subsequent research has largely bifurcated into two streams: (i) neural operators, including KNO~\cite{koopmanlab}, SFNO~\cite{sfno}, ClimODE~\cite{climode}, FANO~\cite{lin2026fano}, SMgNO~\cite{hu2025spherical}, and TNO~\cite{diab2025temporal}, that directly learn the temporal evolution operator; and (ii) neural network, such as FengWu~\cite{fengwu}, GraphCast~\cite{graphcast}, FuXi~\cite{fuxi}, GenCast~\cite{gencast}, Stormer~\cite{stormer}, ClimaX~\cite{climax}, OneForecast~\cite{oneforecast}, and VAMoE~\cite{vamoe}, which incorporate inductive biases tailored to atmospheric dynamics. Both categories achieve competitive forecast skill with substantially lower computational cost.

Notably, Aurora~\cite{aurora} addresses error accumulation by adopting a Low-Rank Adaptation (LoRA) finetuning strategy that assigns distinct parameters to different forecast time steps. While this approach can mitigate error drift, it introduces additional overhead from loading and saving multiple model states. In contrast, our method employs accumulative context finetuning, which shares the same parameters across all time steps and reduces error accumulation by retaining key-value pairs from previous steps, thereby avoiding unnecessary computational complexity.

\subsection{Long Context Learning}

The advent of large language models has spurred extensive research into long‑context learning, aiming to improve contextual coherence and accelerate inference over extended sequences~\cite{tikochinski2025incremental}. Existing methods can be broadly divided into two categories: (1) those that use dynamic sparse attention masks to skip redundant computations~\cite{dsflashattn, dota, ecsa, spatten}, and (2) those that prune the KV pairs adaptively based on input‑sequence characteristics~\cite{NEURIPS2023_cdaac2a0, pmlr-v202-sheng23a, ge2024model}. Both directions have successfully extended the effective context length while reducing computational overhead.

Inspired by these advances, recent work has adapted long‑context techniques to video generation, leading to three main paradigms: training‑free KV‑cache compression~\cite{yi2025deep, kim2025infinipotv}, efficient frameworks via sparse attention matrices~\cite{zhang2025spargeattention, zhang2025clusterattn}, and dual‑model architectures that decouple memory from generation~\cite{zhu2025memorize, shang2025love}. These approaches have improved both the speed and temporal consistency of long‑form video synthesis.

% Moving beyond language and video domains, this work pioneers the use of accumulative context learning with memory management in weather forecasting. By integrating autoregressive prediction with accumulative learning, our model retains information from historical time steps and uses it to inform current predictions. This design preserves long‑range temporal consistency and mitigates error accumulation, thereby extending the skillful forecast horizon.

Moving beyond language and video domains, this work introduces memory-augmented accumulative‑context learning to weather forecasting. By coupling autoregressive prediction with accumulative‑context modeling, the system retains relevant information from past time steps and leverages it to condition current forecasts. This design preserves long‑range temporal consistency and mitigates error accumulation, thereby extending the skillful forecast horizon.

% From the booming of Large Language Model(LLM), many works have been exploring the KV-Cache techniques to improve context consistency and accelerate the long-context LLM inference. The mainstream techniques can be divided into two paths, one of which is skipping computations with dynamic sparse attention masks~\cite{dsflashattn, dota, ecsa, spatten}, another is discarding KV cache based on the input sequence~\cite{NEURIPS2023_cdaac2a0, pmlr-v202-sheng23a, ge2024model}. Those two paths both successfully increase the reasoning length and reduce the computation complexity.

% Since the booming of KV cache compression in the long context inference, some recent works try to introduce the KV cache memory module to long-term video generation. There are three paradigms behind this technique, including training-free KV cache compression~\cite{yi2025deep, kim2025infinipotv}, efficient compression based on sparse attention matrix~\cite{zhang2025spargeattention, zhang2025clusterattn}, and dual-model architecture of memory and generation~\cite{zhu2025memorize}. Those methods are all make sense in accelerating video generation and increasing the temporal consistency.

% Apart from LLM and video generation, this work is the pioneer to introduce the KV cache memory module to long-term weather forecasting. By integrating the autoregressive method with memory module, this work can achieve long context consistency with low error accumulation.

\subsection{Basic Vision Models}

% Since the flourishing of deep learning, basic vision tasks have been one of the most important fields in artificial intelligence(AI). Many parts related to basic visual tasks have greatly promoted the development of deep learning, including Imagenet-K~\cite{deng2009imagenet} and AlexNet~\cite{alexnet}. From then on, many works about visional structure have been explored the vision tasks. Those methods can be classified into four categories, including Conv-based~\cite{wightman2021resnet, liu2022convnet}, Transformer-based~\cite{chen2021crossvit, liu2021swin, chu2021twins, touvron2022deit, cai2023efficientvit}, Conv-Transformer based~\cite{chen2021crossvit, hatamizadeh2023fastervit, li2022next}, and Mamba-based~\cite{liu2024vmamba, yu2025mambaout, efficientvmamba, hatamizadeh2025mambavision}. Those works all achieves great performance in basic vision tasks and promotes the development of deep learning in different stages.

Since the rise of deep learning, fundamental vision tasks have been a central and driving domain in artificial intelligence (AI). Landmark contributions, such as the ImageNet dataset~\cite{deng2009imagenet} and the AlexNet architecture~\cite{alexnet}, have profoundly shaped the field's evolution. Subsequent research has explored a wide variety of visual architectures, which can be broadly categorized into four paradigms: convolutional (Conv) networks~\cite{wightman2021resnet, liu2022convnet}, vision transformers (ViTs)~\cite{chen2021crossvit, liu2021swin, chu2021twins, touvron2022deit, cai2023efficientvit}, hybrid Conv‑Transformer designs~\cite{chen2021crossvit, hatamizadeh2023fastervit, li2022next}, and more recent Mamba‑based models~\cite{liu2024vmamba, yu2025mambaout, efficientvmamba, hatamizadeh2025mambavision}. Each of these lines of work has achieved strong performance on core vision benchmarks and has collectively advanced deep learning at different stages of its development.

% Same as the previous categories, our work is one of the conv-transformer structures. Our work is also a model that focuses on multi-scale module and efficient structure. Different from the previous work that adopts the plain multi-scale module directly, our work proposes a novel layer that extracts multi-scale information with one convolution layer. To achieve this effect, this work redesigns the forward path and rewrites the backward path of the traditional convolution layer. Compared to the previous plain multi-scale module, this work proposes a novel multi-convs layer, which can accelerates the forward and backward paths with 5.69 times.

Following the hybrid Conv-Transformer paradigm, our model likewise prioritizes multi-scale representation and computational efficiency. Unlike previous works that directly employ plain multi-scale modules, we propose a novel layer capable of capturing multi-scale information through a single convolutional operation. To realize this capability, we redesign both the forward pass and the backward propagation of the standard convolution layer. Compared with conventional multi-scale modules, our proposed multi-convs layer accelerates the forward and backward computations by a factor of 5.69×.

\textit{Comparison with structural re‑parameterization methods.} Prior work on structural re‑parameterization has focused on merging multi‑branch modules into a single operation during inference~\cite{ding2021repvgg}. Although later extensions claim to support training~\cite{huang2022dyrep}, a fundamental limitation remains: the re‑parameterized module effectively reduces to a single operation during backpropagation, losing its original multi‑scale representation. As a result, such re‑parameterized versions are not functionally equivalent to true multi‑branch designs. In contrast, our multi‑convs layer implements a CUDA‑level multi‑branch gradient‑update mechanism that maintains independent gradient flows for each kernel throughout training. This ensures exact equivalence, in both forward and backward passes, between the conventional multi‑scale module and our proposed layer, while preserving the representational benefits of multi‑scale feature extraction.

\begin{algorithm}[t]
\caption{The Proposed Model in Weather Forecasting}
\label{alg:stcast_global}
\renewcommand{\algorithmicrequire}{\textbf{Input:}}
\renewcommand{\algorithmicensure}{\textbf{Output:}}

\begin{algorithmic}[1]
    
    \REQUIRE Atmospheric variables $\mathbf{X}^t$ at timestep $t$
    \ENSURE Forecasted atmospheric variables $\mathbf{X}^{t+1}$ at timestep $t+1$
    
    \STATE Apply high-stride convolution for patch embedding: $\mathbf{X}^t = \text{Conv}_{p \times p}(\mathbf{X}^t)$
    \STATE Add positional embedding: $\mathbf{X}^t = \mathbf{X}^t + \mathbf{P}$
    % \STATE Project to latent space via MLP: $\mathbf{X}^t = \text{MLP}(\mathbf{X}^t)$
    \STATE 

    \STATE \textbf{Pruning}
    \FOR{each $k \in range(4)$ }
        \STATE $\mathbf{X}^t_{\text{down}}= \mathbf{X}^t_{\text{res}} = \text{DownSample}(\mathbf{X}^t)$
        \STATE $\mathbf{X}^t = \text{Cross-Attention}(\mathbf{X}^t_{\text{down}}, \mathbf{X}^t)$
        \STATE $\mathbf{X}^t = \text{Self-Attention}(\mathbf{X}^t)$
    \ENDFOR
    \STATE 
    
    \STATE \textbf{Processor}

    \FOR{each $i \in $ [blocks' number]}
        \IF {$i\%2 == 0$}
            % \STATE $j = i // 2$
            \STATE Windows Size = (4, 4) \textit{or} (2, 8) \textit{or} (8, 2)
            % \IF {$j\%3 == 0$}
            %     \STATE Windows Size = (4, 4)
            % \ELSIF{$j\%3 == 1$}
            %     \STATE Windows Size = (2, 8)
            % \ELSE
            %     \STATE Windows Size = (8, 2)
            % \ENDIF
            \STATE Attention = Windows-EMFormer(Windows Size)
        \ELSE
            \STATE Attention = EMFormer()
        \ENDIF
        
        \STATE Apply multi-head attention with residual connection: $\mathbf{A}^t = \text{LN}(\text{Attention}(\mathbf{X}^t)) + \mathbf{X}^t$
            \STATE Apply MLP with residual connection: $\mathbf{X}^{t+1} = \text{MLP}(\mathbf{A}^t) + \mathbf{A}^t$
            
    \ENDFOR
    \STATE
    
    \STATE \textbf{Recovering}
    \FOR{each $k \in range(4)$ }
        \STATE $\mathbf{X}^{t+1}_{up} = \text{UpSample}(\mathbf{X}^{t+1})+\mathbf{X}^t_{\text{res}}$
        \STATE $\mathbf{X}^{t+1} = \text{Cross-Attention}(\mathbf{X}^{t+1}_{up}, \mathbf{X}^{t+1})$
        \STATE $\mathbf{X}^{t+1} = \text{Self-Attention}(\mathbf{X}^{t+1})$
    \ENDFOR
    \STATE 
    
    \STATE Reconstruct atmospheric variables to longitude-latitude grids with prediction head: $\mathbf{X}^{t+1} = \text{Linear}(\mathbf{X}^{t+1})$
\end{algorithmic}
\end{algorithm}

\begin{algorithm}[t]
\caption{Multi-Convs Layer}
\label{alg:multi_convs_layer}
\renewcommand{\algorithmicrequire}{}
\begin{algorithmic}[1]
\REQUIRE $\mathbf{input}\in\mathbb{R}^{B\times C_{in}\times H\times W}, \mathbf{weight1}\in\mathbb{R}^{C_{out}\times C_{in}\times 1\times 1}, \mathbf{weight3}\in\mathbb{R}^{C_{out}\times C_{in}\times 3\times 3}, \mathbf{weight5}\in\mathbb{R}^{C_{out}\times C_{in}\times 5\times 5} $
% \ENSURE Forecasted atmospheric variables $\mathbf{X}^{t+1}$ at timestep $t+1$
% \STATE \textbf{TripleConvForward}
% \STATE Extract dimensions: $B, C_{in}, H, W \gets \text{shape}(\mathbf{input})$, $C_{out} \gets \text{shape}(\mathbf{weight1})[0]$
% \STATE Initialize $\mathbf{output} \gets$ zeros($B, C_{out}, H, W$)
% \STATE \textbf{Return} $\mathbf{output}$
% \STATE
% \STATE \textbf{TripleConvBackward}
% \STATE $\mathbf{grad\_output}, \mathbf{input}, \mathbf{weights}$
% \STATE \textbf{Parallel weight gradient computation:}
% \STATE Launch kernel for $\mathbf{grad\_weight1}, \mathbf{grad\_weight3}, \mathbf{grad\_weight5}$
% \STATE \textbf{Parallel input gradient computation:}
% \STATE Launch kernel for $\mathbf{grad\_input}$
% \STATE Synchronize CUDA device
% \STATE \textbf{Return} gradients
% \STATE
\STATE \textbf{ForwardKernel}
\STATE Each thread computes one output element $(n, c_{out}, h, w)$
\STATE Initialize $\text{result} \gets 0$
\FOR{$c_{in} = 0$ to $C_{in}-1$}
    \FOR{each spatial offset $(kh, kw)$}
        % \STATE Compute input coordinates
        % \IF{within bounds}
        \STATE Compute combined weight = $\mathbf{weight1} + \mathbf{weight3} + \mathbf{weight5}$
        \STATE $\text{result} \gets \text{result} + \text{input} \times \text{combined\_weight}$
        % \ENDIF
    \ENDFOR
\ENDFOR
\STATE
\STATE \textbf{WeightGradKernel}
\STATE Each thread processes one output gradient element
\STATE Load gradient value $\delta$
\FOR{$c_{in} = 0$ to $C_{in}-1$}
    \FOR{each spatial offset $(kh, kw)$}
        % \IF{within bounds}
        % \STATE Load input value $x$
        \STATE Compute contribution $g \gets \delta \times \text{input}$
        \STATE \textbf{Parallel accumulation:} Add $g$ to $\mathbf{grad\_weight1}$, $\mathbf{grad\_weight3}$, $\mathbf{grad\_weight5}$ using atomic operations
        % \ENDIF
    \ENDFOR
\ENDFOR
\STATE
\STATE \textbf{InputGradKernel}
\STATE Each thread computes one input gradient element $(n, c_{in}, h, w)$
\STATE Initialize $\text{grad} \gets 0$
\FOR{$c_{out} = 0$ to $C_{out}-1$}
    \FOR{each spatial offset $(kh, kw)$}
        % \STATE Compute output coordinates
        % \IF{within bounds}
        \STATE Load gradient value $\delta$
        \STATE Compute combined weight from three scales
        \STATE $\text{grad} \gets \text{grad} + \delta \times \text{combined\_weight}$
        % \ENDIF
    \ENDFOR
\ENDFOR
\end{algorithmic}
\end{algorithm}

\begin{algorithm}[t]
\caption{KV Values Pruning in Accumulative Context Finetuning }
\label{alg:h2o-simple}
\begin{algorithmic}[1]

\STATE The maximum number of KV pairs in cache: $N$
\STATE Project input to key-value pairs: $Q, K_{\text{new}}, V_{\text{new}} \gets \text{LinearProj}(x)$
\STATE Length of K, V values: $L \gets \text{len}(K_{\text{new}})$ 
\STATE
\IF{$(K \neq \text{None}) \And (V \neq \text{None}) $}
    \STATE $K_{\text{cache}}, V_{\text{cache}} \gets K, V$
    \STATE $T \gets \text{len}(K_{\text{cache}})$
    
    \IF{$T \geq N*L$}    % \COMMENT{Cache pruning condition}
        \STATE Get historical importance: $\text{scores} \gets \text{importance\_scores}$ 
        \STATE Most important tokens: $\text{Index}_{topk} \gets \text{TopK}(\text{scores}[0 : T-L], (N-2)*L)$ 
        \STATE Most recent tokens: $\text{Index}_{recent} \gets [T-L : T]$ 
        \STATE Combine the important and recent tokens: $\text{Index}_{keep} \gets \text{Unique}(\text{Index}_{topk} \cup \text{Index}_{recent})$
        \STATE Preserve order: $\text{Index}_{keep} \gets \text{Sort}(\text{Index}_{keep})$ 
        \STATE $K_{\text{cache}} \gets K_{\text{cache}}[:,\text{Index}_{keep}]$
        \STATE $V_{\text{cache}} \gets V_{\text{cache}}[:,\text{Index}_{keep}]$
    \ENDIF
    
    \STATE $K \gets [K_{\text{cache}}, K_{\text{new}}]$
    \STATE $V \gets [V_{\text{cache}}, V_{\text{new}}]$
\ELSE
    \STATE $K \gets K_{\text{new}}$
    \STATE $V \gets V_{\text{new}}$
\ENDIF
\STATE

\STATE $\text{output} \gets \text{Attention}(Q, K, V)$

\STATE $\text{attn} \gets \text{Softmax}(QK^\top / \sqrt{d})$
\STATE Average across heads and batch: $\text{imp} \gets \text{Mean}(\text{attn})$ 
\STATE $\text{old} \gets \text{importance\_scores}$
\STATE $\text{importance\_scores} \gets 0.9 \times \text{old} + 0.1 \times \text{imp}$

\STATE $\text{Output: }$ $(K, V)$

\end{algorithmic}
\end{algorithm}

\section{Detailed Structure}
\label{sec:add_structure}

As shown in \cref{alg:stcast_global}, the global weather forecasting framework comprises three core components: Pruning, Processor, and Recovering, and two supplementary settings: patch embedding and prediction head. In this work, the Pruning, EMFormer, and Recovering implementations follow Flash-Attention~\cite{flashattention}. For the forecasting task, the AI model $\Phi$ predicts future atmospheric states $\mathbf{X}^{t+1}$ from historical fields $\mathbf{X}^{t}$ as $\mathbf{X}^{t+1}=\Phi(\mathbf{X}^{t})$. Detailed configurations for all three elements are provided below.

\subsection{Pruning}

% In the initial stage, the atmospheric variables across pressure levels are organized into a 3D tensor \(\mathbf{X}^{t}\in \mathbb{R}^{H\times W\times N}\), where $H$ and $W$ denote the global grid height and width, respectively, and $N$ is the number of variables. The original variables are encoded into 2D patch-grid with \(\mathbf{X}^{t}\in \mathbb{R}^{HW\times C}\), where \(C\) is the embedding dimension.

% To reduce the computation cost, this work applies a series of downsampling operations to the input features with cross-attention and self-attention. At first, the input feature is reshaped to \(\mathbf{X}^{t}\in \mathbb{R}^{\frac{1}{2}HW\times 2C}\), and projected to reduce the embedding dimension with Linear layer. The output feature in one downsampling operation is \(\mathbf{X}^{t}_{down}\in \mathbb{R}^{\frac{1}{2}HW\times C}\).

% To supply the lost information in the mentioned doansampling, this work introduces a cross-attention module to fuse the downsampled feature and the original feature. The process is formulated as:
% \begin{equation}
%     \mathbf{X}^t = \text{Cross-Attention}(\mathbf{X}^t_{down}, \mathbf{X}^t).
% \end{equation}
% To further enhance the correlation of the downsampled feature, a series of self-attention modules are introduced to it. The process is \(\mathbf{X}^t = \text{Self-Attention}(\mathbf{X}^t)\).

At the encoding stage, atmospheric variables across pressure levels are structured as a 3D tensor \(\mathbf{X}^{t} \in \mathbb{R}^{H \times W \times N}\), where \(H\) and \(W\) denote the spatial dimensions of the global grid, and \(N\) is the number of variables. These variables are first embedded into a 2D patch representation \(\mathbf{X}^{t} \in \mathbb{R}^{HW \times C}\), with \(C\) being the embedding dimension.

% To lower computational overhead, a series of downsampling steps is applied, each combining cross‑attention and self‑attention. First, the input feature is reshaped to \(\mathbf{X}^{t} \in \mathbb{R}^{\frac{1}{2}HW \times 2C}\) and then projected via a linear layer to reduce the channel dimension, yielding a downsampled feature \(\mathbf{X}^{t}_{\text{down}} \in \mathbb{R}^{\frac{1}{2}HW \times C}\). At the same time, \(\mathbf{X}^{t}_{\text{down}}\) also serves as the residual feature \(\mathbf{X}^{t}_{\text{res}}\), that is added to the upsampled features in the recovering module with a residual connection.

% To reduce computational cost, a series of downsampling operations are applied, each integrating cross‑attention and self‑attention. First, the input feature is reshaped to \(\mathbf{X}^{t} \in \mathbb{R}^{\frac{1}{2}HW \times 2C}\) and then projected through a linear layer, compressing the channel dimension to produce a downsampled feature \(\mathbf{X}^{t}_{\text{down}} \in \mathbb{R}^{\frac{1}{2}HW \times C}\). This downsampled feature is also retained as a residual signal \(\mathbf{X}^{t}_{\text{res}}\), which is later added to the corresponding upsampled features in the recovering module via a residual connection.

To reduce computational cost, a series of downsampling operations are applied, each integrating cross‑attention and self‑attention. First, the input feature is reshaped to \(\mathbf{X}^{t} \in \mathbb{R}^{\frac{1}{2}HW \times 2C}\) and then projected through a linear layer parameterized by a weight matrix \(\mathbf{W} \in \mathbb{R}^{2C \times C}\), compressing the channel dimension to produce a downsampled feature \(\mathbf{X}^{t}_{\text{down}} \in \mathbb{R}^{\frac{1}{2}HW \times C}\). This linear transformation effectively halves the channel dimension while preserving essential spatial–spectral information. The downsampled feature is also retained as a residual signal \(\mathbf{X}^{t}_{\text{res}}\), which is later added to the corresponding upsampled features in the recovering module via a residual connection.

To preserve information that may be lost during downsampling, a cross‑attention module is introduced to fuse the downsampled feature with the original input:
\begin{equation}
    \mathbf{X}^t = \text{Cross‑Attention}(Q=\mathbf{X}^t_{\text{down}}, K=\mathbf{X}^t, V=\mathbf{X}^t).
\end{equation}
Subsequently, multiple self‑attention blocks are applied to the fused feature to strengthen its internal correlations:
\begin{equation}
    \mathbf{X}^t = \text{Self‑Attention}(\mathbf{X}^t).
\end{equation}

\subsection{EMFormer}

% The medium part of this work consists of a series of EHTransformer blocks, each comprising multi-head attention, a MLP module, layer normalization, and residual connections. The operations within a single block can be formally expressed as:
% \begin{align}
%     \mathbf{A}^{t}&= \text{LN}(\text{Attention}(\mathbf{X}^{t}))+\mathbf{X}^{t},\\
%     \mathbf{X}^{t+1}&= \text{LN}(\text{TMoE}(\mathbf{A}^{t}))+\mathbf{A}^{t},
% \end{align}
% Where $\text{Attention}$ denotes the efficient multi-scale attention mechanism, $\text{LN}$ represents layer normalization, and $\text{MLP}$ refers to the Multi-Layer Perceptron module.

The core component of the architecture consists of a sequence of EMFormer blocks. Each block contains a multi‑head attention module, an MLP (Multi‑Layer Perceptron) module, layer normalization, and residual skip connections. The operations within one block are formulated as:
\begin{align}
    \mathbf{A}^{t} &= \text{LN}(\text{EMFormer}/\text{Windows-EMFormer}(\mathbf{X}^{t})) + \mathbf{X}^{t},\\
    \mathbf{X}^{t+1} &= \text{LN}(\text{MLP}(\mathbf{A}^{t})) + \mathbf{A}^{t},
\end{align}
where \(\text{EMformer}\) denotes the efficient multi‑scale attention mechanism with the multi-convs layer, \(\text{LN}\) is layer normalization, and \(\text{MLP}\) refers to the feed‑forward module. The pseudo-code of multi-convs layer is provided in \cref{alg:multi_convs_layer}.

% Following the design principles of FlashAttention ~\cite{flashattention} and VA-MoE ~\cite{vamoe}, we adopt an alternating strategy that combines window-based attention and global self-attention. This hybrid approach enables the model to effectively capture both local and global dependencies in the input distribution.

% Drawing upon the designs of FlashAttention~\cite{flashattention} and VA‑MoE~\cite{vamoe}
% Considering the various sizes of different atmospheric patterns, we employ a hybrid attention strategy that alternates between window‑based attention and global self‑attention. This design balances local feature interaction with long‑range dependency modeling, enabling the network to capture both fine‑grained patterns and broader contextual relationships in the atmospheric data.
To accommodate the multi‑scale nature of atmospheric structures, we adopt a hybrid attention mechanism that interleaves window‑based attention with global self‑attention. This design effectively balances local feature extraction with long‑range dependency modeling, allowing the network to capture both fine‑grained details and broader contextual relationships within atmospheric data.

\subsection{Recovering}

% The recovering module is the reverse process of the pruning module. Thus, we also need a series of upsampling operations to recover the hidden embedding to the original size. At each step, the hidden feature \(\mathbf{X}^{t+1} \in \mathbb{R}^{\frac{1}{2}HW \times C}\) is reshaped to \(\mathbf{X}^{t+1}_{\text{up}} \in \mathbb{R}^{HW \times \frac{1}{2}C}\). And then the upsampled feature is projected via a linear layer to increase the channel dimension, yielding a upsampled feature \(\mathbf{X}^{t}_{\text{up}} \in \mathbb{R}^{HW \times C}\). To supply the low-level feature from the first few layers, there is a residual connection between the pruning and recovering modules. The process is formulated as: $\mathbf{X}^{t+1}_{up} = \text{UpSample}(\mathbf{X}^{t+1})+\mathbf{X}^t_{\text{res}}$.

% Same as the pruning module, there are also a cross-attention and a series of self-attention in the recovering stage. The process is formulated as:
% \begin{equation}
%     \mathbf{X}^{t+1} = \text{Cross‑Attention}(\mathbf{X}^{t+1}_{\text{up}}, \mathbf{X}^{t+1}),
% \end{equation}
% \begin{equation}
%     \mathbf{X}^{t+1} = \text{Self‑Attention}(\mathbf{X}^{t+1}).
% \end{equation}

% In the last, a prediction head is introduced to project the feature from latent space to the original global range. The prediction head is a linear layer.

The recovering module mirrors the pruning module in reverse order, employing a series of upsampling stages to restore the hidden representation to its original spatial size. At each step, the hidden feature \(\mathbf{X}^{t+1} \in \mathbb{R}^{\frac{1}{2}HW \times C}\) is reshaped to \(\mathbf{X}^{t+1}_{\text{up}} \in \mathbb{R}^{HW \times \frac{1}{2}C}\) and then passed through a linear projection to expand the channel dimension, yielding \(\mathbf{X}^{t+1}_{\text{up}} \in \mathbb{R}^{HW \times C}\). To retain fine‑grained details from earlier layers, a residual connection links the corresponding stages of the pruning and recovering branches:
\begin{equation}
\mathbf{X}^{t+1}_{\text{up}} = \text{UpSample}(\mathbf{X}^{t+1}) + \mathbf{X}^t_{\text{res}}.
\end{equation}

As in the pruning stage, cross‑attention and multiple self‑attention blocks are applied to refine the upsampled feature:
\begin{align}
    \mathbf{X}^{t+1} &= \text{Cross‑Attention}(Q=\mathbf{X}^{t+1}_{\text{up}}, K=\mathbf{X}^{t+1}, V=\mathbf{X}^{t+1}), \\
    \mathbf{X}^{t+1} &= \text{Self‑Attention}(\mathbf{X}^{t+1}).
\end{align}

Finally, a prediction head, implemented as a linear layer, projects the latent representation back to the original variable space to produce the forecast output.

\begin{table*}[t]
  \centering
  \caption{A summary of atmospheric variables. The 13 levels are 50, 100, 150, 200, 250, 300, 400, 500, 600, 700, 850, 925, 1000 hPa. `Single' denotes the variables under earth's surface. }
  \small
  % \footnotesize
  % \resizebox{\linewidth}{!}{
    \begin{tabular}{clcc}
    \toprule
    \textit{Name} & \multicolumn{1}{c}{Description} & Levels  & Time   \\
    \midrule

    Z & Geopotential & 13   & 1979-2020  \\
    Q & Specific humidity & 13   & 1979-2020 \\
    U & x-direction wind & 13   & 1979-2020 \\
    V & y-direction wind & 13   & 1979-2020 \\
    T & Temperature & 13   & 1979-2020 \\
    
    \midrule
    t2m & Temperature at 2m height & Single   & 1979-2020 \\
    u10 & x-direction wind at 10m height & Single   & 1979-2020 \\
    v10 & y-direction wind at 10m height & Single   & 1979-2020 \\
    msl & Mean sea-level pressure & Single   & 1979-2020 \\
    sp & Surface pressure & Single   & 1979-2020 \\
    
    \bottomrule
    \end{tabular}
    % }%
  \label{tab:vars}
\end{table*}

\begin{table}[t]
\centering
\small
\caption{Implementation details on weather forecasting.}
\begin{tabular}{l|ll|ll}
\toprule
\textbf{Category} & \textbf{Parameter} & \textbf{Value} & \textbf{Parameter} & \textbf{Value} \\
\midrule
\multirow{3}{*}{Model Architecture} & Input Size & $128\times 256$ / $721\times 1440$ & Input Channels & 70 \\
 & Output Channels & 70  & Number of Blocks & 24     \\

 & Dimension & 768 & Patch Size & 2/4 \\
\midrule
\multirow{3}{*}{Training Configuration} & Optimizer & AdamW &  LR Scheduler & CosineAnnealingLR \\
 & Initial LR & $2 \times 10^{-4}$  & Minimum LR & $1 \times 10^{-7}$ \\
 & Maximum Epochs & 100 & Loss Function & Hybrid Loss \\
\midrule
\multirow{3}{*}{Finetuning Configuration} & Optimizer & AdamW & LR Scheduler & CosineAnnealingLR \\
 & Initial LR & $5 \times 10^{-5}$  & Minimum LR & $1 \times 10^{-7}$ \\
 & Maximum Epochs & 50 & Loss Function & Hybrid Loss \\
\midrule

\multirow{3}{*}{Data Settings} & Input Steps & 1  & Training Output Steps & 1 \\
 & Testing Output Steps & 40 & Training Period & [1979, 2020] \\
 & Test Period & [2021, 2021] & Grid Resolution & 1.4\degree/0.25\degree \\
\midrule
\multirow{3}{*}{Experimental Setup} & Batch Size & 1 & Global Batch Size & 16 \\
 & \#Data Workers & 20 & Mixed Precision & Enabled \\
 & World Size & 16  &  &\\
\bottomrule
\end{tabular}
\label{tab:params}
\end{table}

\begin{table}[t]
\centering
\caption{Implementation details for ImageNet-1K dataset.}
\label{tab:implementation_details}
\small % 适应小字体
\begin{tabular}{lr|lr}
\toprule
\textbf{Configuration} & \textbf{Value} & \textbf{Configuration} & \textbf{Value} \\
\midrule
\rowcolor{lightgray}
\multicolumn{4}{c}{\textbf{Data Configuration}} \\
Dataset & ImageNet-1K & Batch size (per GPU) & 128 \\
Number of workers & 8 & Image size schedule & 128, 160, 192, 224  \\
Training interpolation & Random & Validation interpolation & Bicubic \\
Validation crop ratio & 1.0 & & \\
\addlinespace

\rowcolor{lightgray}
\multicolumn{4}{c}{\textbf{Data Augmentation}} \\
RandAugment & $N=2$, $M=5$ & Random erase & $p=0.2$ \\
Mixup & $\alpha=0.2$, probability=1.0 & CutMix & $\alpha=0.2$, probability=1.0 \\
Label smoothing & 0.1 &  BCE loss & Enabled \\
\addlinespace

\rowcolor{lightgray}
\multicolumn{4}{c}{\textbf{Training Schedule}} \\
Total epochs & 300 & Base learning rate & 1.5e-4 \\
Warmup epochs & 20 &  Warmup learning rate & 0.0 \\
Learning rate schedule & Cosine decay & Optimizer & AdamW \\
Optimizer parameters & $\beta_1=0.9$, $\beta_2=0.999$, $\epsilon=1e-8$ &  Weight decay & 0.1 \\
Weight decay exclusion & Norm layers, bias terms & Gradient clipping & 2.0 \\
EMA decay & 0.9998  & & \\
\addlinespace

\rowcolor{lightgray}
\multicolumn{4}{c}{\textbf{Model Configuration}} \\
Input resolution & 224 &  Drop path rate & 0.1 (linear decay)  \\
Stochastic depth & Enabled &  Dropout rate & 0.0 \\
\addlinespace

\rowcolor{lightgray}
\multicolumn{4}{c}{\textbf{Additional Techniques}} \\
BatchNorm reset & Enabled &  Reset batch size & 16,000 samples \\
MESA threshold & 0.25 & Reset batch size per iteration & 100   \\
MESA ratio & 2.0   &  &\\

\bottomrule
\end{tabular}
\end{table}

\begin{table*}[t]
\centering
\addtolength{\tabcolsep}{-2pt}
\caption{Architecture configurations of EMFormer backbone models. }
% \resizebox{1.0\linewidth}{!}{
\small
\begin{tabular}{c|c|c|c|c}
\toprule
 & \begin{tabular}[c]{@{}c@{}}Output Size \\ (Downs. Rate)\end{tabular} & EMFormer-T  & EMFormer-S & EMFormer-B  \\
\midrule

\multirow{2}{*}{Stem} & \multirow{2}{*}{\begin{tabular}[c]{@{}c@{}}112$\times$112\\ ($\times\frac{1}{2}$)\end{tabular}} & $\text{Conv-BN-Act}, \text{C:32, S:2}$   & $\text{Conv-BN-Act}, \text{C:32, S:2}$  &$\text{Conv-BN-Act}, \text{C:32, S:2}$    \\

% \cline{3-5}
& & $(\text{ResBlock}, \text{C:32})$ $\times$ 2   & $(\text{ResBlock}, \text{C:32})$ $\times$ 2    & $(\text{ResBlock}, \text{C:32})$ $\times$ 2    \\

\midrule
\multirow{2}{*}{Stage 1} & \multirow{2}{*}{\begin{tabular}[c]{@{}c@{}}56$\times$56\\ ($\times\frac{1}{4}$)\end{tabular}} & Pruning, C:64, S:2  & Pruning, C:64, S:2  & Pruning, C:64, S:2  \\

% \cline{3-5}
& & $(\text{ResBlock}, \text{C:64})$ $\times$ 2   & $(\text{ResBlock}, \text{C:64})$ $\times$ 2    & $(\text{ResBlock}, \text{C:64})$ $\times$ 6   \\

\midrule
\multirow{2}{*}{Stage 2} & \multirow{2}{*}{\begin{tabular}[c]{@{}c@{}}28$\times$28\\ ($\times\frac{1}{8}$)\end{tabular}} & Pruning, C:128, S:2  & Pruning, C:128, S:2  & Pruning, C:128, S:2  \\

% \cline{3-5}
& & $(\text{ResBlock}, \text{C:128})$ $\times$ 2   & $(\text{ResBlock}, \text{C:128})$ $\times$ 4    & $(\text{ResBlock}, \text{C:128})$ $\times$ 6   \\

\midrule
\multirow{3}{*}{Stage 3} & \multirow{3}{*}{\begin{tabular}[c]{@{}c@{}}14$\times$14\\ ($\times\frac{1}{16}$)\end{tabular}} & Pruning, C:256, S:2 & Pruning, C:256, S:2  & Pruning, C:256, S:2  \\

% \cline{3-5}
& & $(\text{EMFormer}, \text{C:256})$ $\times$ 2  & $(\text{EMFormer}, \text{C:256})$ $\times$ 5    & $(\text{EMFormer}, \text{C:256})$ $\times$ 6  \\

& & $(\text{Windows-EMFormer}, \text{C:256})$ $\times$ 2  & $(\text{Windows-EMFormer}, \text{C:256})$ $\times$ 5    & $(\text{Windows-EMFormer}, \text{C:256})$ $\times$ 6  \\

\midrule
\multirow{3}{*}{Stage 4} & \multirow{3}{*}{\begin{tabular}[c]{@{}c@{}}7$\times$7\\ ($\times\frac{1}{32}$)\end{tabular}} & Pruning, C:512, S:2  & Pruning, C:512, S:2 & Pruning, C:512, S:2   \\

% \cline{3-5}
& & $(\text{EMFormer}, \text{C:512})$ $\times$ 1   & $(\text{EMFormer}, \text{C:512})$ $\times$ 2    & $(\text{EMFormer}, \text{C:512})$ $\times$ 6  \\

& & & $(\text{Windows-EMFormer}, \text{C:512})$ $\times$ 2    & $(\text{Windows-EMFormer}, \text{C:512})$ $\times$ 6  \\

\bottomrule
\end{tabular}
% }
% \normalsize

\label{tab:emformer-arch}
\end{table*}

\section{Dataset Details and Implementation Details}
\label{sec:add_dataset}

\subsection{Dataset Details}

We evaluate the proposed model using three benchmark datasets spanning meteorological and vision tasks. For atmospheric field, this work employs the ERA5 global reanalysis dataset~\cite{era5}. For vision benchmarks, we adopt ImageNet‑1K~\cite{deng2009imagenet} for classification and ADE20K~\cite{ade20k} for segmentation, which allow us to assess the architecture’s generalization beyond atmospheric modeling.

% In this work, we conduct experiments on a popular weather dataset, \textit{i.e.}, ERA5\footnote{\url{https://cds.climate.copernicus.eu/}}~\cite{era5}, provided by the ECMWF~\cite{molteni1996ecmwf}. As shown in \cref{tab:vars}, ERA5 dataset is a reanalysis atmospheric dataset, consisting of the atmospheric variables from 1979 to the present day with a 0.25\degree spatial resolution with $721 \times 1440$. The atmospheric variables include 5 upper-air variables (Z, Q, U, V, T) on 13 levels and 5 surface variables (T2M, U10, U10, MSL, SP). The model is trained on 70 variables ($5\times 13 + 5$) with 40 years atmospheric dataset from 1979 to 2020 with 0.25\degree spatial resolution. The model is tested on 1-year dataset in 2021 wiht the same resolution.

\textbf{ERA5.} In this work, experiments are conducted using the widely employed ERA5 reanalysis dataset\footnote{\url{https://cds.climate.copernicus.eu/}}~\cite{era5}, produced by the European Centre for Medium-Range Weather Forecasts (ECMWF). As detailed in \cref{tab:vars}, ERA5 provides a comprehensive atmospheric record from 1979 onward at a spatial resolution of 0.25\degree (global grid dimensions of \(721 \times 1440\)). The selected variables comprise 5 upper‑air quantities, geopotential (Z), specific humidity (Q), zonal wind (U), meridional wind (V), and temperature (T), across 13 pressure levels, together with 5 surface variables: 2‑m temperature (T2M), 10‑m zonal wind (U10), 10‑m meridional wind (V10), mean sea‑level pressure (MSL), and surface pressure (SP). This results in a total of 70 input variables (\(5 \times 13 + 5\)). The model is trained on data spanning 1979–2020 and evaluated on the held‑out year 2021, both at the native 0.25\degree resolution.

\textbf{Imagenet-1K}~\cite{deng2009imagenet} consists of approximately 1.28 million high-resolution training images and 50,000 validation images, manually annotated into 1,000 distinct object categories. 

\textbf{ADE20K}~\cite{ade20k} is a challenging task for semantic segmentation, which contains 150 categories, including 25574 natural images for training and 2000 images for validation.

\subsection{Data Processing}
\label{subsec:process}

% To address disparities among variables, all model inputs are normalized to ensure consistency. Using the training dataset spanning 1979–2019, we compute the mean and standard deviation for each variable. Normalization is then performed by subtracting the respective mean and dividing by the corresponding standard deviation.

To account for differences in scale and distribution across variables, all model inputs are standardized. Using the training data from 1979–2019, we compute the mean and standard deviation for each variable. Each input is then normalized by subtracting its mean and dividing by its standard deviation.

\subsection{Implementation Details}

\textbf{Weather forecasting.}

% The main structure of this work follows the backbone of Flash Attention ~\cite{flashattention}. In the global forecasting stage, we train the whole model. While in the regional forecasting stage, we only need to train the Spatial-Aligned Attention(SAA) module and freeze the main structure. The detailed training parameters are provided in \cref{tab:params}.

Our architecture builds upon the Flash Attention backbone~\cite{flashattention}. For weather forecasting, the entire model is trained end‑to‑end. Detailed training hyperparameters are provided in \cref{tab:params}.

\textbf{Image classification.}

% Our architecture builds upon the Flash Attention backbone~\cite{flashattention}. For image classification on Imagenet-1K, the entire model is trained end‑to‑end. Detailed training hyperparameters are provided in \cref{tab:implementation_details}. And the backbones of EMFormer-T, EMFormer-S, and EMFormer-B are detailed in \cref{tab:emformer-arch}.

Our implementation adopts the FlashAttention architecture as its core backbone~\cite{flashattention}. For the ImageNet‑1K image‑classification task, the full model is trained in an end‑to‑end manner. Comprehensive training hyperparameters (e.g., learning rates, schedules, regularization) are listed in \cref{tab:implementation_details}. The configurations of the three model variants, EMFormer‑T (Tiny), EMFormer‑S (Small), and EMFormer‑B (Base), are detailed in \cref{tab:emformer-arch}.

\section{Evaluation Metric}
\label{sec:add_metric}

\subsection{Weather Forecasting}

We evaluate forecasting performance using three standard metrics: Root Mean Square Error (RMSE), Normalized RMSE (NRMSE), and Anomaly Correlation Coefficient (ACC). To account for the convergence of meridians toward the poles, all metrics incorporate a latitude‑dependent weight \(L_i\). The definitions are:

\begin{align}
    \text{RMSE}(t) &=\sqrt{  \frac{  \sum_{i=1}^{N_{\text{lat}}} \sum_{j=1}^{N_{\text{lon}}} L_{i} \bigl(\hat{X}_{i,j}^t - X_{i,j}^t\bigr)^{2} }{N_{\text{lat}}\times N_{\text{lon}}} }, \\[6pt]
    \text{NRMSE}(t) &=\sqrt{  \frac{  \sum_{i=1}^{N_{\text{lat}}} \sum_{j=1}^{N_{\text{lon}}} L_{i} \bigl(\hat{Z}_{i,j}^t - Z_{i,j}^t\bigr)^{2} }{N_{\text{lat}}\times N_{\text{lon}}} }, \quad \text{with} \quad Z_{i,j}^t = \frac{X_{i,j}^t - \mu}{\sigma}, \quad \hat{Z}_{i,j}^t = \frac{\hat{X}_{i,j}^t - \mu}{\sigma}, \\[6pt]
    \text{ACC}(t) &=\frac{ \sum_{i=1}^{N_{\text{lat}}} \sum_{j=1}^{N_{\text{lon}}} L_{i} \; \hat{X}_{i,j}^t \; X_{i,j}^t }{ \sqrt{ \sum_{i=1}^{N_{\text{lat}}} \sum_{j=1}^{N_{\text{lon}}} L_{i} \bigl(\hat{X}_{i,j}^t\bigr)^2 \; \times \; \sum_{i=1}^{N_{\text{lat}}} \sum_{j=1}^{N_{\text{lon}}} L_{i} \bigl(X_{i,j}^t\bigr)^2 } },
\end{align}

where \(\hat{X}_{i,j}^t\) and \(X_{i,j}^t\) denote the predicted and ground‑truth values at grid point \((i, j)\) and forecast time \(t\); \(N_{\text{lat}}\) and \(N_{\text{lon}}\) are the numbers of grid cells in latitude and longitude. For NRMSE, \(\mu\) and \(\sigma\) are the temporal mean and standard deviation of the ground‑truth variable over the training period, used to normalize both predictions and targets to a common scale. The latitude weight \(L_i\) is defined as:

\begin{equation}
    L_i = N_{\text{lat}} \times \frac{ \cos \phi_i }{ \sum_{j=1}^{N_{\text{lat}}} \cos \phi_j },
\end{equation}

with \(\phi_i\) representing the latitude of the \(i\)-th grid row. To obtain a single scalar assessing overall performance across multiple variables, we also report the \textbf{Mean Normalized RMSE (MNRMSE)}, calculated as the arithmetic mean of the NRMSE values over all predicted variables.

\subsection{Typhoon Tracking}

% This study focuses on 10 intense tropical cyclones: AMPIL (2024.08), BEBINCA (2024.09), EWINIAR (2024.05), GAEMI (2024.07), KONG-REY (2024.10), KRATHON (2024.09), MAN-YI (2024.11), SHANSHAN (2024.08), YAGI (2024.09), and YINXING (2024.11). Those typhoons are all developed in East Asia or the Western Pacific across the whole year in 2024. Initial conditions for both cyclones are set at 00:00 UTC on their respective formation dates. All competitors and Ground‑truth are obtained from CMA TCData\footnote{\url{tcdata.typhoon.org.cn}}. 

% This study focuses on two intense tropical cyclones: Typhoon Ewiniar and Typhoon Yinxing. Typhoon Ewiniar formed east of Mindanao on May 24, 2024, crossed the Philippine Sea, and recurved northeastward over the Okinawa–Ryukyu region. Typhoon Yinxing developed east of Yap Island on November 4, 2024, traversed the Philippine Sea, and entered the South China Sea. Initial conditions for both cyclones are set at 00:00 UTC on their respective formation dates. Ground‑truth tracks and ECMWF forecasts are obtained from TCData\footnote{\url{tcdata.typhoon.org.cn}}; other model tracks are generated using official released codes.

This study evaluates the proposed forecasting model using ten tropical cyclones that occurred across the 2024 season, selected as test cases for their intensity and representative tracks in the Western Pacific and East Asia region. The cyclones, listed with their international names and approximate formation periods, are: AMPIL (August 2024), BEBINCA (September 2024), EWINIAR (May 2024), GAEMI (July 2024), KONG‑REY (October 2024), KRATHON (September 2024), MAN‑YI (November 2024), SHANSHAN (August 2024), YAGI (September 2024), and YINXING (November 2024). All systems developed within the Western Pacific or adjacent East Asian maritime basins. The initial condition for each forecast is set at 00:00 UTC on its respective formation date. Ground-truth and the corresponding forecasts from all comparison models are obtained from the publicly accessible CMA Tropical Cyclone Data Archive (TCData)\footnote{\url{https://tcdata.typhoon.org.cn}}.

Following previous AI‑based approaches~\cite{panguweather, magnusson2021tropical}, we identify the tropical‑cyclone eye as the location of the local minimum in mean sea‑level pressure (MSL). The tracking algorithm employs a multi‑constraint sequential method that combines MSL minimization with physical consistency checks to ensure robust and meteorologically plausible trajectory estimation.

The tracking procedure consists of the following steps: (1) initialize the current position with the given coordinates; (2) extract the latitude–longitude grid and relevant atmospheric variables (MSL, U10, V10); (3) locate the cyclone center by finding the minimum MSL within a 278 km search radius; (4) compute the central pressure, maximum wind speed, and displacement distance; (5) evaluate termination criteria against thresholds for pressure (101 200 Pa), wind speed (10.2$m/s$), and displacement (400 km); (6) if all criteria are satisfied, update the current position and append the point to the trajectory.

To ensure the robustness of the tracking results, we conduct a comprehensive sensitivity analysis on the key threshold parameters. The choice of the 278 km search radius is based on the typical diameter of tropical cyclones, while the pressure (101 200 Pa) and wind‑speed (10.2 $m/s$) thresholds follow the World Meteorological Organization definition of a tropical depression. We systematically vary each threshold within a plausible range (pressure: 101 000–102 000Pa; wind speed: 9.0–11.5$m/s$; search radius: 200–400 km) and evaluate the resulting track errors. 

% Across all tested configurations, our model consistently outperforms the baseline methods, with the average improvement in mean distance error remaining above 25\%. This demonstrates that the reported performance gain is not an artifact of specific parameter choices but reflects the inherent superiority of the proposed approach.

For typhoon track evaluation, we employ the Mean Distance Error (MDE), which averages the great‑circle (Haversine) distance between predicted and observed cyclone centers over all forecast steps:
\begin{align}
\text{MDE} &= \frac{1}{N} \sum_{k=1}^{N} d\bigl({P}_{\text{pred}}^{(k)},\, {P}_{\text{obs}}^{(k)}\bigr),\\[4pt]
d({P}_1, {P}_2) &= 2R \cdot \arcsin\!\bigl(\sqrt{a}\,\bigr), \\[4pt]
a &= \sin^2\!\Bigl(\frac{\Delta\phi}{2}\Bigr) + \cos\phi_1 \cdot \cos\phi_2 \cdot \sin^2\!\Bigl(\frac{\Delta\lambda}{2}\Bigr),
\end{align}
where \(R = 6371\ \text{km}\) is the Earth’s mean radius, \({P}_1 = (\phi_1, \lambda_1)\) and \({P}_2 = (\phi_2, \lambda_2)\) are the latitude–longitude pairs of two points, and \(\Delta\phi = \phi_2 - \phi_1\), \(\Delta\lambda = \lambda_2 - \lambda_1\).

\subsection{Image Classification on Imagenet-1K}
\label{sec:top1_accuracy}

Top-1 accuracy represents the most stringent evaluation metric for image classification models on the ImageNet-1K dataset. It measures the proportion of test images for which the model's single highest-confidence prediction matches the ground-truth label. Formally, given a validation set $\mathcal{D} = \{(\mathbf{x}_i, y_i)\}_{i=1}^N$ where $\mathbf{x}_i \in \mathbb{R}^{H \times W \times 3}$ denotes the input image, $y_i \in \{1, 2, \dots, K\}$ is the true class label ($K=1000$ for ImageNet-1K), and $N=50,000$ is the size of the validation set, the top-1 accuracy is computed as:

\begin{equation}
\text{Accuracy}_{\text{top-1}} = \frac{1}{N} \sum_{i=1}^{N} \mathbb{I}\left( \arg\max_{k \in \{1,\dots,K\}} f_k(\mathbf{x}_i; \theta) = y_i \right)
\label{eq:top1_accuracy}
\end{equation}

\noindent where:
\begin{itemize}
    \item $f_k(\mathbf{x}_i; \theta)$ denotes the predicted probability for class $k$ given input $\mathbf{x}_i$ and model parameters $\theta$
    \item $\arg\max_{k} f_k(\mathbf{x}_i; \theta)$ identifies the class index with the highest prediction probability
    \item $\mathbb{I}(\cdot)$ is the indicator function that returns 1 when the condition is true and 0 otherwise
\end{itemize}

The evaluation protocol follows strict standardization: all images are center-cropped to $224 \times 224$ pixels after resizing the shorter edge to 256 pixels. No test-time augmentation is applied during top-1 accuracy computation, ensuring consistent comparison across different architectures. This metric provides a direct measure of a model's ability to make precise single-label predictions under real-world deployment scenarios where only one classification decision is permitted per input.

\subsection{Semantic Segmentation on ADE20K}
\label{sec:miou_ade20k}

The mean Intersection over Union (mIoU) serves as the primary evaluation metric for semantic segmentation performance on the ADE20K dataset. This metric quantifies the spatial overlap accuracy between predicted segmentation masks and ground-truth annotations across all semantic classes. For a segmentation model $f: \mathbb{R}^{H \times W \times 3} \rightarrow \{1, 2, \dots, C\}^{H \times W}$ where $C=150$ represents the number of semantic classes in ADE20K validation set, the mIoU is computed through the following formal procedure.

Given a validation image $\mathbf{x}$ with ground-truth segmentation mask $\mathbf{y} \in \{1, \dots, C\}^{H \times W}$ and predicted segmentation $\hat{\mathbf{y}} = f(\mathbf{x})$, the Intersection over Union for class $c$ is defined as:

\begin{equation}
\text{IoU}_c = \frac{|\{p : y_p = c \land \hat{y}_p = c\}|}{|\{p : y_p = c \lor \hat{y}_p = c\}| + \epsilon}
\label{eq:class_iou}
\end{equation}

\noindent where:
\begin{itemize}
    \item $p$ indexes spatial positions in the segmentation map
    \item $y_p$ and $\hat{y}_p$ denote the ground-truth and predicted class labels at position $p$
    \item $\epsilon = 10^{-6}$ is a small constant to prevent division by zero when both sets are empty
    \item The numerator computes the true positive pixels for class $c$
    \item The denominator represents the union of ground-truth and predicted pixels for class $c$
\end{itemize}

The mean IoU across all $C$ semantic classes is then calculated as:

\begin{equation}
\text{mIoU} = \frac{1}{C} \sum_{c=1}^{C} \text{IoU}_c
\label{eq:miou}
\end{equation}

For the ADE20K benchmark, evaluation follows strict protocol specifications: predictions are resized to match the original image resolution before computing mIoU, and the metric is computed exclusively on the official 2,000-image validation set. Notably, the background class (class 0) is excluded from evaluation, focusing solely on the 150 foreground semantic categories. This implementation adheres to the standard evaluation code provided by the MIT Scene Parsing Benchmark, ensuring consistent and comparable results across different segmentation architectures. The mIoU metric provides a robust measure of a model's ability to accurately delineate object boundaries and correctly classify regions across diverse indoor and outdoor scene categories.

\begin{table}[t]
\centering
\small
\caption{Comprehensive performance comparison between Plain Multi-scale Module and Multi-Convs Layer. All metrics are measured on identical hardware and input conditions with NVIDIA A100 GPU.}
\label{tab:conv_performance}
% \begin{threeparttable}
\begin{tabular}{l| cc | c}
\toprule
\multirow{2}{*}{ \textbf{Metric} } & {\textbf{Plain Multi-scale Module}} & {\textbf{Multi-Convs Layer}} &  \\
\cmidrule(lr){2-4}  % \cmidrule(l){4-5}
&  \multicolumn{2}{c}{\textbf{Output Statistics}} & {\textbf{Difference}} \\
\midrule
% \rowcolor{highlight}
\textbf{Range (min/max)} & {-154.778732/155.702896} & {-154.780792/155.698761} &  \\
Mean & 0.001675 & 0.001684 & 0.000009 \\
Std Dev & 32.543854 & 32.543770 & 0.000084 \\
% Max Difference & \multicolumn{2}{c}{0.042381} & \multicolumn{2}{c}{0.000028} \\
\midrule
\multicolumn{4}{c}{\textbf{Weight Gradients} } \\
\midrule
Weight 1 / Layer 1 & 2.657446 & 2.657445 & 0.0000001 \\
% & & & \multicolumn{2}{c}{Rel Diff: 0.000017} \\
Weight 2 / Layer 2 & 6.396124 & 6.396096 & 0.000028 \\
% & & & \multicolumn{2}{c}{Rel Diff: 0.000008} \\
Weight 3 / Layer 3 & 8.638949 & 8.638924 &  0.000025 \\
% & & & \multicolumn{2}{c}{Rel Diff: 0.000006} \\
\midrule
% \rowcolor{highlight}
 & \multicolumn{2}{c}{\textbf{ Time (s)}} & \textbf{Speedup Factor} \\
\cmidrule(lr){1-4} % \cmidrule(l){4-5}
\textbf{Performance} & 0.227616  & 0.039987 &   5.69$\times$ \\
% Custom Layer & 0.039987 &  0.227616   & \multicolumn{2}{c}{\multirow{2}{*}{5.69$\times$}} \\
% Separated Conv & & 0.227616 & \multicolumn{2}{c}{} \\
\bottomrule
\end{tabular}
\end{table}

\begin{table*}[t]
    % \vspace{-5pt}
    % \vskip 0.13in
    \centering
    \small
    \caption{Ablation study of the loss function on weather forecasting tasks. A lower RMSE (denormalized, $\downarrow$) indicates better performance. \textbf{All experiments are conducted on a 6‑year subset of ERA5 data spanning 2015–2020.} Consequently, the resulting error values are notably worse than those reported in earlier experiments, which used different evaluation periods. }
    
    % Ablation study of loss function on weather forecasting task. A small RMSE (denormalized, $\downarrow$)  indicate better performance. The experiments are all tested on a 5-year ERA5 dataset from 2015 to 2020. Thus, the experimental results seem much lower than the previous experiments.
     \makebox[\textwidth]{
\resizebox{\linewidth}{!}{
    % \begin{small}
    %     \begin{sc}
            \renewcommand{\multirowsetup}{\centering}
            \begin{tabular}{l|ccc|ccc|ccc|l}
                \toprule
                \multirow{2}{*}{Loss Function} & \multicolumn{3}{c|}{6-hour (RMSE $\downarrow$)}  & \multicolumn{3}{c|}{4-day (RMSE $\downarrow$)}   & \multicolumn{3}{c|}{10-day (RMSE $\downarrow$)}   & \multirow{2}{*}{Equation} \\
                \cmidrule(lr){2-10}
               & Z500   & T2M    & U10  &  Z500   & T2M    & U10  &  Z500   & T2M    & U10  &   \\

                \midrule   

                L2Loss  & 34.4   &  0.731  & 0.651  &   -   &  -   & -  &   -   &  -   & -  &  \(\mathcal{E} = (\mathbf{\hat{X}}^{t+1} - \mathbf{X}^{t+1})^2\)  \\  
                \cmidrule(lr){2-11} 
                Variable-weighted Loss  & 31.8   & 0.704  & 0.652    &   -   &  -   & -  &   -   &  -   & -  &  $\frac{\mathcal{E} }{e^\text{w}} + \text{w}$  \\  
                \cmidrule(lr){2-11} 
                Lat-weighted Loss   &  31.5  &  0.707   &   0.659    &   -   &  -   & -  &   -   &  -   & -  &  $\textbf{L}\odot \mathcal{E}$   \\   \cmidrule(lr){2-11}
                
                Lat- and Variable- weighted Loss  &  32.5   &  0.713   &  0.643    &   217.96   &  1.46   & 2.05  &   789.68   &  3.21   & 4.41  &  $\textbf{L}\odot \Bigl(\frac{\mathcal{E} }{e^\text{w}} + \text{w} \Bigr)$   \\   
                \cmidrule(lr){2-11} 
                Hybrid Loss ($\lambda_{init}=0.2$) &  31.1   &  0.796   &  0.651   &   215.82   &  1.31   & 2.01   &   764.51   &  2.82   &  4.27  &  \multirow{4}{*}{$ \textbf{L}\odot \mathcal{E} + \lambda * \Bigl(\frac{\mathcal{E} }{e^\text{w}} + \text{w} \Bigr), \lambda$ is trainable}   \\   \cmidrule(lr){2-10}
                Hybrid Loss ($\lambda_{init}=0.3$) & 32.1   &  0.707   &  0.649   &   221.57   &  1.30   &  2.08  &   802.67   &  2.81   &   4.41      \\    \cmidrule(lr){2-10}
                Hybrid Loss ($\lambda_{init}=0.4$) & 35.7   &  0.731    &   0.654  & 227.74   &  1.36   & 2.18  &   833.6   &  2.92   & 4.61      \\   \cmidrule(lr){2-11} 
                Hybrid Loss (Sinusoidual Weighting) &   31.6   &   0.696  &  0.641   &  -   &  -   & -  &   -   &  -   & -  &  $\frac{1}{2}\bigl(1 + \sin(\theta)\bigr) \bigl( \mathbf{L} \odot \mathcal{E} \bigr) \;+\; \frac{1}{2}\bigl(1 - \sin(\theta)\bigr) \Bigl( \frac{\mathcal{E}}{ e^{\mathbf{w}} } + \mathbf{w} \Bigr)$  \\ 
                \cmidrule(lr){2-11} 
                
                Hybrid Loss (Ours) &   30.3   &   0.684  &  0.626   & 212.34   &  1.28   & 1.99  &   756.67   &  2.72   & 4.22  &   $\frac{1}{2}\bigl(1 - \sin(\theta)\bigr) \bigl( \mathbf{L} \odot \mathcal{E} \bigr) \;+\; \frac{1}{2}\bigl(1 + \sin(\theta)\bigr) \Bigl( \frac{\mathcal{E}}{ e^{\mathbf{w}} } + \mathbf{w} \Bigr)$  \\

                \bottomrule
            \end{tabular}
 %        \end{sc}
	% \end{small}
    }}
    % \vspace{-0.1cm}
    
    % \vspace{-5mm}
    \label{tab:ablation_loss}
\end{table*}

\begin{figure*}[t]
  \centering
  % \fbox{\rule{0pt}{2in} \rule{0.9\linewidth}{0pt}}
   \includegraphics[width=0.7\linewidth]{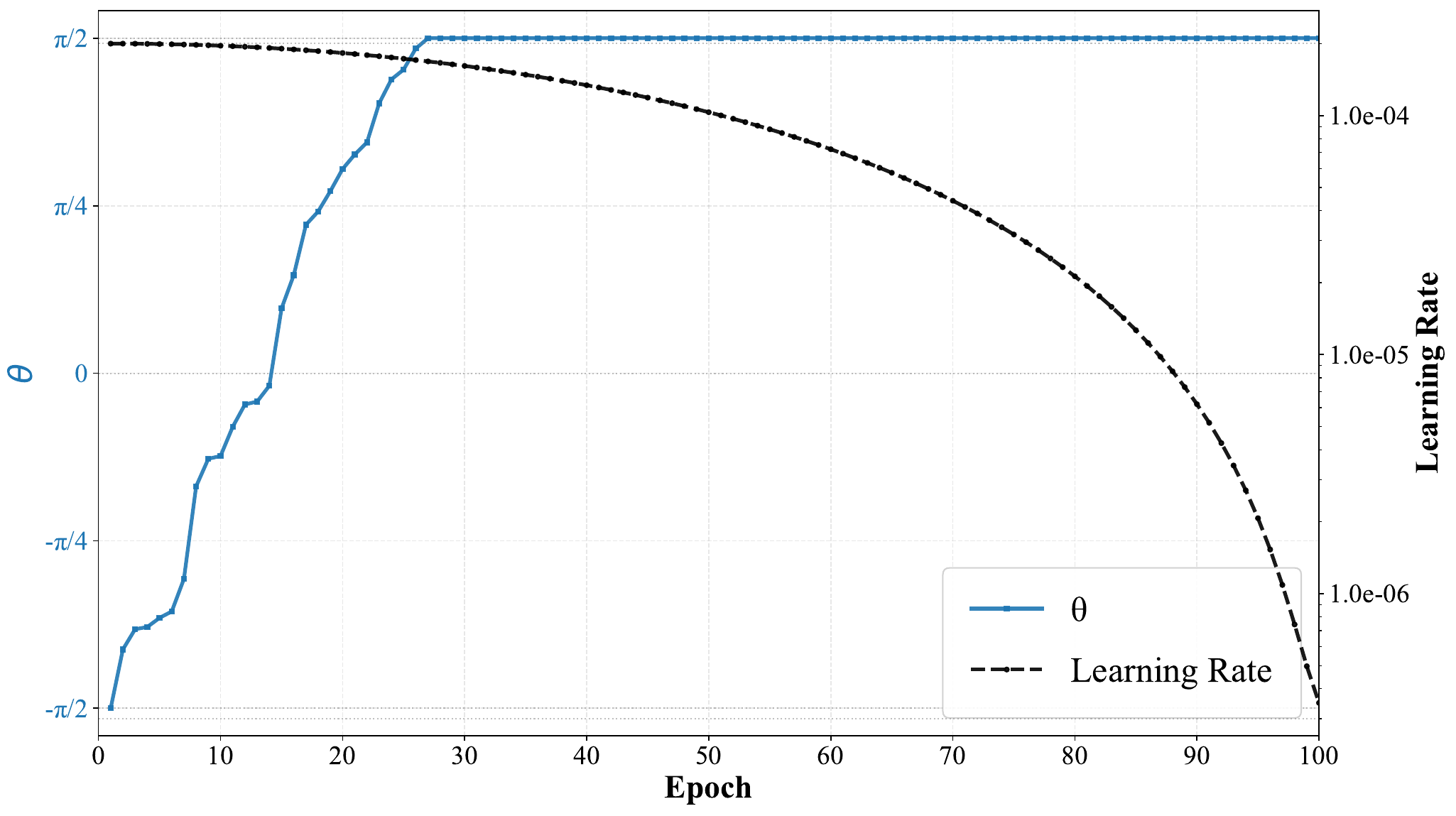}
   \caption{ Training curves of learning rate and $\theta$ in 100 epochs.
   } 
   % \footnotemark
  % \footnotetext{ \url{https://sites.research.google/gr/weatherbench/deterministic-scores} }         % 定义脚注文本
   \label{fig:lr_theta}
   % \vspace{-0.2cm}
\end{figure*}

\begin{table*}[t]
  \centering
  \small
  \caption{Comparative Analysis of Training Times and Hardware Specifications for Deep Learning Models. $\ast$ denotes that the reported time includes only the training duration, excluding finetuning. GPU cost and Latency are all computed by us with one A100(40G) GPU.  }
%        \makebox[\textwidth]{
% \resizebox{1.0\linewidth}{!}{
    \begin{tabular}{lcclccc}
    \toprule
    \multirow{2}{*}{\textbf{Model}} & \textbf{Params} & \textbf{MACs} & \multicolumn{1}{c}{\textbf{GPUs}} & \textbf{Training Time}  &  \textbf{GPU Cost}  &  \textbf{Latency}  \\
     & \textbf{(M)} & \textbf{(G)} &   &    &  \textbf{(Inference, M)}  &  \textbf{(ms)}  \\
    
    \midrule
        \multicolumn{7}{c}{$721\times 1440$ (0.25\degree)} \\
    \midrule
    Fengwu~\cite{fengwu} & 153.49 & 132.83 & 32 A100 & 17 days & 1363.1  & 227 \\
    FourCastNet~\cite{fourcastnet} & 79.6 & 111.62 & 64 A100 & 16 hours & 868.0  &  164.0 \\
    Graphcast~\cite{graphcast}  & 28.95 & 1639.26 & 32 TPUv4 & 4 weeks  & $>$40G & -   \\
    Pangu-Weather~\cite{panguweather} & 23.83 & 142.39 & 192 V100 & 64 days & 997.1 & 275  \\
    Ours   &  208.70  &  124.89  &  16 A100  &  9 days  &  962.2  &  177.6  \\
    \midrule
        \multicolumn{7}{c}{$128\times 256$ (1.4\degree)} \\
    \midrule
    VA-MoE~\cite{vamoe} & 665.37 & - & 16 A100 & 12 days$^\ast$  &  688.3  &  269  \\
    OneForecast ~\cite{oneforecast} &  24.76  &  509.27  &  16 A100  &  8 days$^\ast$ & 225.3   &  663  \\
    STCast~\cite{stcast} &  654.82  &  436.12  &  16 A100  &  5 days$^\ast$  & 546.7   &  115 \\
    Ours  &  157.9  &  102.54  &  16 A100  &  60 hours$^\ast$   &  703.3   &  98.3 \\
    \bottomrule
    \end{tabular}%
    % }}
  \label{tab:times}%
\end{table*}%

\begin{figure*}[t] % 使用figure*环境确保图片出现在页面顶部
    \centering
    \begin{minipage}{\columnwidth} % 限制为单栏宽度
        \centering

        \begin{subfigure}{0.455\textwidth}
            \centering
            \captionsetup{justification=centering}
            \caption{Training Loss Comparison }
            % 使用TikZ在图片上叠加绿色圆圈和文字
            \begin{tikzpicture}[remember picture]
                \node[anchor=south west, inner sep=0] (image) at (-2.6,-5) {
                    \includegraphics[width=0.95\linewidth]{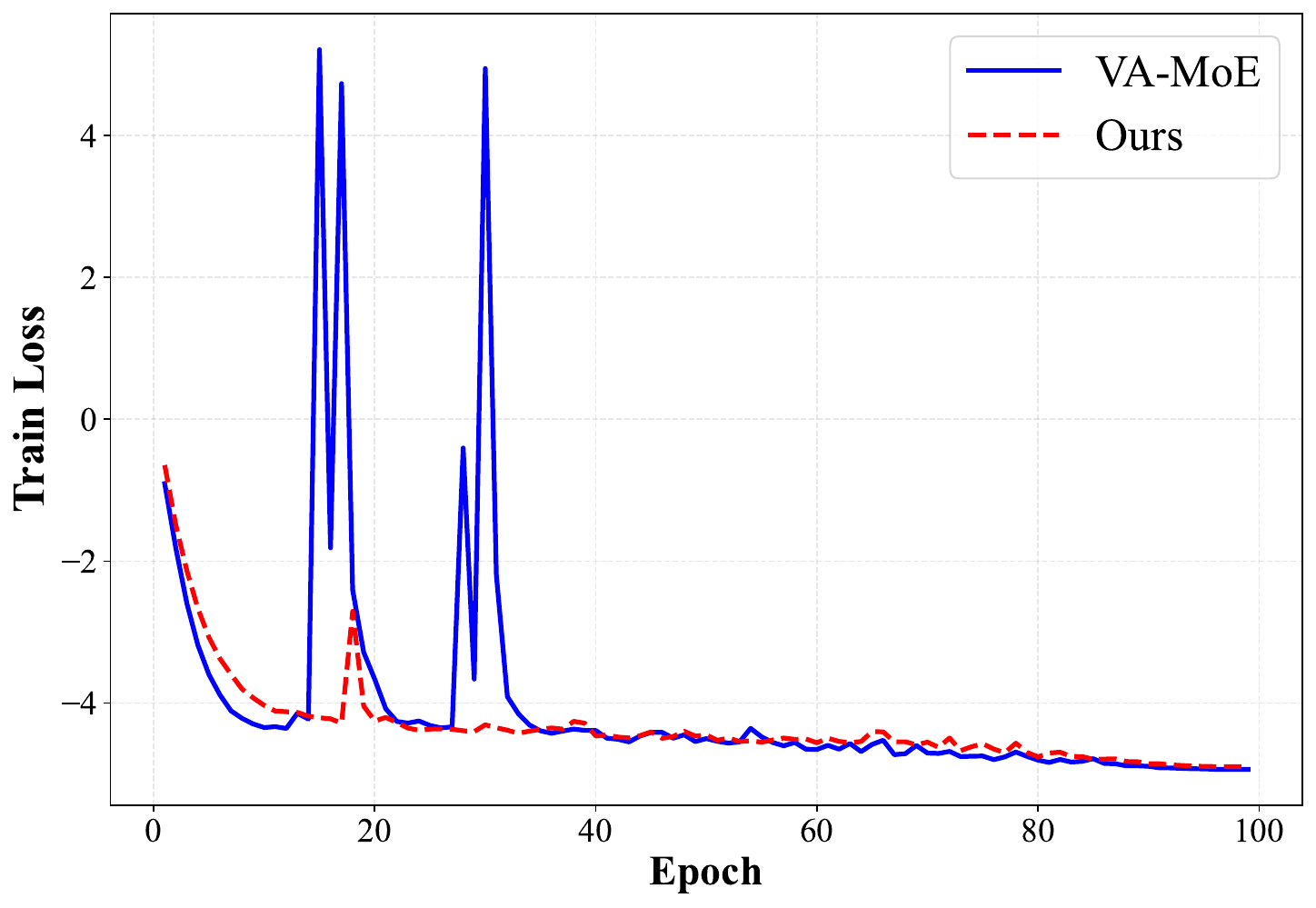}
                };
                % 获取图片尺寸
                \pgfgetlastxy{\imagewidth}{\imageheight}
                % 定义相对坐标 (30%宽度, 70%高度)
                \coordinate (circle_pos) at (0.1*\imagewidth, 0.2*\imageheight);
                % 画绿色圆圈 (半径0.8cm)
                \draw[green, thick, line width=1.2pt] (circle_pos) circle[radius=0.6cm];
                % 添加绿色文字标注 (在圆圈右上方)
                \node[green, font=\bfseries\footnotesize, 
                      above right=0.3cm and 0.5cm of circle_pos] (label) {Circle A};
                % 可选：添加指向线
                % \draw[green, thick, ->] (label.south west) to[out=225, in=45] ($(circle_pos)+(0.3,0.3)$);
            \end{tikzpicture}
            \label{fig:loss_a}
        \end{subfigure}%
        \hfill
        \begin{subfigure}{0.545\textwidth}
            \centering
            \captionsetup{justification=centering}
            \caption{Z500 Denormalized RMSE $\downarrow$ (6-hour prediction)}
            % 使用TikZ在图片上叠加绿色圆圈和文字
            \begin{tikzpicture}[remember picture]
                \node[anchor=south west, inner sep=0] (image) at (-33,-1) {
                    \includegraphics[width=0.95\linewidth]{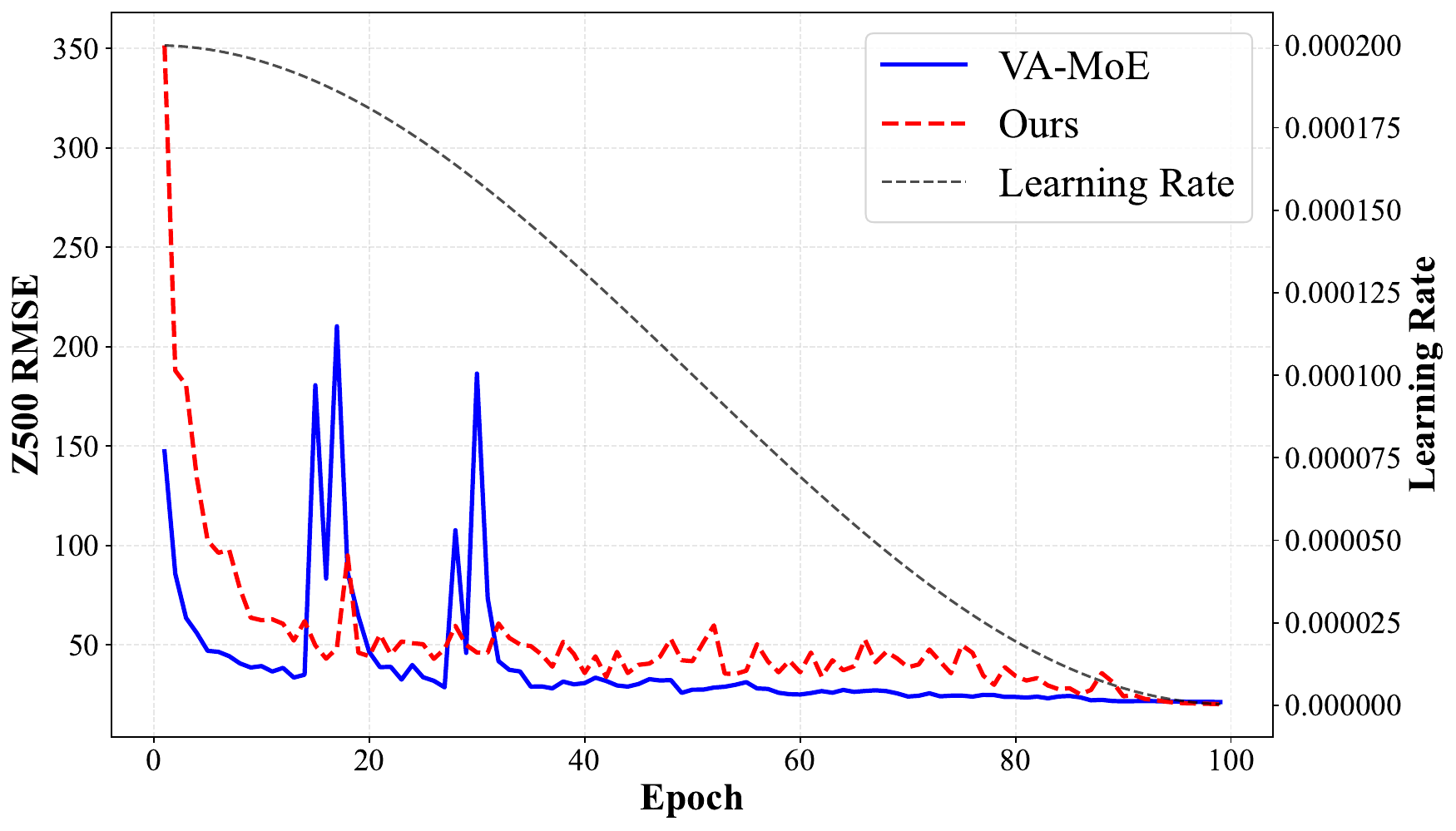}
                };
                % 获取图片尺寸
                \pgfgetlastxy{\imagewidth}{\imageheight}
                % 定义相对坐标 (65%宽度, 40%高度)
                \coordinate (circle_pos) at (0.8*\imagewidth, 0.1*\imageheight);
                % 画绿色圆圈 (半径0.6cm)
                \draw[green, thick, line width=1.2pt] (circle_pos) circle[radius=0.3cm];
                % 添加绿色文字标注 (在圆圈左下方)
                \node[green, font=\bfseries\footnotesize, 
                      above left=0.3cm and -0.6cm of circle_pos] (label) {Circle B};
                % 可选：添加指向线
                % \draw[green, thick, ->] (label.north east) to[out=45, in=225] ($(circle_pos)+(-0.3,-0.3)$);
            \end{tikzpicture}
            \label{fig:loss_b}
        \end{subfigure}
        
        % 整体大标题
        \caption{Comparison between the training processes of VA-MoE~\cite{vamoe} and the proposed method. 
        (a) Training loss comparison: \textcolor{red}{Ours} and \textcolor{blue}{VA-MoE}.
        (b) Short-term forecast performance: evolution of Z500 RMSE during training for \textcolor{red}{Ours} and \textcolor{blue}{VA-MoE}. 
        }
        
        \label{fig:losses}
    \end{minipage}
\end{figure*}

\begin{figure*}[t]
  \centering
  % \fbox{\rule{0pt}{2in} \rule{0.9\linewidth}{0pt}}
   \includegraphics[width=1.0\linewidth]{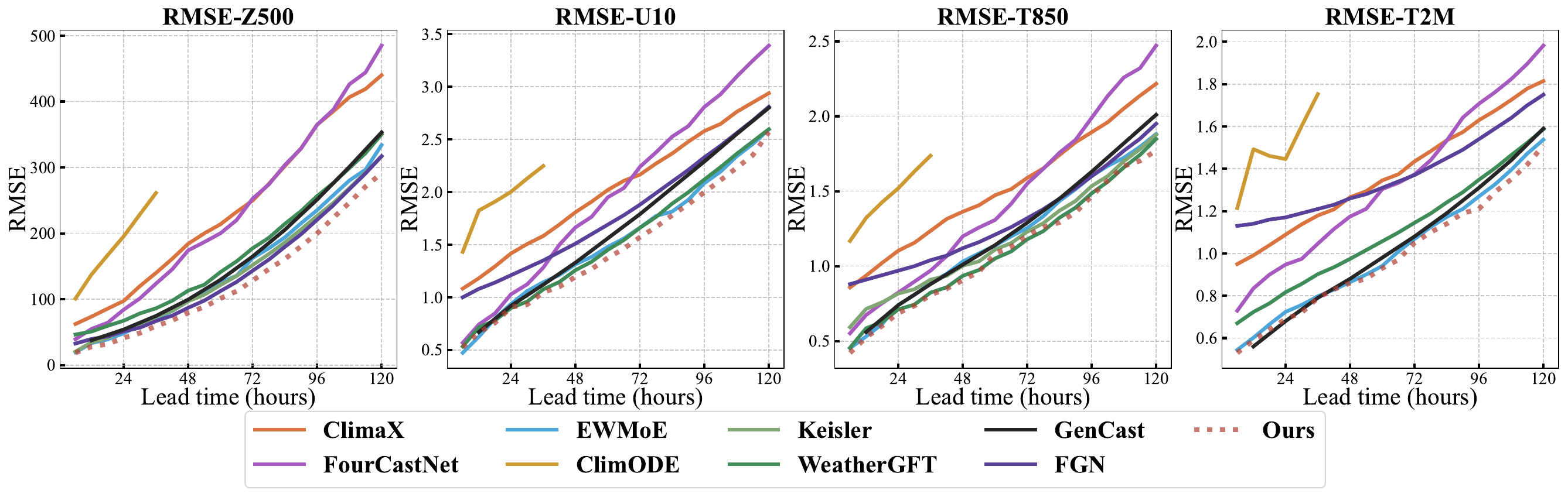}
   \caption{ 
   120-hour comparative analysis of \textbf{RMSE} \boldmath{$\downarrow$} across 10 data-driven models for four variables, including Z500, T850, T2M, and U10. Results are collected from EWMoE~\cite{ewmoe}, WeatherGFT~\cite{weathergft} and WeatherBench~\cite{weatherbench} in \url{https://sites.research.google/gr/weatherbench/deterministic-scores}. 
   } 
   % \footnotemark
  % \footnotetext{ \url{https://sites.research.google/gr/weatherbench/deterministic-scores} }         % 定义脚注文本
   \label{fig:sota}
   % \vspace{-0.2cm}
\end{figure*}

\begin{table}[t]
\centering
\small
\caption{Efficiency and Accuracy Comparison. \textbf{(a)} Module-level efficiency comparison (average of 50 inferences). \textbf{(b)} End-to-end forecasting performance (Denormalized RMSE on 1.4$^{\circ}$ ERA5, 6-year quick training) alongside overall latency and MACs.}
\label{tab:efficiency_accuracy}

\vspace{0.2cm}
\textbf{(a) Single Module Efficiency Comparison} \\
\vspace{0.1cm}
\begin{tabular}{lccc}
\toprule
\textbf{Single Module} & \textbf{Params (M)} & \textbf{Latency (ms)} & \textbf{GPU Memory (MB)} \\
\midrule
RepVGG~\cite{ding2021repvgg} & 9.17 & 14.3 & 324 \\
RepLKNet~\cite{ding2022scaling} & 9.17 & 14.5 & 318 \\
Plain Multi-Scale Module & 9.17 & 31.2 & 301 \\
Multi-Convs Layer (Ours) & 9.17 & 8.9 & 195 \\
\bottomrule
\end{tabular}

\vspace{0.4cm}
\textbf{(b) End-to-end Denormalized RMSE and Efficiency} \\
\vspace{0.1cm}
% \resizebox{\textwidth}{!}{% 如果表格过宽，可以取消注释并使用 resizebox
\begin{tabular}{lccccccccccc}
\toprule
\multirow{2}{*}{\textbf{Method}} & \multicolumn{3}{c}{\textbf{6-hour}} & \multicolumn{3}{c}{\textbf{4-day}} & \multicolumn{3}{c}{\textbf{10-day}} & \multirow{2}{*}{\begin{tabular}[c]{@{}c@{}}\textbf{Latency}\\ \textbf{(ms)}\end{tabular}} & \multirow{2}{*}{\begin{tabular}[c]{@{}c@{}}\textbf{MACs}\\ \textbf{(G)}\end{tabular}} \\
\cmidrule(lr){2-4} \cmidrule(lr){5-7} \cmidrule(lr){8-10}
 & Z500 & T2M & U10 & Z500 & T2M & U10 & Z500 & T2M & U10 & & \\
\midrule
RepVGG~\cite{ding2021repvgg} & 32.7 & 0.715 & 0.664 & 319.37 & 2.81 & 2.49 & 871.08 & 4.32 & 4.66 & 110.6 & 119.3 \\
RepLKNet~\cite{ding2022scaling} & 32.3 & 0.719 & 0.671 & 229.72 & 1.37 & 2.20 & 835.87 & 2.93 & 4.64 & 113.7 & 122.1 \\
% DSMS Conv &	33.9 &	0.693 &	0.733 &	234.56 &	1.33 &	2.22 &	826.86 &	2.92 &	4.61 &	141.5 &	127.9    \\
EMFormer (Ours) & 30.3 & 0.684 & 0.626 & 212.34 & 1.28 & 1.99 & 764.51 & 2.72 & 4.27 & 98.3 & 102.5 \\
\bottomrule
\end{tabular}
% } % 对应 resizebox 的结束括号
\end{table}

\begin{table*}[t]
    % \vspace{-5pt}
    % \vskip 0.13in
    \centering
    \small
    \caption{Ablation study on the accumulative context finetuning strategy with VA-MoE~\cite{vamoe}. All results are trained with 1.4\degree ERA5. The model is trained by us followed official code. }
    
%      \makebox[\textwidth]{
% \resizebox{0.9\linewidth}{!}{
    % \begin{small}
    %     \begin{sc}
            \renewcommand{\multirowsetup}{\centering}
            \begin{tabular}{l@{\hspace{0.5cm}}|cc|cc|cc|cc|cc}
                \toprule
                \multirow{4}{*}{VA-MoE} & \multicolumn{10}{c}{Metric}  \\
                \cmidrule(lr){2-11}
                &  \multicolumn{2}{c}{6-hour} & \multicolumn{2}{c}{1-day} & \multicolumn{2}{c}{4-day} & \multicolumn{2}{c}{7-day} & \multicolumn{2}{c}{10-day}   \\
                \cmidrule(lr){2-11}
               & RMSE& ACC & RMSE& ACC & RMSE& ACC & RMSE& ACC & RMSE& ACC \\

                \midrule

                % Ours\_v1 & 0.0814 & - & 0.1363 & - & 0.2797 & - & 0.4575 & - & 0.5997 & -  \\
                
                % Oneforecast\cite{oneforecast} & 0.0549 & 0.9943  & 0.1231    & 0.9737      & 0.2732  & 0.8825  &  0.4468  & 0.6888  & 0.5918  & 0.4457   \\
                
                w/o finetuning & 0.0613 & 0.9938 & 0.1248 & 0.9715 & 0.2699 & 0.8737 & 0.4436 & 0.6899 & 0.5855 & 0.4481  \\

                w/ Finetuning & 0.0630 & 0.9925 & 0.1205 & 0.9734 & 0.2603 & 0.8849 & 0.4293 & 0.6966 & 0.5673 & 0.4701  \\

                w/ Accumulative finetuning & 0.0608 & 0.9941 & 0.1161 & 0.9761 & 0.2451 & 0.8896 & 0.4101 & 0.7194 & 0.5128 & 0.5351 \\

                \bottomrule
            \end{tabular}
 %        \end{sc}
	% \end{small}
    % }}
    % \vspace{-0.1cm}
    
    % \vspace{-5mm}
    \label{tab:ablation_vamoe}
\end{table*}

\begin{table*}[t]
    % \vspace{-5pt}
    % \vskip 0.13in
    \centering
    \small
    \caption{Ablation study on the balancing hyperparamter $\lambda$ in the memory module of accumulative context finetuning. All results are gained with 1.4\degree ERA5 on 10-step accumulative context finetuning with 5 cache length.}
    
    \renewcommand{\multirowsetup}{\centering}
    \begin{tabular}{l@{\hspace{0.5cm}}|cc|cc|cc|cc|cc}
        \toprule
        \multirow{4}{*}{$\lambda$} & \multicolumn{10}{c}{Metric}  \\
        \cmidrule(lr){2-11}
        &  \multicolumn{2}{c}{6-hour} & \multicolumn{2}{c}{1-day} & \multicolumn{2}{c}{4-day} & \multicolumn{2}{c}{7-day} & \multicolumn{2}{c}{10-day}   \\
        \cmidrule(lr){2-11}
       & RMSE& ACC & RMSE& ACC & RMSE& ACC & RMSE& ACC & RMSE& ACC \\

        \midrule
        
        0.7   & 0.0599 & 0.9948 & 0.1139 & 0.9774    & 0.2444 & 0.8924 & 0.4099 & 0.7190 & 0.5121 & 0.5253  \\

        0.8   & 0.0598 & 0.9949 & 0.1140 & 0.9775    & 0.2441 & 0.8932 & 0.4087 & 0.7211  & 0.5109 & 0.5316  \\

        0.9   & 0.0599 & 0.9949 & 0.1139 & 0.9775    & 0.2439 & 0.8936 & 0.4072 & 0.7223 & 0.5094 & 0.5389 \\

        1.0   & 0.0599 & 0.9949 & 0.1139 & 0.9775    & 0.2462 & 0.8901 & 0.4139 & 0.7173 & 0.5204 & 0.5218 \\

        \bottomrule
    \end{tabular}
    
    % \vspace{-5mm}
    \label{tab:ablation_lambda}
\end{table*}

\begin{table*}[t]
    % \vspace{-5pt}
    % \vskip 0.13in
    \centering
    \small
    \caption{Ablation study on the cache length in the accumulative context finetuning with 10 steps. All results are gained with 1.4\degree ERA5.}
    
    \renewcommand{\multirowsetup}{\centering}
    \begin{tabular}{l@{\hspace{0.5cm}}|cc|cc|cc|cc|cc}
        \toprule
        \multirow{4}{*}{Cache length} & \multicolumn{10}{c}{Metric}  \\
        \cmidrule(lr){2-11}
        &  \multicolumn{2}{c}{6-hour} & \multicolumn{2}{c}{1-day} & \multicolumn{2}{c}{4-day} & \multicolumn{2}{c}{7-day} & \multicolumn{2}{c}{10-day}   \\
        \cmidrule(lr){2-11}
       & RMSE& ACC & RMSE& ACC & RMSE& ACC & RMSE& ACC & RMSE& ACC \\

        \midrule

        % Ours\_v1 & 0.0814 & - & 0.1363 & - & 0.2797 & - & 0.4575 & - & 0.5997 & -  \\
        
        % Hourglass  & 0.0811 & 0.9872 & 0.1347 & 0.9664 & 0.2696 & 0.8811 & 0.4407 & 0.6878 & 0.5819 & 0.4514  \\
        % 0 & 0.0626 & 0.9931 & 0.1229 & 0.9739 & 0.2673 & 0.8765 & 0.4407 & 0.6878 & 0.5819 & 0.4514  \\
        
        % 5 & 0.0624 & 0.9945 & 0.1221 & 0.9743 & 0.2483 & 0.8871 & 0.4235 & 0.7081 & 0.5302 & 0.5098  \\
        
        % 6 & 0.0615 & 0.9947 & 0.1201 & 0.9750 & 0.2481 & 0.8875 & 0.4202 & 0.7101 & 0.5236 & 0.5168  \\

        % 7 & 0.0610 & 0.9947 & 0.1168 & 0.9766 & 0.2468 & 0.8906 & 0.4156 & 0.7168 & 0.5194 & 0.5229  \\
        
        3 & 0.0599 & 0.9948 & 0.1186 & 0.9748 & 0.2511 & 0.8834 & 0.4293 & 0.6991 & 0.5461 & 0.4875  \\

        4 & 0.0598 & 0.9949 & 0.1149 & 0.9760 & 0.2451 & 0.8921 & 0.4101 & 0.7201  & 0.5147 & 0.5219  \\

        5 & 0.0599 & 0.9949 & 0.1139 & 0.9775 & 0.2439 & 0.8936 & 0.4072 & 0.7223 & 0.5094 & 0.5389 \\

        6 & 0.0598 & 0.9950 & 0.1135 & 0.9778 & 0.2431 & 0.8941 & 0.4067 & 0.7229 & 0.5081 & 0.5399 \\

        \bottomrule
    \end{tabular}
    
    % \vspace{-5mm}
    \label{tab:ablation_cachelength}
\end{table*}

\begin{table*}[t]
    % \vspace{-5pt}
    % \vskip 0.13in
    \centering
    \small
    \caption{Ablation study on the number of steps in the finetuning stage with cache length 5. All results are gained with 1.4\degree ERA5.}
    
%      \makebox[\textwidth]{
% \resizebox{0.9\linewidth}{!}{
    % \begin{small}
    %     \begin{sc}
            \renewcommand{\multirowsetup}{\centering}
            \begin{tabular}{l@{\hspace{0.5cm}}|cc|cc|cc|cc|cc}
                \toprule
                \multirow{4}{*}{\#Steps} & \multicolumn{10}{c}{Metric}  \\
                \cmidrule(lr){2-11}
                &  \multicolumn{2}{c}{6-hour} & \multicolumn{2}{c}{1-day} & \multicolumn{2}{c}{4-day} & \multicolumn{2}{c}{7-day} & \multicolumn{2}{c}{10-day}   \\
                \cmidrule(lr){2-11}
               & RMSE& ACC & RMSE& ACC & RMSE& ACC & RMSE& ACC & RMSE& ACC \\

                \midrule

                % Ours\_v1 & 0.0814 & - & 0.1363 & - & 0.2797 & - & 0.4575 & - & 0.5997 & -  \\
                
                % Hourglass  & 0.0811 & 0.9872 & 0.1347 & 0.9664 & 0.2696 & 0.8811 & 0.4407 & 0.6878 & 0.5819 & 0.4514  \\
                0 & 0.0626 & 0.9931 & 0.1219 & 0.9749 & 0.2673 & 0.8845 & 0.4327 & 0.6978 & 0.5719 & 0.4614  \\
                
                5 & 0.0624 & 0.9945 & 0.1210 & 0.9752 & 0.2483 & 0.8871 & 0.4235 & 0.7081 & 0.5302 & 0.5098  \\
                
                6 & 0.0615 & 0.9947 & 0.1201 & 0.9756 & 0.2481 & 0.8875 & 0.4202 & 0.7101 & 0.5236 & 0.5168  \\

                7 & 0.0610 & 0.9947 & 0.1168 & 0.9766 & 0.2468 & 0.8906 & 0.4156 & 0.7168 & 0.5194 & 0.5229  \\
                
                8 & 0.0610 & 0.9948 & 0.1164 & 0.9767 & 0.2445 & 0.8926 & 0.4103 & 0.7197 & 0.5185 & 0.5227  \\

                9 & 0.0603 & 0.9949 & 0.1152 & 0.9771 & 0.2441 & 0.8928 & 0.4088 & 0.7221  & 0.5109 & 0.5314  \\

                10 & 0.0599 & 0.9949 & 0.1139 & 0.9775 & 0.2439 & 0.8936 & 0.4072 & 0.7223 & 0.5094 & 0.5389 \\

                \bottomrule
            \end{tabular}
 %        \end{sc}
	% \end{small}
    % }}
    % \vspace{-0.1cm}
    
    % \vspace{-5mm}
    \label{tab:ablation_steps}
\end{table*}

\begin{table*}[t]
    % \vspace{-5pt}
    % \vskip 0.13in
    \centering
    % \normalsize
    \caption{Ablation study on the windows size of the windows-attention. All results are gained with 1.4\degree ERA5.}
    
     \makebox[\textwidth]{
\resizebox{\linewidth}{!}{
    % \begin{small}
    %     \begin{sc}
            \renewcommand{\multirowsetup}{\centering}
            \begin{tabular}{l|cc|cc|cc|cc|cc}
                \toprule
                \multirow{4}{*}{Windows size} & \multicolumn{10}{c}{Metric}  \\
                \cmidrule(lr){2-11}
                &  \multicolumn{2}{c}{6-hour} & \multicolumn{2}{c}{1-day} & \multicolumn{2}{c}{4-day} & \multicolumn{2}{c}{7-day} & \multicolumn{2}{c}{10-day}   \\
                \cmidrule(lr){2-11}
               & RMSE& ACC & RMSE& ACC & RMSE& ACC & RMSE& ACC & RMSE& ACC \\

                \midrule

                % Hourglass, Hourglass+RWKV, Hourglass+windows

                Only $4\times 4$ & 0.0641 & 0.9922 & 0.1228 & 0.9735 & 0.2679 & 0.8819 & 0.4395 & 0.6885 & 0.5878 & 0.4465  \\

                Hybrid Sizes ($4\times 4, 8\times 2, 2\times 8$)  & 0.0626 & 0.9931 & 0.1219 & 0.9749 & 0.2673 & 0.8845 & 0.4327 & 0.6978 & 0.5719 & 0.4614 \\
                
                % Hourglass  & 0.0811 & 0.9872 & 0.1347 & 0.9664 & 0.2696 & 0.8811 & 0.4407 & 0.6878 & 0.5819 & 0.4514  \\

                % Ours  & 0.0629 & \textbf{0.9969} & \textbf{0.1139} & \textbf{0.9775} & \textbf{0.2539} & \textbf{0.8936} & \textbf{0.4072} & \textbf{0.7223} & \textbf{0.5394} & \textbf{0.4989}  \\

                \bottomrule
            \end{tabular}
 %        \end{sc}
	% \end{small}
    }}
    % \vspace{-0.1cm}
    
    % \vspace{-5mm}
    \label{tab:ablation_windows}
\end{table*}

\section{Additional Experiments}
\label{sec:add_result}

\subsection{Experiments of Proposition}
\label{subsec:add_proofresult}

% In this section, we provide the experiments to verify our theorems.

% \textbf{Experiment on Theorem 1.} We provide the performance comparison between plain multi-scale module and our multi-convs layer in \cref{tab:conv_performance}. With the same input feature and initialized parameters, the difference of output statistics and weight gradients are all lower than \(10^{-4}\), which indicates that the forward and backward function of those two modules are the same. While the forward and backward time of our multi-convs layer is quicker than the plain multi-scale module by 5.69 times.

In this section, we present experimental results to validate our theoretical claims.

\textbf{Experiment on Proposition 1.} The performance comparison between the plain multi‑scale module and our proposed multi‑convs layer is summarized in \cref{tab:conv_performance}. Given identical input features and initialization, the differences in output statistics and weight gradients between the two modules remain below \(10^{-4}\), confirming the functional equivalence of their forward and backward passes in \cref{subsec:proof2}. At the same time, the forward and backward runtimes of the multi‑convs layer are reduced by a factor of 5.69× relative to the plain multi‑scale module.

\textbf{Experiment on Proposition 2.} To validate the proof of Proposition 2 given in \cref{subsec:proof2}, we visualize the evolution of \(\theta\) alongside the learning rate during training (\cref{fig:lr_theta}). The curve shows that \(\theta\) starts from \(-\pi/2\) and stabilizes near \(\pi/2\) after approximately 30 epochs.

To further assess the effectiveness of the proposed loss, we conduct an ablation study on a 6‑year subset of ERA5 data, comparing eight different loss formulations (\cref{tab:ablation_loss}). The results indicate that both the variable‑weighted loss and the latitude‑weighted loss individually outperform the plain L2 loss. Interestingly, simply combining the two weighting strategies without adaptive balancing yields worse performance than either alone. In contrast, the hybrid loss with sine‑based balancing achieves the best results, improving Z500, T2M, T850, and U10 by 0.8, 0.02, 0.024, and 0.017, respectively, over the best single‑weighted variant. These experiments confirm that our hybrid loss effectively guides the model to balance variable‑wise and geographic information, leading to superior forecast accuracy.

\subsection{Training Time}
\label{subsec:times}

As shown in \cref{tab:times}, we compare the number of parameters, multiply–accumulate operations (MACs), GPU usage, training duration, GPU cost in inference, and Latency across eight baseline models. While our model has more parameters and MACs than GNN‑based methods such as GraphCast and OneForecast, its overall computational cost remains substantially lower than that of large‑scale predecessors, notably FengWu, Pangu‑Weather, and GraphCast. These results show that our approach attains better performance while retaining competitive efficiency.

\textbf{How does the model achieve better performance at lower cost?} We examine two training metrics in \cref{fig:losses} and highlight two observations. (1) \cref{fig:loss_a} compares training losses between VA-MoE~\cite{vamoe} and our model. Although VA-MoE converges more rapidly in early epochs, its loss curve displays sharp, irregular spikes (marked in \textcolor{green}{Circle A}), whereas our loss decreases smoothly throughout training. (2) \cref{fig:loss_b} compares Z500 RMSE during training. For the first 90 epochs (under a higher learning rate), our model lags behind VA-MoE. However, in the final 10 epochs, our model exhibits a sharp drop in RMSE when a much lower learning rate is applied (see \textcolor{green}{Circle B}). As noted in \cref{subsec:loss}, lower learning rates favor the optimization of atmospheric variables. Our training schedule aligns with this principle, thereby harmonizing effectively with the characteristics of atmospheric data.

\subsection{Additional Results}
\label{subsec:add_result}

% \textbf{Additional weather forecasting analysis.} As illustrated in \cref{fig:sota}, we compare Ours with several state-of-the-art models across forecasting horizons ranging from 6 to 120 hours, evaluated on four key atmospheric variables. Experimental results show that Ours performs comparably to WeatherGFT~\cite{weathergft} and Stormer~\cite{stormer} in predicting T850 and U10, while outperforming all baselines in Z500 and T2M. These findings highlight the effectiveness of Ours and its integrated multi-scale architecture in weather forecasting, demonstrating its ability to capture atmospheric patterns with different regions across diverse meteorological variables.

\textbf{Additional weather forecasting analysis.} As shown in \cref{fig:sota}, we compare the proposed model with several baselines over forecast horizons from 6 to 120 hours on four key atmospheric variables. Results indicate that our model performs on par with WeatherGFT~\cite{weathergft} and Keisler~\cite{keisler} in predicting T850 and U10, while surpassing all compared methods in Z500 and T2M. These outcomes underscore the effectiveness of the proposed approach and its integrated multi‑scale architecture, demonstrating a consistent ability to capture atmospheric patterns across different spatial regions and diverse meteorological variables.

\textbf{Additional typhoon track prediction.} In addition to the aggregate errors reported in \cref{tab:tc}, we provide more detailed error visualizations for individual lead times in \cref{fig:tcs_part1,fig:tcs_part2}. These plots show the Mean Distance Error (MDE, in kilometers) across nine forecasting systems at each forecast step. The results indicate that our model achieves competitive accuracy in the short term while delivering superior performance at longer lead times, particularly for Typhoons Bebinca, Kong‑rey, and Yagi (see \cref{fig:tcs_bebinca,fig:tcs_kongrey,fig:tcs_yagi}). Such detailed comparisons further confirm the model’s strong capability in extreme‑weather forecasting.

To illustrate the trajectory forecasts concretely, \cref{fig:tctrack} visualizes the predicted tracks of Typhoon Ampil (\cref{fig:tctrack_ewiniar}) and Typhoon Yinxing (\cref{fig:tctrack_yinxing}) alongside those of nine baseline methods. Additional visualizations in \cref{fig:tcs_initial,fig:tcs_predict,fig:tcs_real} further dissect the evolution of Typhoon Yinxing by contrasting the initial conditions, model predictions, and ground‑truth observations. Collectively, these results substantiate the robustness of the proposed approach for extreme‑event assessment and demonstrate its clear advantage in accurately predicting tropical‑cyclone tracks over extended forecast horizons.

% Beyond the typhoon track visualizations in the main paper, we provide a quantitative evaluation using Mean Distance Error (MDE, in kilometers) across five forecasting systems: ECMWF~\cite{molteni1996ecmwf}, FengWu~\cite{fengwu}, Pangu‑Weather~\cite{panguweather}, STCast~\cite{stcast}, and the proposed method. Results in Figures~\ref{fig:tcs_ewiniar} and~\ref{fig:tcs_yinxing} show that our model achieves competitive short‑term accuracy while delivering superior long‑term performance. For Typhoon Yinxing and Typhoon Ewiniar, our method attains the lowest mean errors of 80.6 km and 115.4 km, respectively. 

\subsection{Additional Ablation Study}
\label{subsec:add_ablation}

\textbf{Efficiency and Accuracy Comparison.} \cref{tab:efficiency_accuracy} comprehensively evaluates the efficiency and accuracy of our proposed approach against existing structural re-parameterization methods, such as RepVGG~\cite{ding2021repvgg} and RepLKNet~\cite{ding2022scaling}. While conventional methods implement multi-branch structures at the framework (\textit{e.g.}, PyTorch) level, meaning the training phase still incurs the full multi-branch computational cost and speedups are restricted primarily to inference, our approach pushes this optimization to the CUDA level. By rewriting the low-level execution path, the Multi-Convs Layer efficiently executes multi-scale convolutions while preserving the original representation, enabling acceleration in both training and inference.

As demonstrated in Table \ref{tab:efficiency_accuracy}a, under an identical parameter budget (9.17M), the Multi-Convs Layer achieves the lowest latency (8.9 ms) and GPU memory footprint (195 MB), significantly outperforming the plain multi-scale module and Rep-based baselines. Furthermore, \cref{tab:efficiency_accuracy}b demonstrates that this module-level efficiency directly translates into superior end-to-end forecasting performance. When integrated into the overall architecture, EMFormer not only records the lowest inference latency (98.3 ms) and computational cost (102.5G MACs), but also consistently yields the best denormalized RMSE scores across all tested forecast horizons (6-hour, 4-day, and 10-day) for key meteorological variables including Z500, T2M, and U10.

% \textbf{KV finetuning on VA-MoE.} \cref{tab:ablation_vamoe} provides the comparison between non-finetuning and kv finetuning strategies on VA-MoE~\cite{vamoe}. Experimental results show that the kv finetuning strategy fully improves the performance of VA-MoE, with 0.0046, 0.0159, 0.0295, and 0.087 improvement in ACC across 1-day, 4-day, 7-day, and 10-day predictions. From the experimental results, we could draw the conclusion that the kv finetuning strategy is not just make sense in our model, but also works well in other method, like VA-MoE.

\textbf{Finetuning strategy on VA‑MoE.} \cref{tab:ablation_vamoe} presents an ablation study comparing the VA‑MoE model~\cite{vamoe} without finetuning and with the proposed accumulative context finetuning strategy. Results show that accumulative finetuning consistently improves forecast accuracy, increasing ACC by 0.0046, 0.0159, 0.0295, and 0.087 for 1‑day, 4‑day, 7‑day, and 10‑day predictions, respectively. These gains demonstrate that the accumulative context finetuning strategy is not only effective in our architecture, but also generalizes well to other frameworks such as VA‑MoE.

% \textbf{Balancing hyperparameter $\lambda$ in kv-cache finetuning. } \cref{tab:ablation_lambda} shows the comparison of different balancing hyperparameter $\lambda$ on the kv-cache finetuning stage. From the experiments, we observe that there is limited impact on the balancing parameter in the short-term forecasts. Even in the long-term forecasts, only when $\lambda=1.0$ the performance has a significant decline. From the experimental results, we draw the conclusion that a complete inclination towards the current token setting will lead to a deterioration in the performance. Thus, we choose $\lambda=0.9$ as the best value to balance the current tokens and the historical tokens.

\textbf{Balancing hyperparameter $\lambda$ in accumulative context finetuning. } \cref{tab:ablation_lambda} compares different values of the balancing hyperparameter $\lambda$ during the accumulative context finetuning stage. Experimental results indicate that the choice of $\lambda$ has only a minor effect on short‑term forecast accuracy. Even for long‑term predictions, a noticeable performance decline occurs only when $\lambda = 1.0$, \textit{i.e.}, when the updated scores rely exclusively on the current token without blending historical information. These observations suggest that completely disregarding historical token information leads to degraded forecast consistency. Therefore, we select $\lambda = 0.9$ as an effective trade‑off, balancing the contributions of current and historical tokens in the cache‑update process.

% \textbf{Cache length in kv-cache finetuning. } \cref{tab:ablation_cachelength} shows the comparison of the cache length in the kv-cache finetuning stage. Just as intuitively, increasing cache length from 3 to 6 will make the prediction results of the model better. The difference between cache length 5 and cache length 6 has limited discrepancy with almost 0.0005 and 0.0013 RMSE improvements in 7-day and 10-day prediction. However, the overall computational load will multiply as the cache length increases. Thus, we choose cache length equal to 5 as the hyperparameter to prune the kv caches.

\textbf{Cache length in accumulative context finetuning. } \cref{tab:ablation_cachelength} presents an ablation study on the cache length during accumulative context finetuning. As expected, increasing the cache length from 3 to 6 yields improved forecast performance. However, the gain between cache lengths 5 and 6 is marginal, with RMSE improvements of only 0.0005 and 0.0013 for 7‑day and 10‑day predictions, respectively. Meanwhile, computational cost grows nearly linearly with cache length. We therefore select a cache length of 5 as a practical trade‑off between predictive accuracy and computational overhead, and adopt this value for pruning the KV values in our experiments.

% \textbf{Number of steps in finetuning. } \cref{tab:ablation_steps} shows an ablation study on the number of steps in the KV-cache finetuning stage with several steps from 5 to 10. There is no doubt that the performance of forecasts are increased with the increase of steps. After the number of step achieving 9, the performance of weather forecasting almost achieve the stable stage and do not increase again. Considering the computation cost increasing with the steps, we pick 10 as the finetuning steps in this work.

\textbf{Number of steps in finetuning. } \cref{tab:ablation_steps} presents an ablation study on the number of steps used in the accumulative context finetuning stage, with step counts ranging from 5 to 10. As expected, forecast performance improves as the number of steps increases. After reaching 9 steps, however, the weather forecasting accuracy stabilizes and shows no further gains. Given that computational cost grows with the number of steps, we select 10 steps as a practical compromise between performance and efficiency for the experiments in this work.

% \textbf{Window sizes in window attention. } \cref{tab:ablation_windows} presents an ablation study on the window size, including $4\times4$ and hybrid sizes. From the previous experiments, we find that the model with a single window size may causes the prediction results with several grids, which is inconsistent with the smooth distribution of the atmospheric variables on the global face. The hybrid sizes has a better performance than a single size, and avoids the grids effect on the prediction results.

\textbf{Window sizes in window attention. } \cref{tab:ablation_windows} presents an ablation study comparing a fixed \(4 \times 4\) window size with a hybrid configuration. Previous experiments indicate that using a single window size can introduce grid‑like artifacts in the predictions, which is inconsistent with the smooth spatial distribution of atmospheric variables on a global scale. In contrast, hybrid window sizes yield better performance and effectively mitigate these gridding artifacts.

\begin{figure*}[t] % 顶部位置
    \centering
    \begin{minipage}{\textwidth}
        \centering
        % 第一行：2张图片
        \begin{subfigure}{0.5\textwidth}
            \centering
            \captionsetup{justification=centering}
            \caption{Typhoon AMPIL (2024.08)}
            \includegraphics[width=1.0\linewidth]{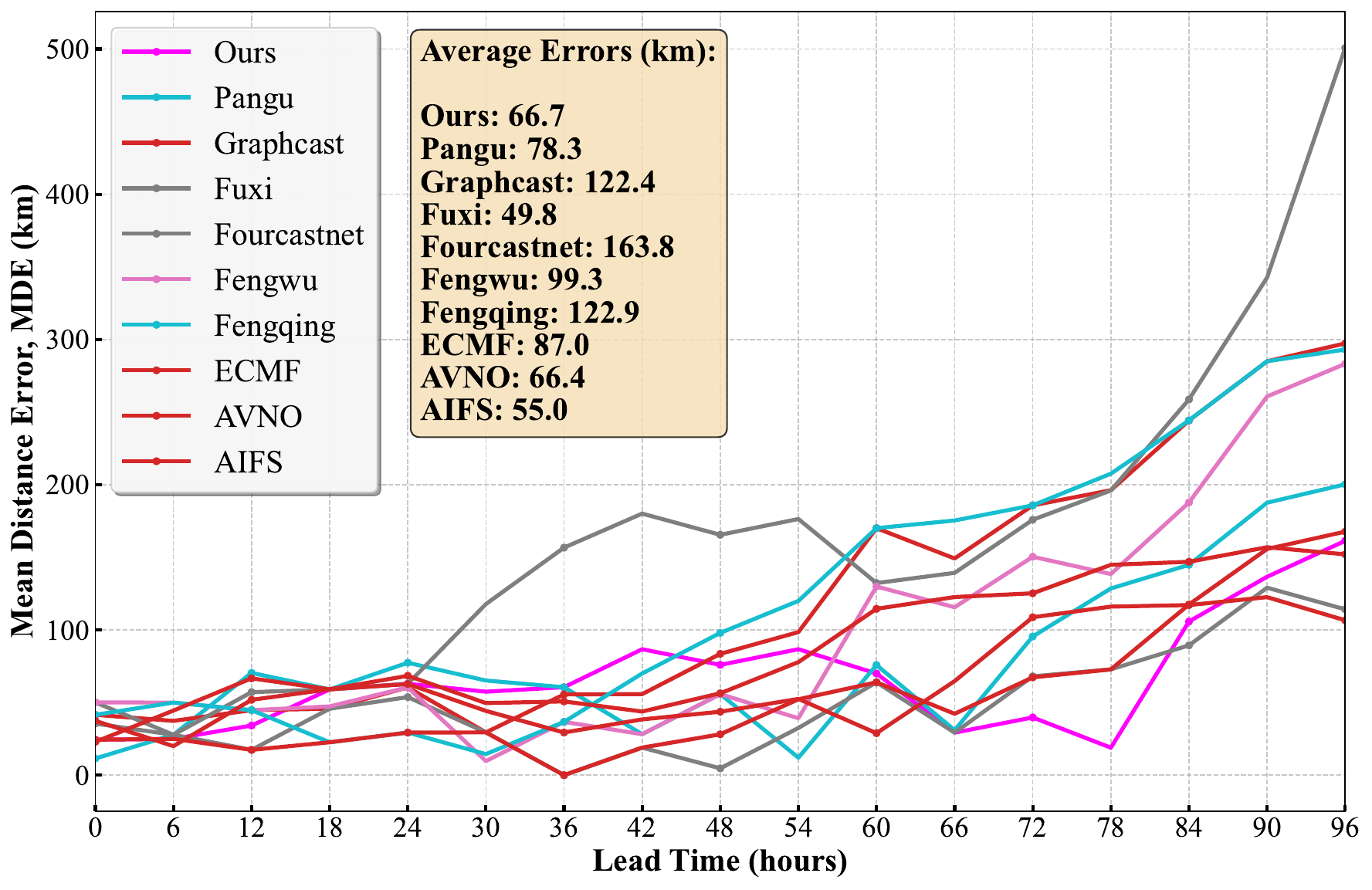}
            \label{fig:tcs_ampil}
        \end{subfigure}%
        \hfill
        \begin{subfigure}{0.5\textwidth}
            \centering
            \captionsetup{justification=centering}
            \caption{Typhoon BEBINCA (2024.09)}
            \includegraphics[width=1.0\linewidth]{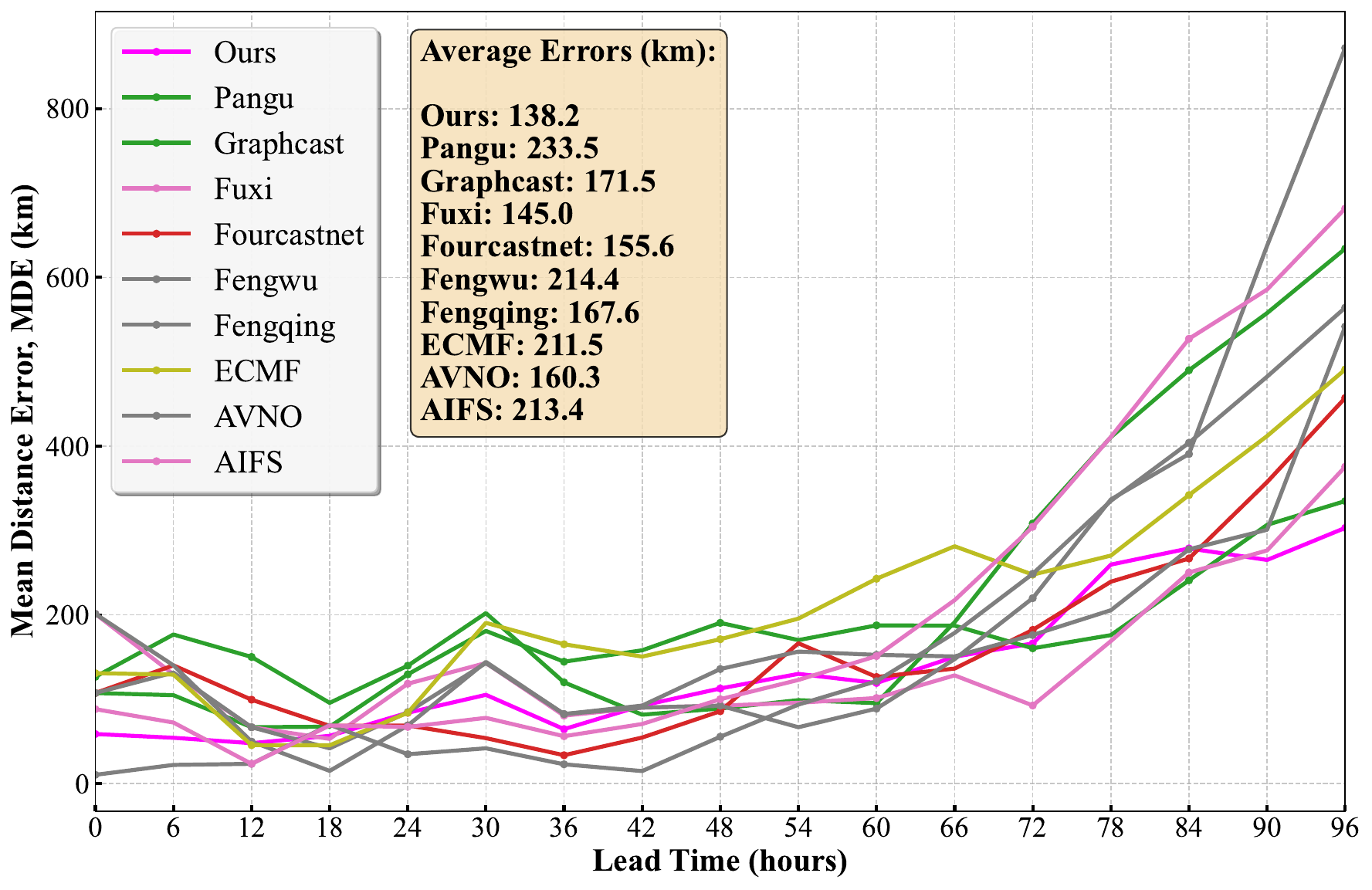}
            \label{fig:tcs_bebinca}
        \end{subfigure}
        
        \vspace{3mm}
        % 第二行：2张图片
        \begin{subfigure}{0.5\textwidth}
            \centering
            \captionsetup{justification=centering}
            \caption{Typhoon EWINIAR (2024.05)}
            \includegraphics[width=1.0\linewidth]{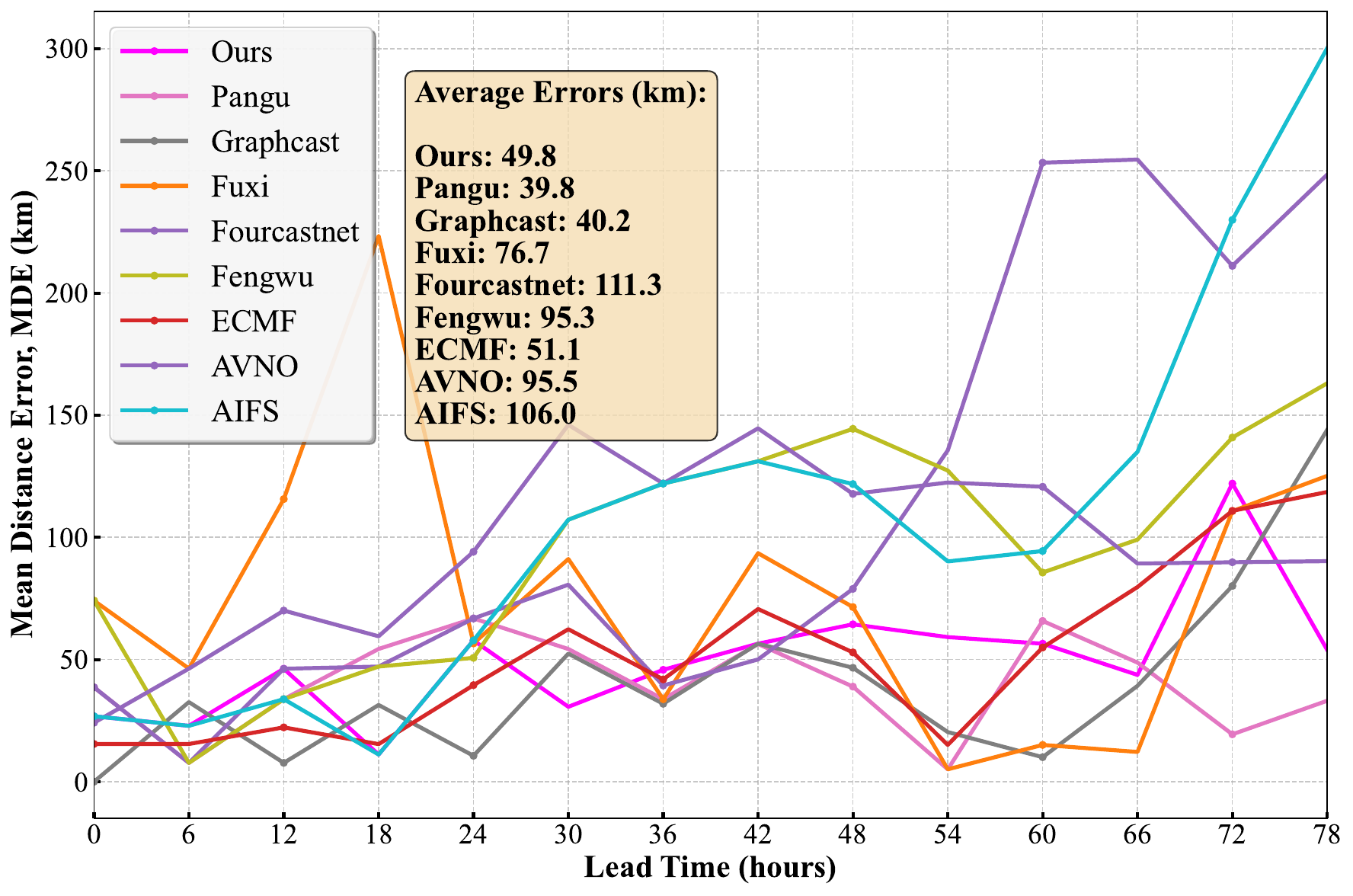}
            \label{fig:tcs_ewiniar}
        \end{subfigure}%
        \hfill
        \begin{subfigure}{0.5\textwidth}
            \centering
            \captionsetup{justification=centering}
            \caption{Typhoon GAEMI (2024.07)}
            \includegraphics[width=1.0\linewidth]{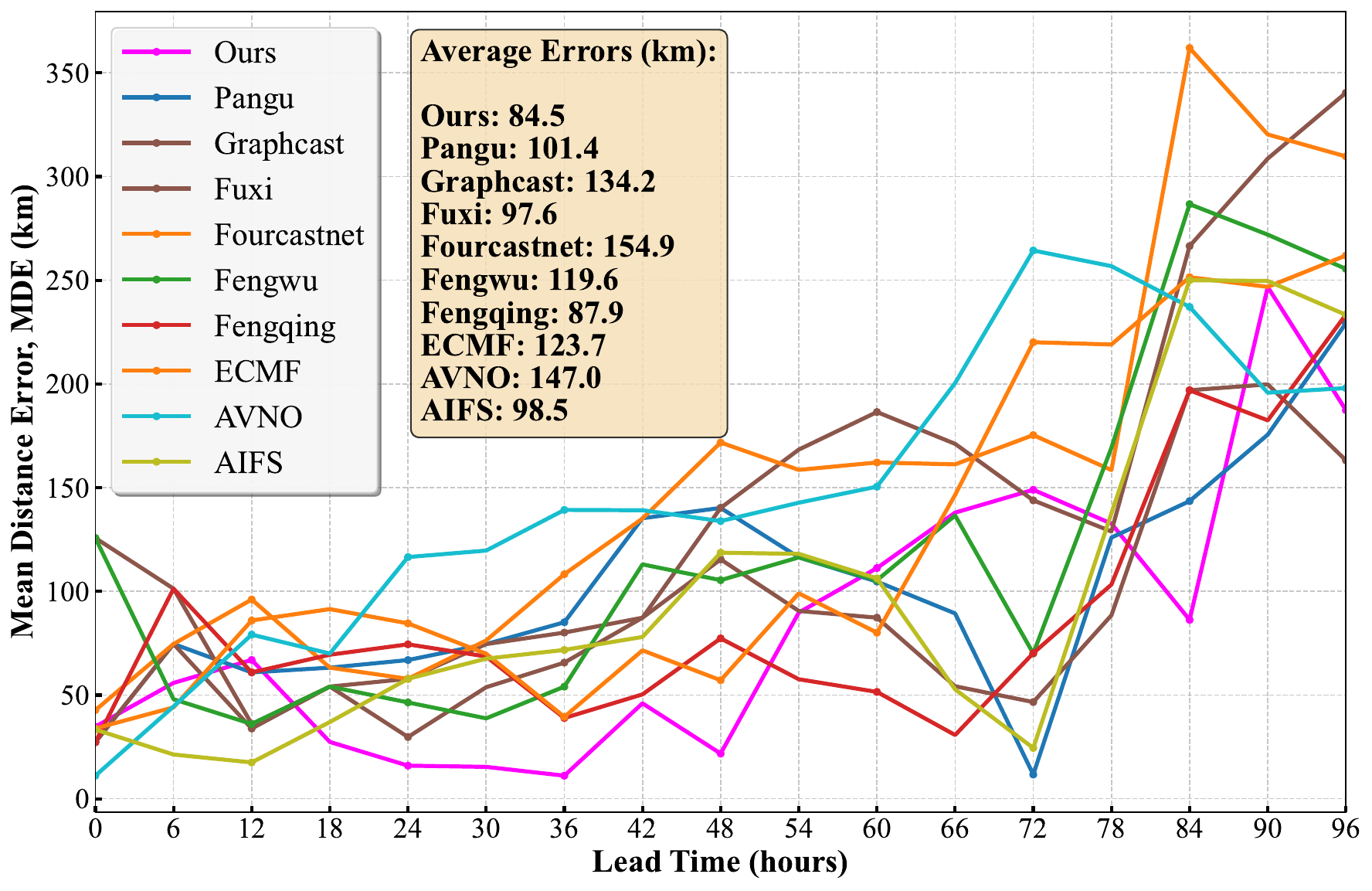}
            \label{fig:tcs_gaemi}
        \end{subfigure}
        
        \vspace{3mm}
        % 第三行：2张图片
        \begin{subfigure}{0.5\textwidth}
            \centering
            \captionsetup{justification=centering}
            \caption{Typhoon KONG-REY (2024.10)}
            \includegraphics[width=1.0\linewidth]{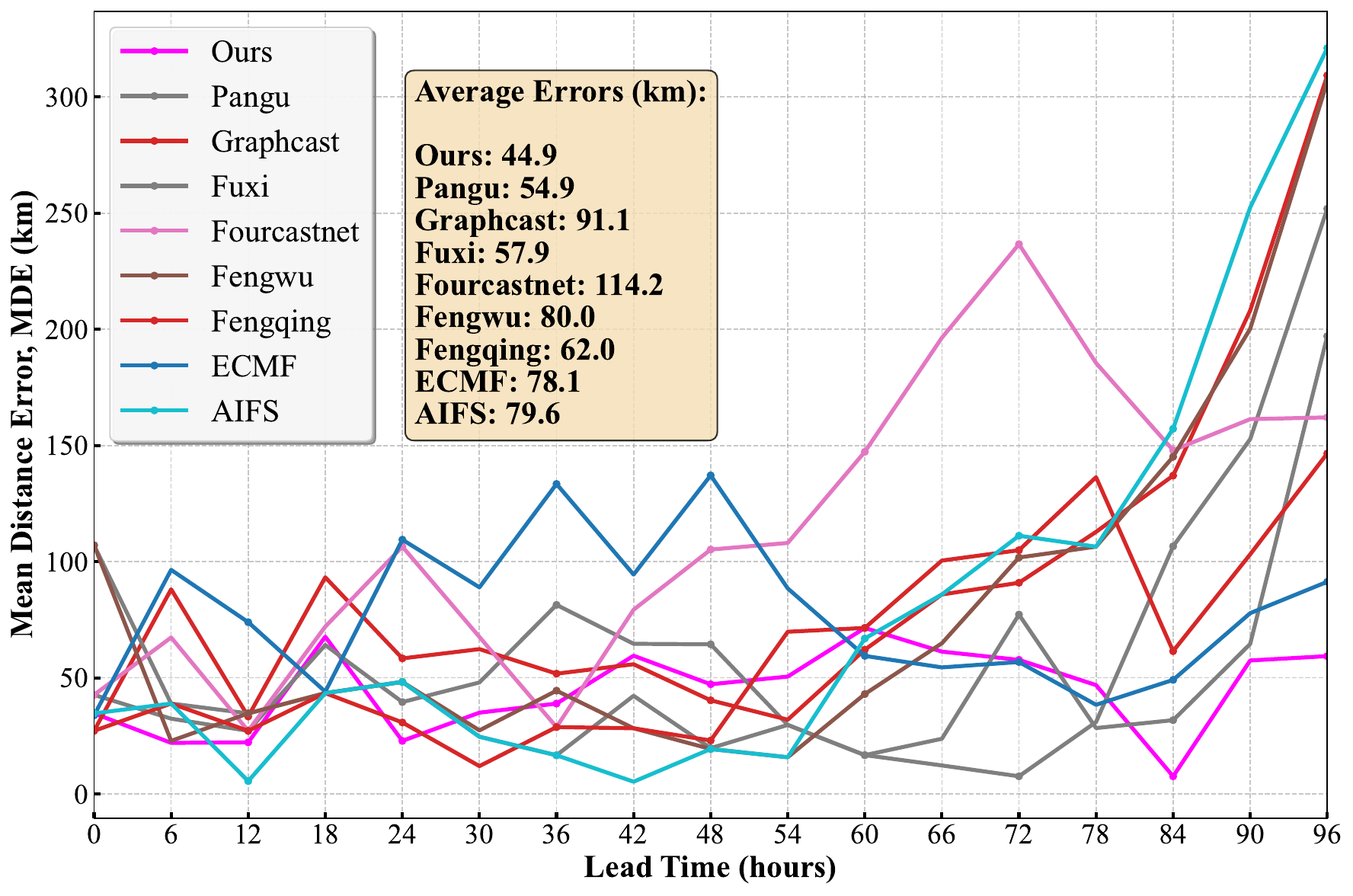}
            \label{fig:tcs_kongrey}
        \end{subfigure}%
        \hfill
        \begin{subfigure}{0.5\textwidth}
            \centering
            \captionsetup{justification=centering}
            \caption{Typhoon KRATHON (2024.09)}
            \includegraphics[width=1.0\linewidth]{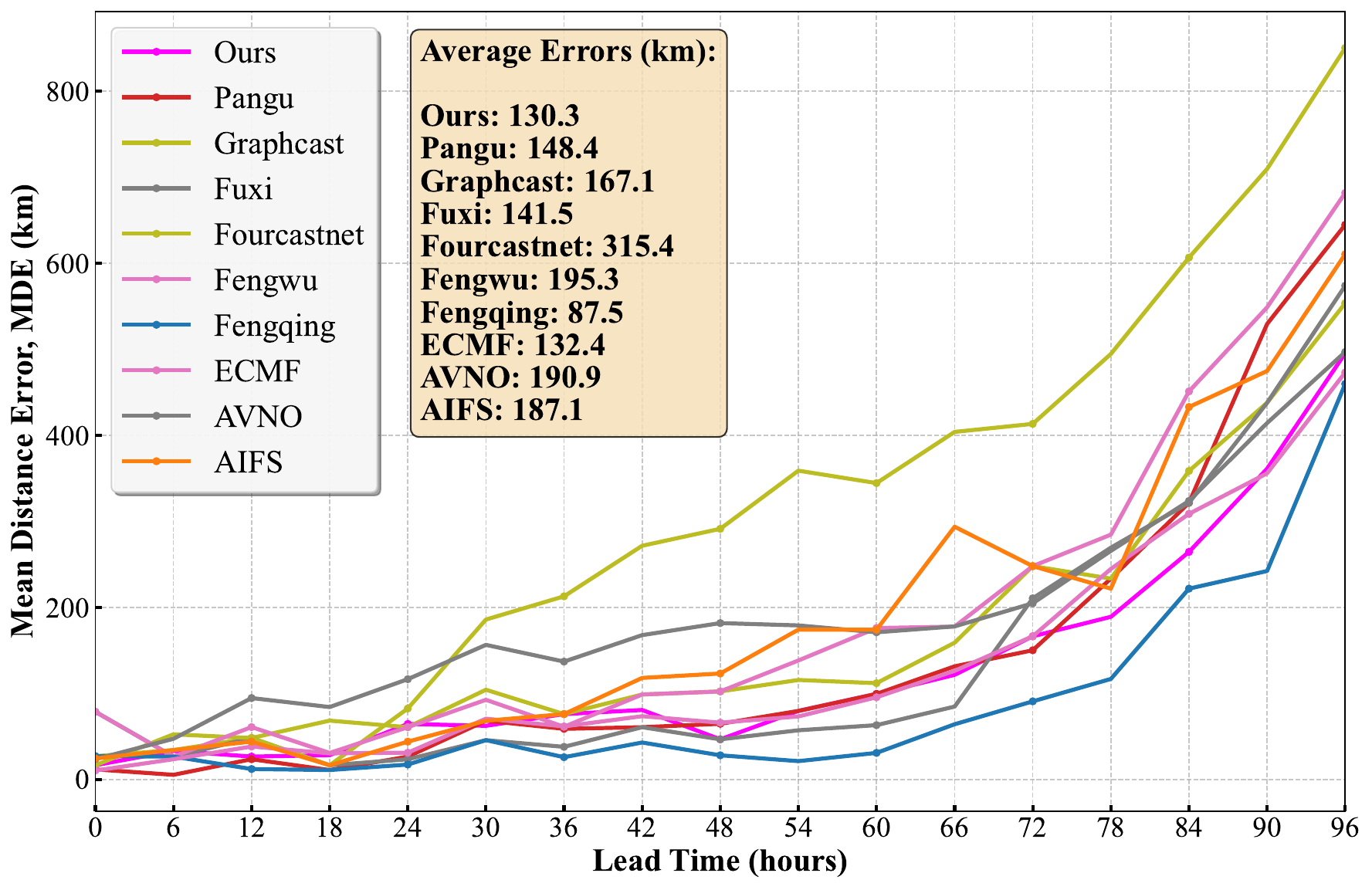}
            \label{fig:tcs_krathon}
        \end{subfigure}
        
        \vspace{2mm}
        
    \end{minipage}
    \caption{Typhoon Track Assessment (Part 1 of 2). Comparative analysis of Mean Distance Error (MDE, in kilometers $\downarrow$) for six typhoons. (a)-(f) show the prediction accuracy across different lead times.}
    \label{fig:tcs_part1}
\end{figure*}

% 第二部分：下半部分图片，使用[!t]确保紧跟在上半部分之后
\begin{figure*}[!t] % 强制顶部位置，紧跟上一部分
    \centering
    \ContinuedFloat
    \begin{minipage}{\textwidth}
        \centering
        % 第四行：2张图片
        \begin{subfigure}{0.5\textwidth}
            \centering
            \captionsetup{justification=centering}
            \caption{Typhoon MAN-YI (2024.11)}
            \includegraphics[width=1.0\linewidth]{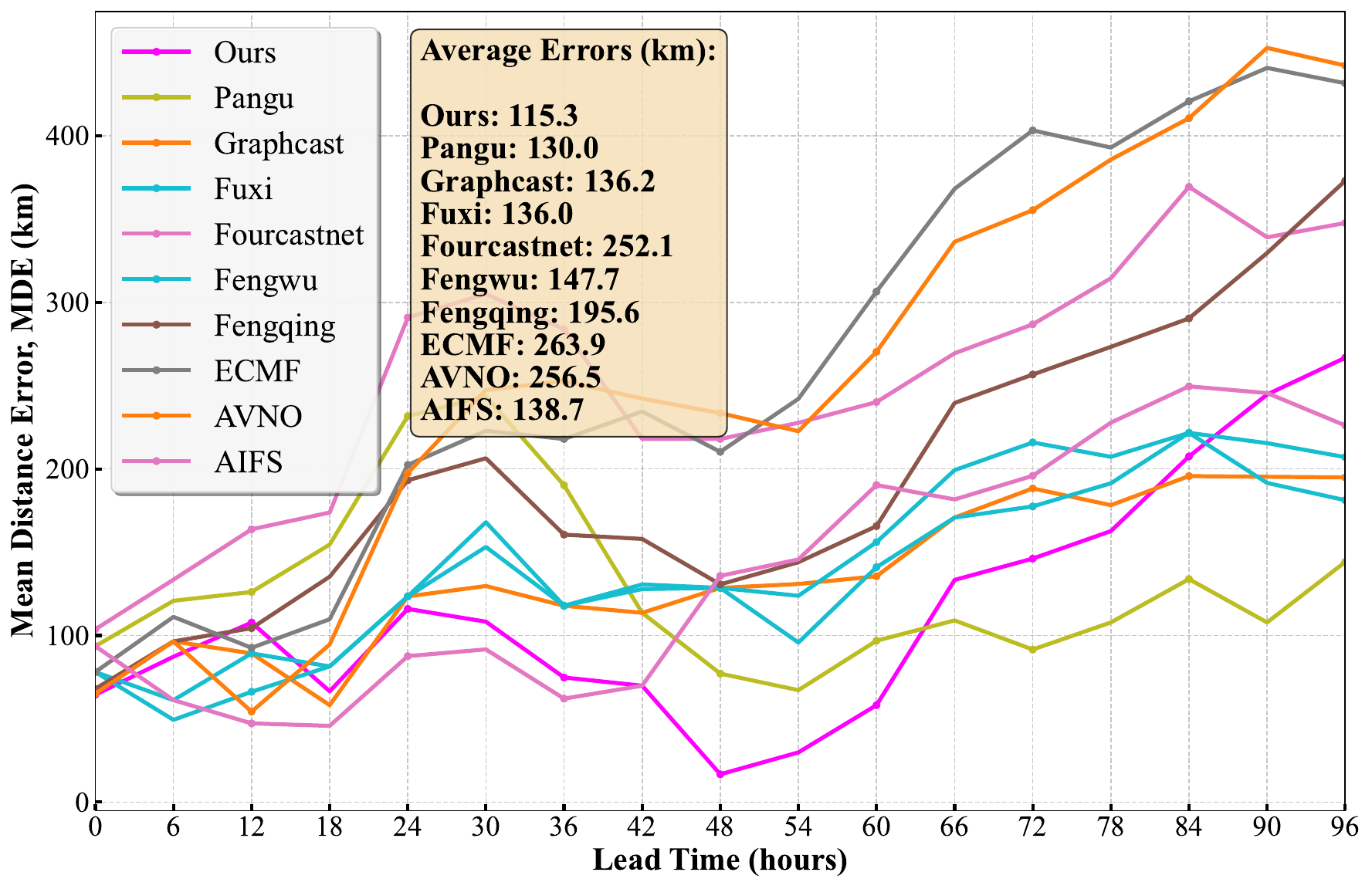}
            \label{fig:tcs_manyi}
        \end{subfigure}%
        \hfill
        \begin{subfigure}{0.5\textwidth}
            \centering
            \captionsetup{justification=centering}
            \caption{Typhoon SHANSHAN (2024.08)}
            \includegraphics[width=1.0\linewidth]{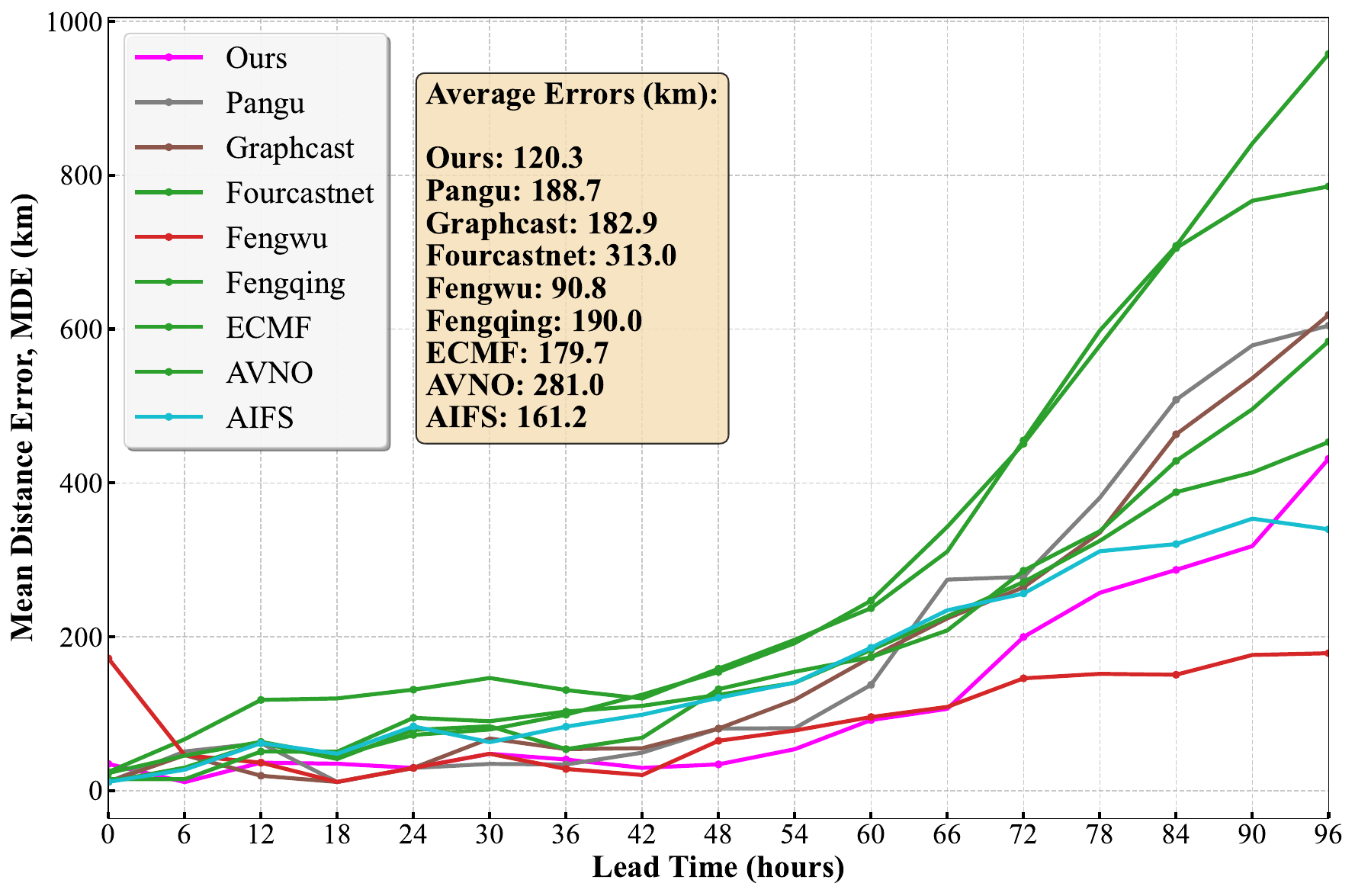}
            \label{fig:tcs_shanshan}
        \end{subfigure}
        
        \vspace{3mm}
        % 第五行：2张图片
        \begin{subfigure}{0.5\textwidth}
            \centering
            \captionsetup{justification=centering}
            \caption{Typhoon YAGI (2024.09)}
            \includegraphics[width=1.0\linewidth]{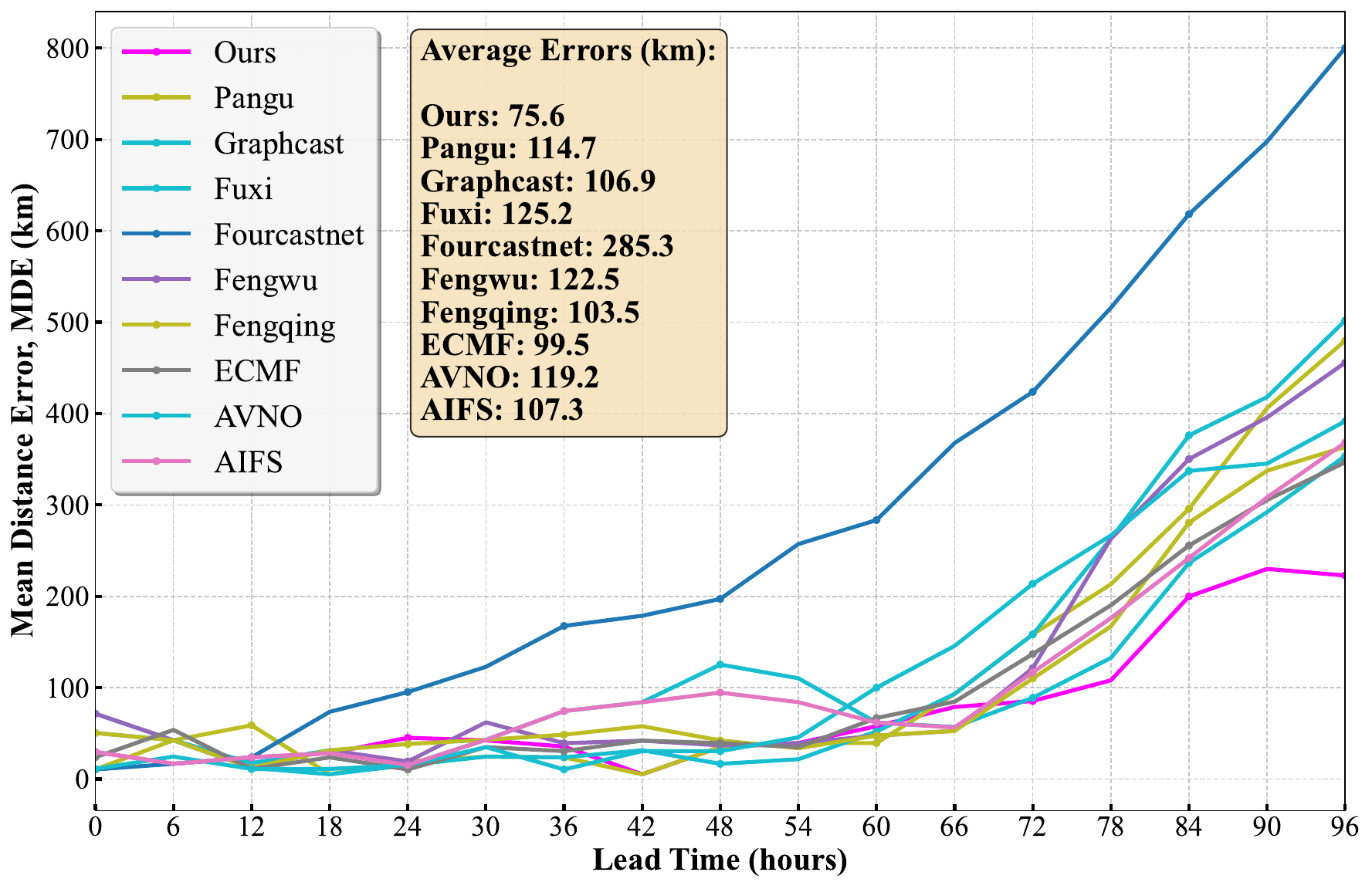}
            \label{fig:tcs_yagi}
        \end{subfigure}%
        \hfill
        \begin{subfigure}{0.5\textwidth}
            \centering
            \captionsetup{justification=centering}
            \caption{Typhoon YINXING (2024.11)}
            \includegraphics[width=1.0\linewidth]{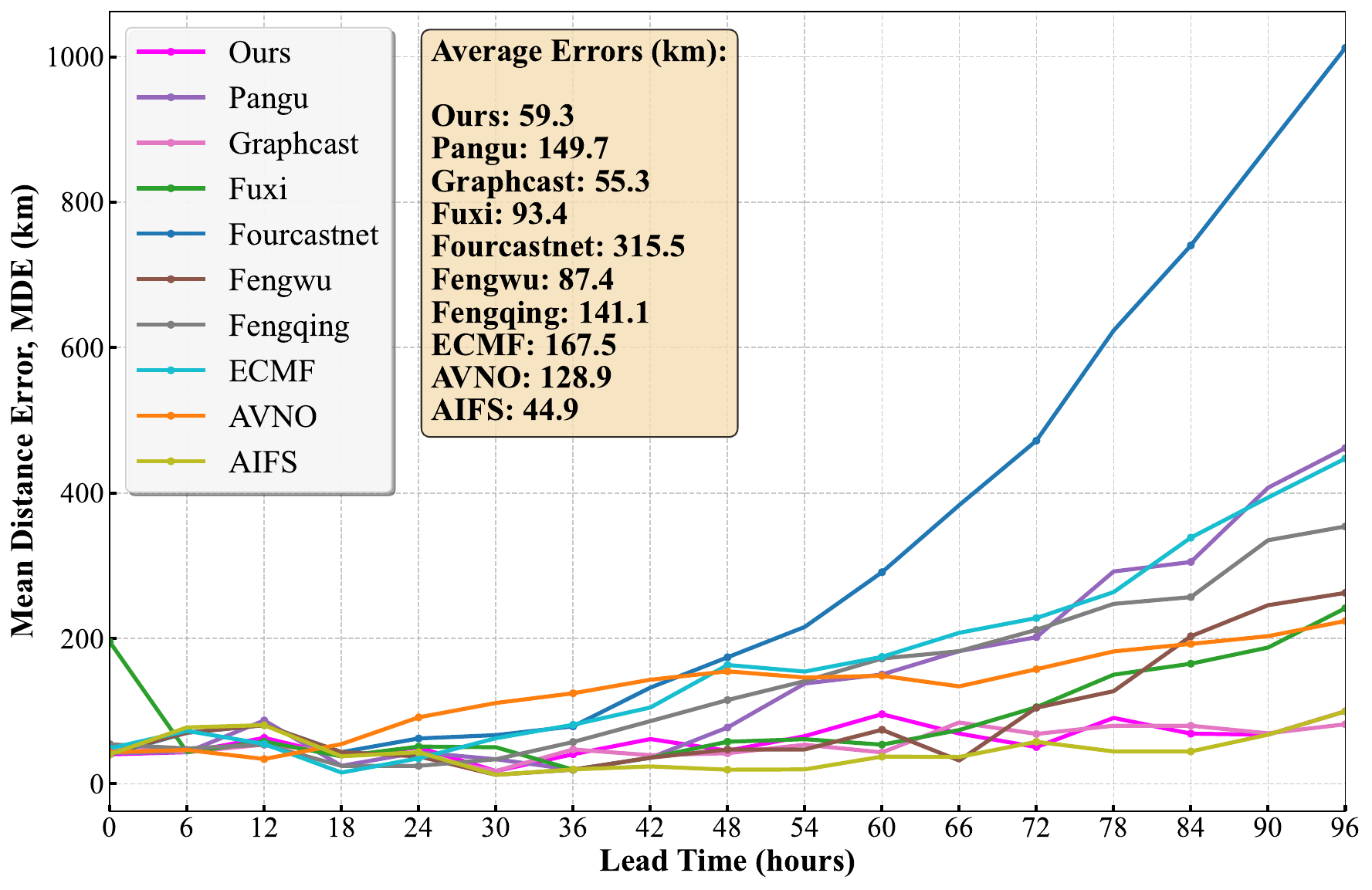}
            \label{fig:tcs_yinxing}
        \end{subfigure}
        
        \vspace{2mm}
    \end{minipage}
    \caption{(Continued) Typhoon Track Assessment (Part 2 of 2). (g)-(j) continue the comparative analysis for the remaining four typhoons. The complete assessment demonstrates consistent performance patterns across different weather conditions and seasons.}
    \label{fig:tcs_part2}
\end{figure*}

\begin{figure*}[t] % 使用figure*环境确保图片出现在页面顶部
    \centering
    \begin{minipage}{\columnwidth} % 限制为单栏宽度
        % 第一行：3张图片
        \centering
        \begin{subfigure}{0.56\textwidth}
            \centering
            \captionsetup{justification=centering}
            \includegraphics[width=1.0\linewidth]{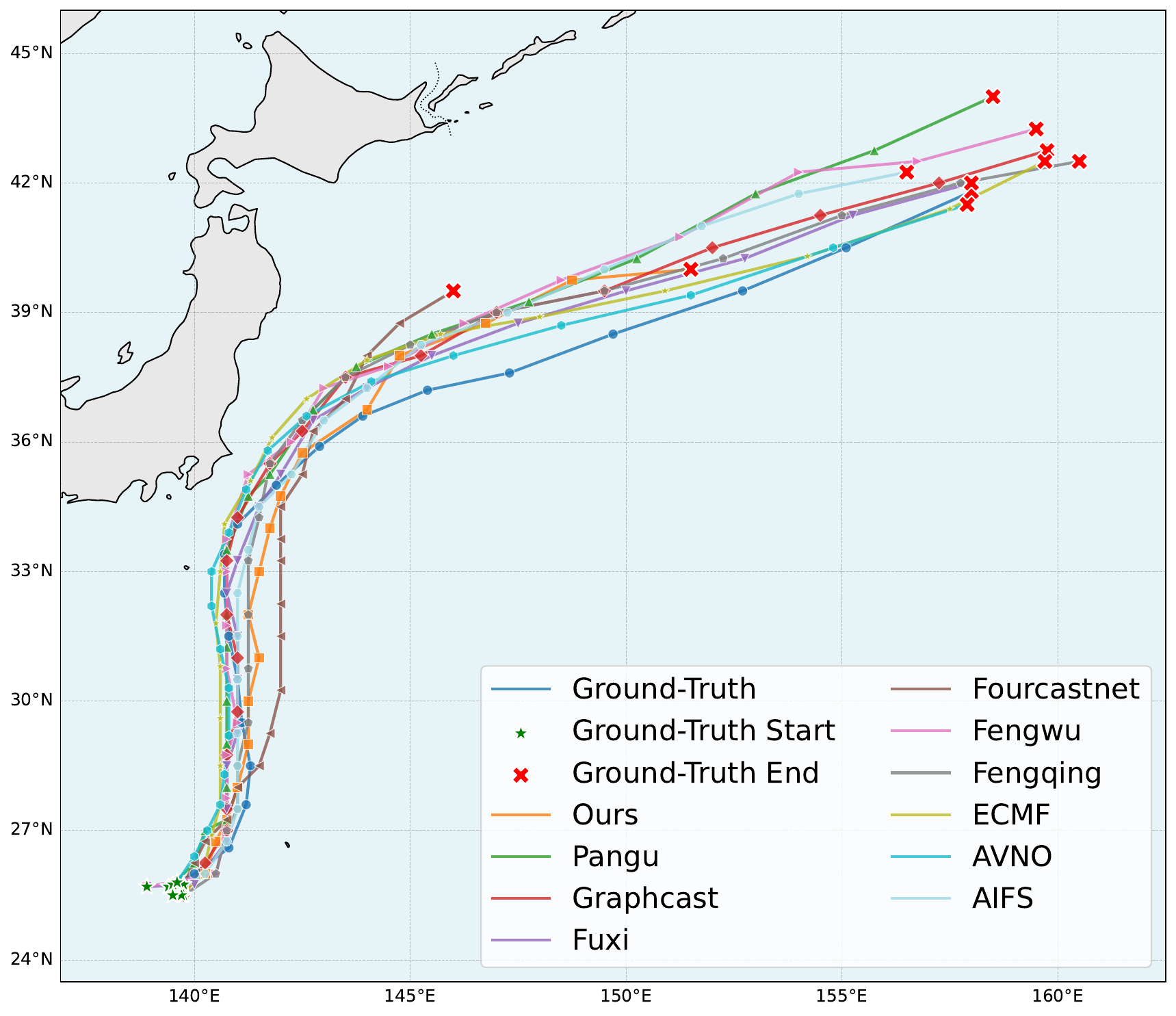}
            \caption{Typhoon AMPIL (2024.08)}
            \label{fig:tctrack_ewiniar}
        \end{subfigure}%
        % \hspace{0.3\textwidth} % 居中占位
        \hfill
        \begin{subfigure}{0.42\textwidth}
            \centering
            \captionsetup{justification=centering}
            \includegraphics[width=1.0\linewidth]{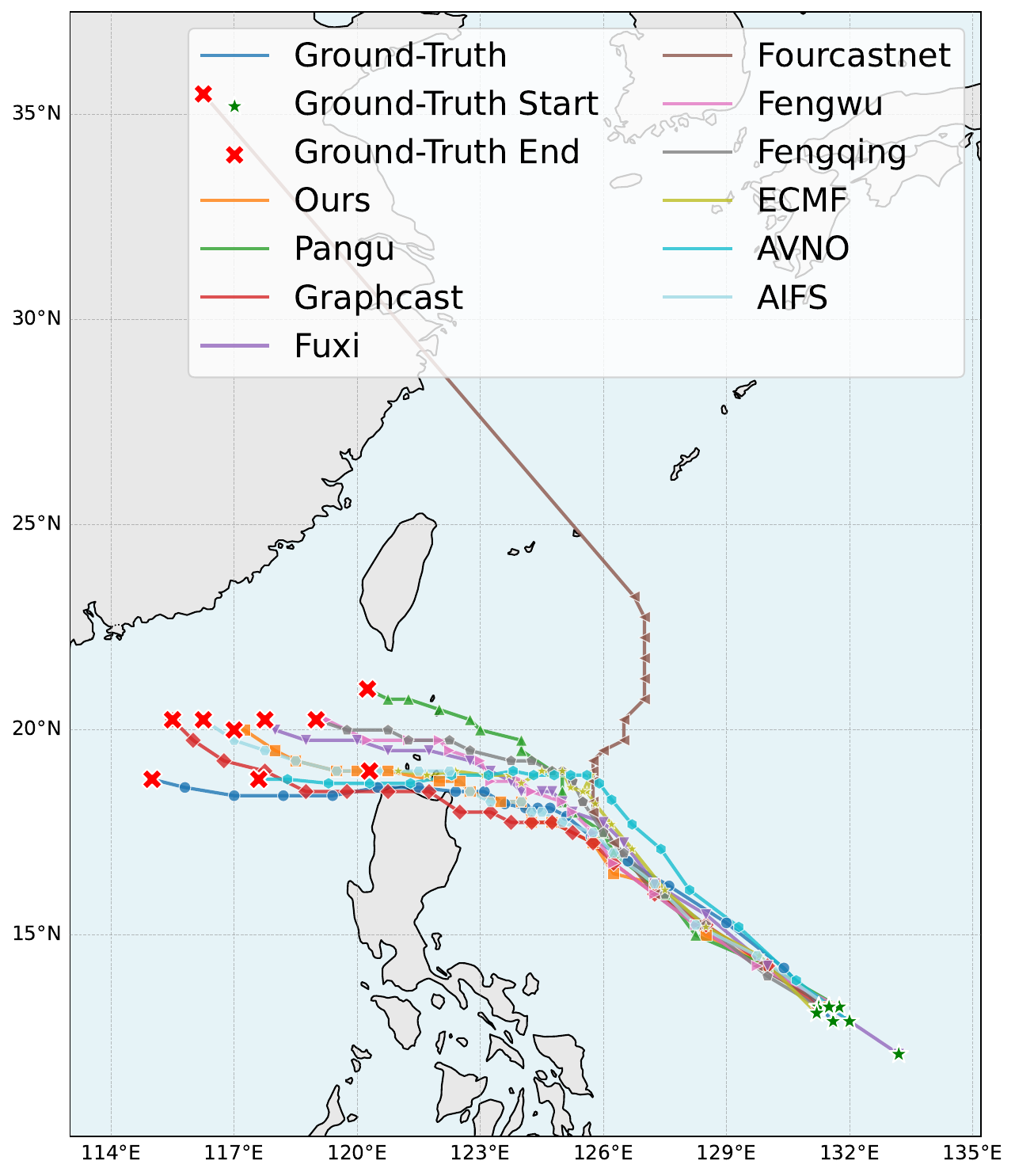}
            \caption{Typhoon Yinxing (2024.11)}
            \label{fig:tctrack_yinxing}
        \end{subfigure}

        \begin{subfigure}{0.33\textwidth}
            \centering
            \captionsetup{justification=centering}
            \includegraphics[width=1.0\linewidth]{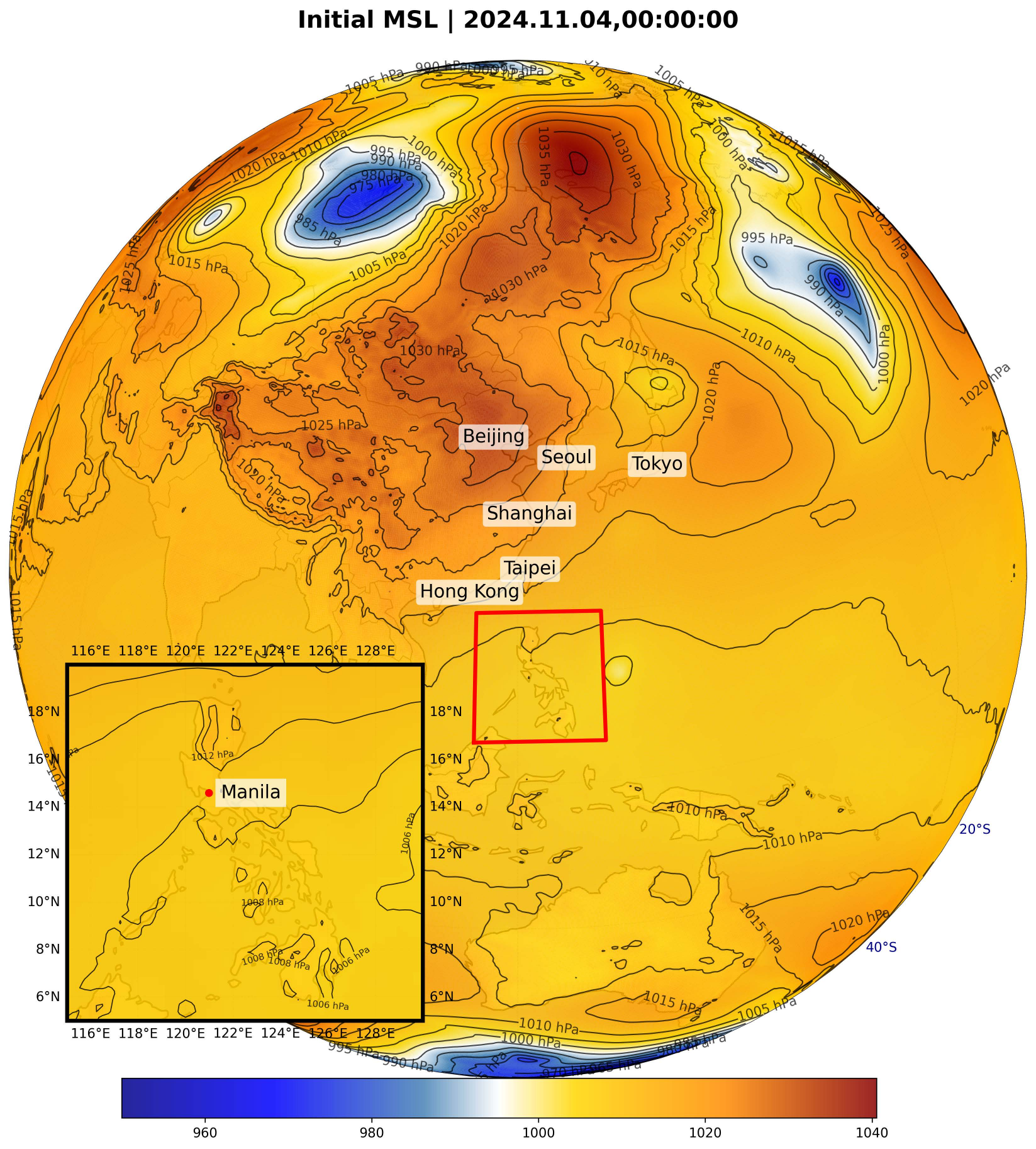}
            \caption{Initial MSL\\2024.11.04, T+0}
            \label{fig:tcs_initial}
        \end{subfigure}%
        \hfill
        \begin{subfigure}{0.33\textwidth}
            \centering
            \captionsetup{justification=centering}
            \includegraphics[width=1.0\linewidth]{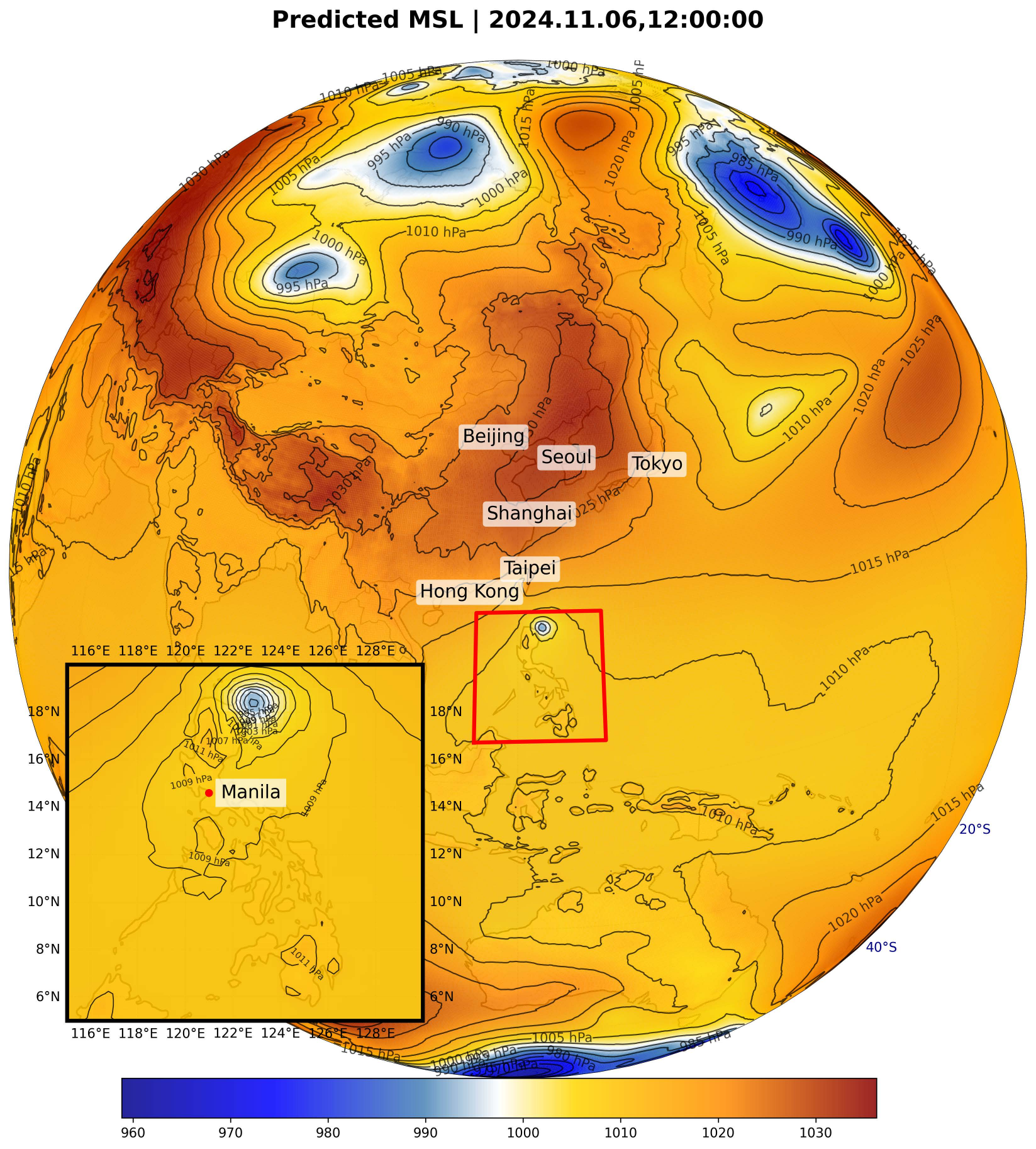}
            \caption{Predicted MSL\\2024.11.06, T+12}
            \label{fig:tcs_predict}
        \end{subfigure}%
        \hfill
        \begin{subfigure}{0.33\textwidth}
            \centering
            \captionsetup{justification=centering}
            \includegraphics[width=1.0\linewidth]{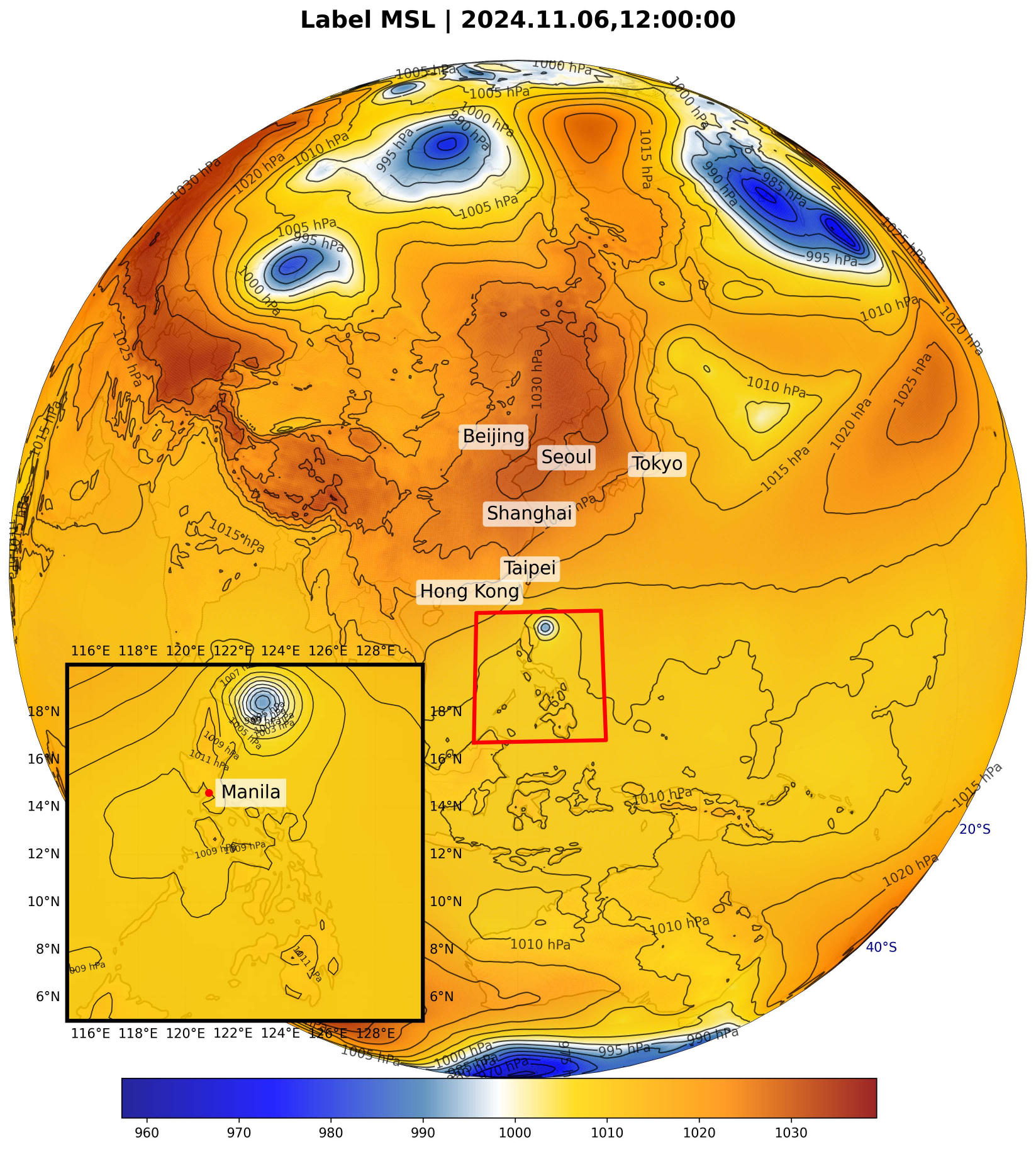}
            \caption{Real MSL\\2024.11.06, T+12}
            \label{fig:tcs_real}
        \end{subfigure}

        \vspace{-1mm}
        % 整体大标题
        \caption{ Typhoon Track Assessment. (a) and (b) present a five-day comparative analysis of Mean Distance Error (MDE, in kilometers $\downarrow$) for Typhoons AMPIL and Yinxing, respectively. (c), (d), and (e) visualize the evolution of Mean Sea-Level Pressure (MSL) across three temporal stages: initial conditions, 60-hour predictions, and corresponding ground-truth observations. }
        
        \vspace{-3mm}
        \label{fig:tctrack}
    \end{minipage}
\end{figure*}

% \noindent
% % \footnotesize
% \dirtree{%
% .1 project\_root/.
% .2 code/.
% .3 config/.
% .4 EMFormer.yaml.
% .4 GraphCast.yaml.
% .4 OneForecast.yaml.
% .3 networks/.
% .4 EMFormer.py.
% .4 \_\_init\_\_.py.
% .4 l2\_loss.py.
% .4 test\_triple\_conv\_cuda.py.
% .3 utils/.
% .4 YParams.py.
% .4 data\_loader\_npyfiles.py.
% .4 fileio.py.
% .4 logging\_utils.py.
% .4 metrics.py.
% .4 weighted\_acc\_rmse.py.
% .3 inference.py.
% .3 test.py.
% .3 test.sh.
% .2 logs/.
% .3 typhoon/.
% .4 AMPIL\_trajectories.txt.
% .4 BEBINCA\_trajectories.txt.
% .4 Ewiniar\_trajectories.txt.
% .4 GAEMI\_trajectories.txt.
% .4 KONG-REY\_trajectories.txt.
% .4 KRATHON\_trajectories.txt.
% .4 MAN-YI\_trajectories.txt.
% .4 SHANSHAN\_trajectories.txt.
% .4 YAGI\_trajectories.txt.
% .4 Yinxing\_trajectories.txt.
% .3 vision\_task/.
% .4 base.log.
% .4 small.log.
% .4 tiny.log.
% .3 weather/.
% .4 emformer.log.
% .4 graphcast.log.
% .4 oneforecast.log.
% .4 vamoe.log.
% }
% % \normalsize

\subsection{Visualization}
\label{subsec:add_vis}

% We provide the visualization about semantic segmentation on ADE20K, comparing ground-truth annotation with the predicted results from our small and base models in \cref{fig:vis_seg}.

We present visualizations of semantic segmentation results on ADE20K, comparing ground‑truth annotations with the predictions generated by EMFormer small and base models in \cref{fig:vis_seg}.

We also provide more visualization about global weather forecasting of 6-hour, 1-day, 2-day, 3-day, 5-day, 7-day, and 10-day in \cref{fig:global0}, \cref{fig:global3}, \cref{fig:global7}, \cref{fig:global11}, \cref{fig:global19}, \cref{fig:global27}, and \cref{fig:global39}.

\section{Supplementary Materials}
\label{supple}

To further ensure the transparency and reproducibility of our experimental results, we have included supplementary materials containing: partial test codes for the implemented models, complete training logs for both weather prediction and image classification tasks, and the raw latitude-longitude coordinates used for visualizing typhoon trajectories.

\begin{figure*}[t]
    \centering
    \begin{overpic}[width=0.7\linewidth]{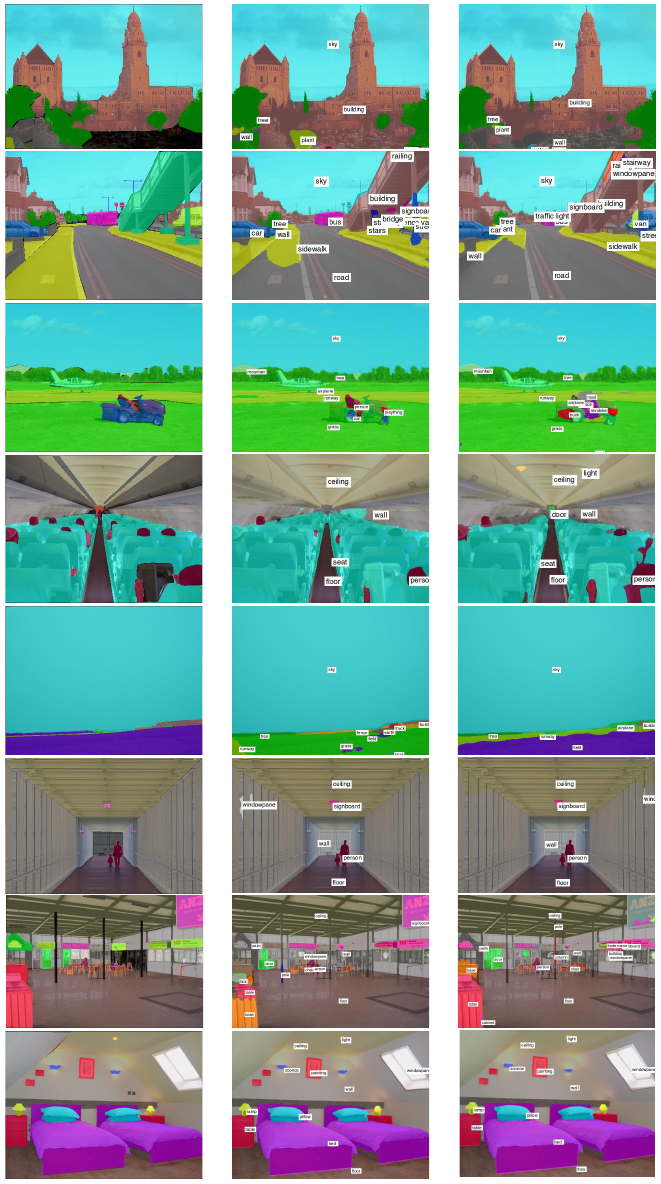} 

    \put(5, 100.3 ){\small Ground Truth}
    \put(22.6, 100.3 ){\small EMFormer-Small}
    \put(42, 100.3 ){\small EMFormer-Base}

    \end{overpic}
    \caption{Visualization of segmentation results on ADE20K, comparing ground‑truth annotations with predictions from our small and base models. }
    \label{fig:vis_seg}
\end{figure*}

\begin{figure*}[t]
    \centering
    \begin{overpic}[width=\linewidth]{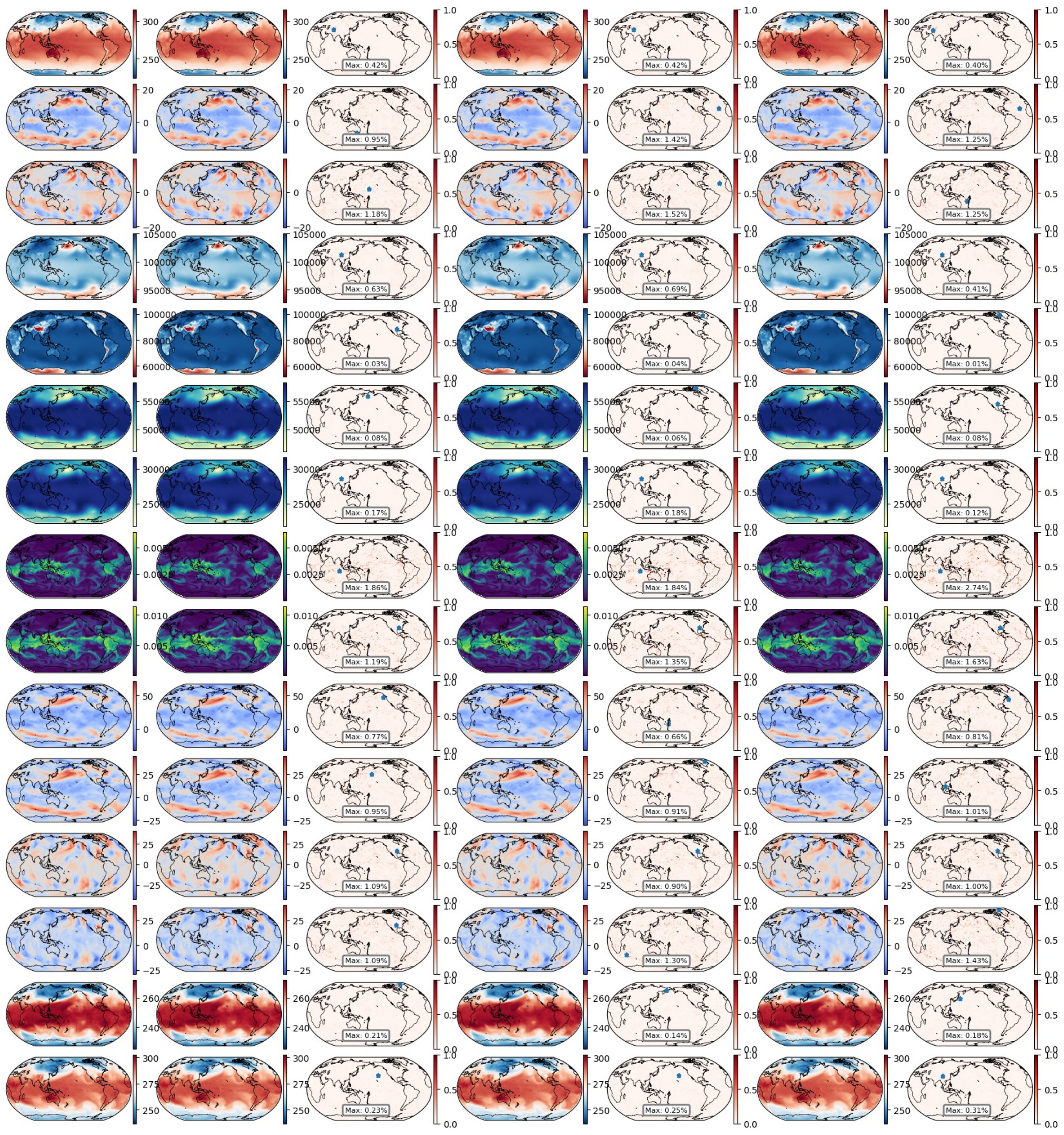} 

    \put(5, 99.3){\scriptsize GT}
    \put(16.4, 99.3){\scriptsize Graphcast}
    \put(27.8, 99.3){\scriptsize Error@Graphcast}
    \put(42.4, 99.3){\scriptsize Oneforecast}
    \put(53.3, 99.3){\scriptsize Error@Oneforecast}
    \put(70.5, 99.3){\scriptsize Ours}
    \put(81.7, 99.3){\scriptsize Error@Ours}

    \put(-0.5, 68.4){\scriptsize \rotatebox{90}{ SP }}
    \put(-0.5, 74.5){\scriptsize \rotatebox{90}{ MSL }}
    \put(-0.5, 81.3){\scriptsize \rotatebox{90}{ V10 }}
    \put(-0.5, 88.1){\scriptsize \rotatebox{90}{ U10 }}
    \put(-0.5, 94.9){\scriptsize \rotatebox{90}{ T2M }}

    \put(-0.5, 34.8){\scriptsize \rotatebox{90}{ U500 }}
    \put(-0.5, 41.2){\scriptsize \rotatebox{90}{ Q700 }}
    \put(-0.5, 47.9){\scriptsize \rotatebox{90}{ Q500 }}
    \put(-0.5, 54.5){\scriptsize \rotatebox{90}{ Z700 }}
    \put(-0.5, 61){\scriptsize \rotatebox{90}{ Z500 }}
    
    \put(-0.5, 2){\scriptsize \rotatebox{90}{ T850 }}
    \put(-0.5, 9){\scriptsize \rotatebox{90}{ T500 }}
    \put(-0.5, 15.7){\scriptsize \rotatebox{90}{ V700 }}
    \put(-0.5, 22.3){\scriptsize \rotatebox{90}{ V500 }}
    \put(-0.5, 28.9){\scriptsize \rotatebox{90}{ U700 }}
    
    \end{overpic}
    \caption{6-hour forecast results of global weather among different models. }
    \label{fig:global0}
\end{figure*}

% \begin{figure*}[t]
%   \centering
%    \begin{overpic}[width=\linewidth]{visualizes/fig_step1.pdf}
%         \put(5.5,44.2){\scriptsize GT}     \put(17,44.2){\scriptsize Graphcast}     \put(29,44.2){\scriptsize Error@Graphcast}     \put(45,44.2){\scriptsize Oneforecast}     \put(57,44.2){\scriptsize Error@Oneforecast}     \put(75.5,44.2){\scriptsize Ours}     \put(87.5,44.2){\scriptsize Error@Ours}          \put(-0.5, 3){\scriptsize \rotatebox{90}{ SP }}     \put(-0.5, 12){\scriptsize \rotatebox{90}{ MSL }}     \put(-0.5, 20){\scriptsize \rotatebox{90}{ V10 }}     \put(-0.5, 29){\scriptsize \rotatebox{90}{ U10 }}     \put(-0.5, 38){\scriptsize \rotatebox{90}{ T2M }}          
        
%     \end{overpic}     
%     \caption{0.5-day forecast results of global weather among different models.} 
%    \label{fig:global1}
%    \vspace{-0.2cm}
% \end{figure*}

\begin{figure*}[t]
  \centering
   \begin{overpic}[width=\linewidth]{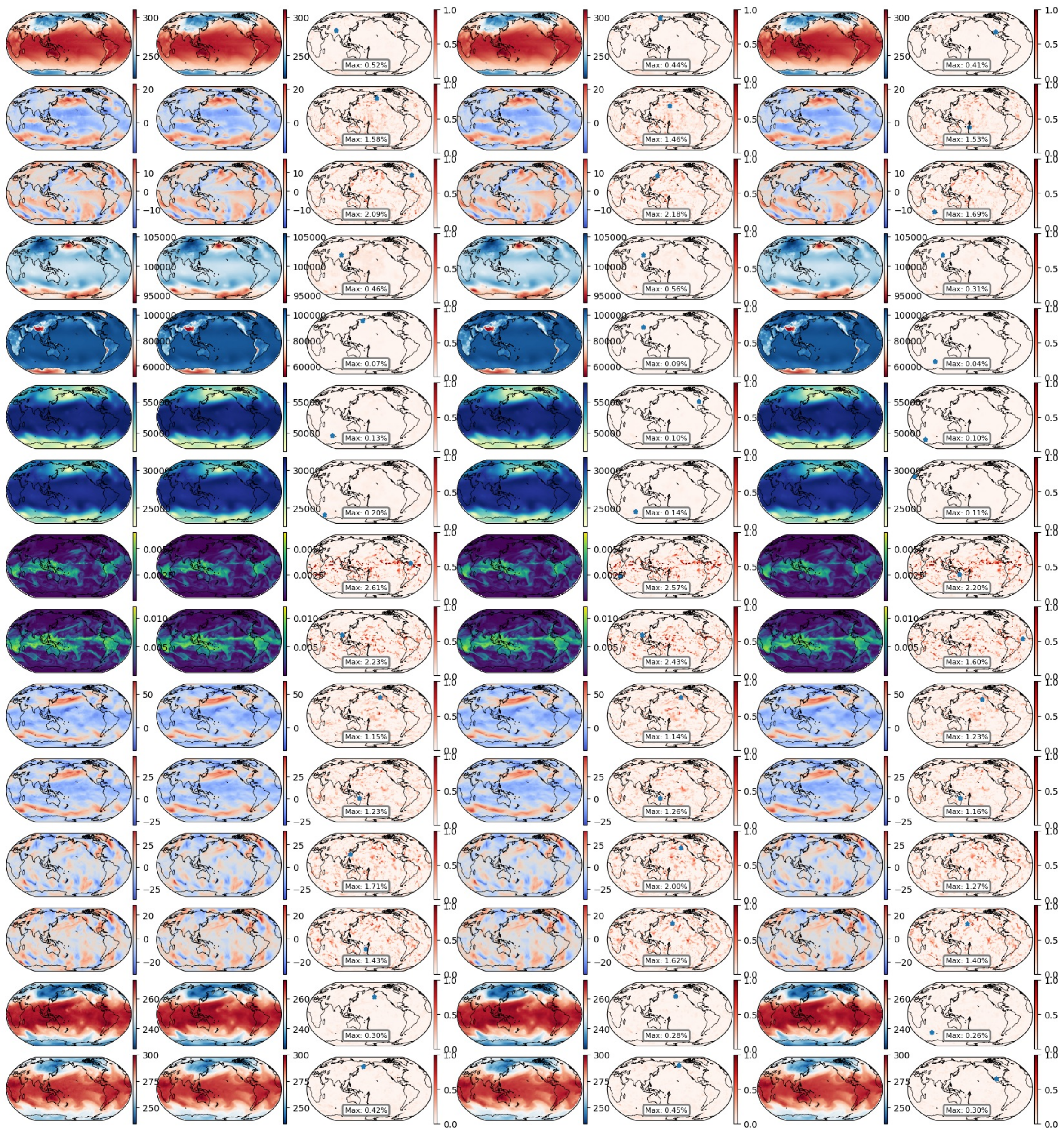}
        \put(5, 99.3){\scriptsize GT}
    \put(16.4, 99.3){\scriptsize Graphcast}
    \put(27.8, 99.3){\scriptsize Error@Graphcast}
    \put(42.4, 99.3){\scriptsize Oneforecast}
    \put(53.3, 99.3){\scriptsize Error@Oneforecast}
    \put(70.5, 99.3){\scriptsize Ours}
    \put(81.7, 99.3){\scriptsize Error@Ours}

    \put(-0.5, 68.4){\scriptsize \rotatebox{90}{ SP }}
    \put(-0.5, 74.5){\scriptsize \rotatebox{90}{ MSL }}
    \put(-0.5, 81.3){\scriptsize \rotatebox{90}{ V10 }}
    \put(-0.5, 88.1){\scriptsize \rotatebox{90}{ U10 }}
    \put(-0.5, 94.9){\scriptsize \rotatebox{90}{ T2M }}

    \put(-0.5, 34.8){\scriptsize \rotatebox{90}{ U500 }}
    \put(-0.5, 41.2){\scriptsize \rotatebox{90}{ Q700 }}
    \put(-0.5, 47.9){\scriptsize \rotatebox{90}{ Q500 }}
    \put(-0.5, 54.5){\scriptsize \rotatebox{90}{ Z700 }}
    \put(-0.5, 61){\scriptsize \rotatebox{90}{ Z500 }}
    
    \put(-0.5, 2){\scriptsize \rotatebox{90}{ T850 }}
    \put(-0.5, 9){\scriptsize \rotatebox{90}{ T500 }}
    \put(-0.5, 15.7){\scriptsize \rotatebox{90}{ V700 }}
    \put(-0.5, 22.3){\scriptsize \rotatebox{90}{ V500 }}
    \put(-0.5, 28.9){\scriptsize \rotatebox{90}{ U700 }}        
        \end{overpic}     \caption{1-day forecast results of global weather among different models.} 
   \label{fig:global3}
   \vspace{-0.2cm}
\end{figure*}

% \begin{figure*}[t]
%   \centering
%    \begin{overpic}[width=\linewidth]{visualizes/fig_step5.pdf}
%         \put(5.5,44.2){\scriptsize GT}     \put(17,44.2){\scriptsize Graphcast}     \put(29,44.2){\scriptsize Error@Graphcast}     \put(45,44.2){\scriptsize Oneforecast}     \put(57,44.2){\scriptsize Error@Oneforecast}     \put(75.5,44.2){\scriptsize Ours}     \put(87.5,44.2){\scriptsize Error@Ours}          \put(-0.5, 3){\scriptsize \rotatebox{90}{ SP }}     \put(-0.5, 12){\scriptsize \rotatebox{90}{ MSL }}     \put(-0.5, 20){\scriptsize \rotatebox{90}{ V10 }}     \put(-0.5, 29){\scriptsize \rotatebox{90}{ U10 }}     \put(-0.5, 38){\scriptsize \rotatebox{90}{ T2M }}          \end{overpic}     \caption{1.5-day forecast results of global weather among different models.} 
%    \label{fig:global5}
%    \vspace{-0.2cm}
% \end{figure*}

\begin{figure*}[t]
  \centering
   \begin{overpic}[width=\linewidth]{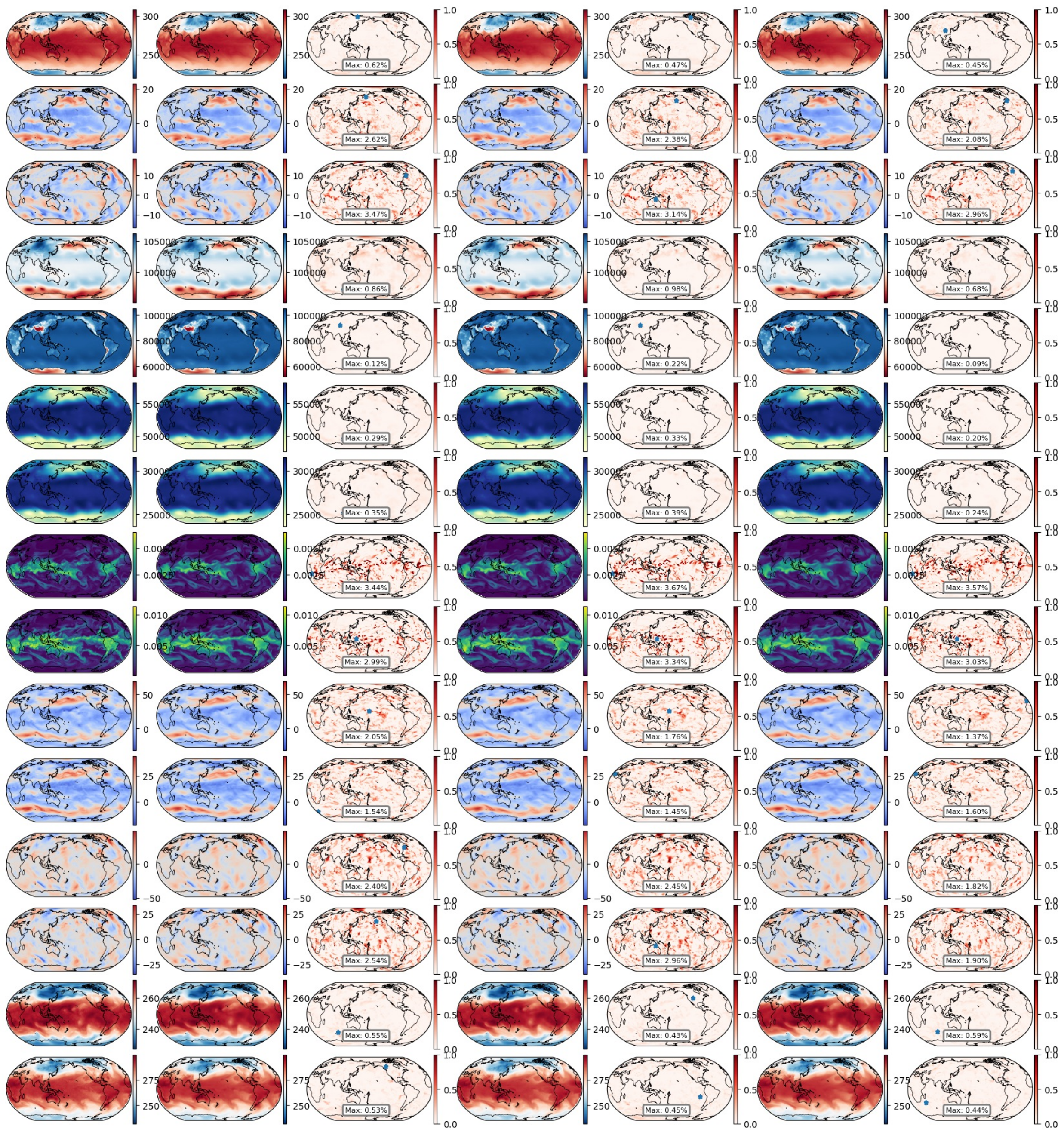}
        \put(5, 99.3){\scriptsize GT}
    \put(16.4, 99.3){\scriptsize Graphcast}
    \put(27.8, 99.3){\scriptsize Error@Graphcast}
    \put(42.4, 99.3){\scriptsize Oneforecast}
    \put(53.3, 99.3){\scriptsize Error@Oneforecast}
    \put(70.5, 99.3){\scriptsize Ours}
    \put(81.7, 99.3){\scriptsize Error@Ours}

    \put(-0.5, 68.4){\scriptsize \rotatebox{90}{ SP }}
    \put(-0.5, 74.5){\scriptsize \rotatebox{90}{ MSL }}
    \put(-0.5, 81.3){\scriptsize \rotatebox{90}{ V10 }}
    \put(-0.5, 88.1){\scriptsize \rotatebox{90}{ U10 }}
    \put(-0.5, 94.9){\scriptsize \rotatebox{90}{ T2M }}

    \put(-0.5, 34.8){\scriptsize \rotatebox{90}{ U500 }}
    \put(-0.5, 41.2){\scriptsize \rotatebox{90}{ Q700 }}
    \put(-0.5, 47.9){\scriptsize \rotatebox{90}{ Q500 }}
    \put(-0.5, 54.5){\scriptsize \rotatebox{90}{ Z700 }}
    \put(-0.5, 61){\scriptsize \rotatebox{90}{ Z500 }}
    
    \put(-0.5, 2){\scriptsize \rotatebox{90}{ T850 }}
    \put(-0.5, 9){\scriptsize \rotatebox{90}{ T500 }}
    \put(-0.5, 15.7){\scriptsize \rotatebox{90}{ V700 }}
    \put(-0.5, 22.3){\scriptsize \rotatebox{90}{ V500 }}
    \put(-0.5, 28.9){\scriptsize \rotatebox{90}{ U700 }}         
        \end{overpic}     \caption{2-day forecast results of global weather among different models.} 
   \label{fig:global7}
   \vspace{-0.2cm}
\end{figure*}

% \begin{figure*}[t]
%   \centering
%    \begin{overpic}[width=\linewidth]{visualizes/fig_step9.pdf}
%         \put(5.5,44.2){\scriptsize GT}     \put(17,44.2){\scriptsize Graphcast}     \put(29,44.2){\scriptsize Error@Graphcast}     \put(45,44.2){\scriptsize Oneforecast}     \put(57,44.2){\scriptsize Error@Oneforecast}     \put(75.5,44.2){\scriptsize Ours}     \put(87.5,44.2){\scriptsize Error@Ours}          \put(-0.5, 3){\scriptsize \rotatebox{90}{ SP }}     \put(-0.5, 12){\scriptsize \rotatebox{90}{ MSL }}     \put(-0.5, 20){\scriptsize \rotatebox{90}{ V10 }}     \put(-0.5, 29){\scriptsize \rotatebox{90}{ U10 }}     \put(-0.5, 38){\scriptsize \rotatebox{90}{ T2M }}          \end{overpic}     \caption{2.5-day forecast results of global weather among different models.} 
%    \label{fig:global9}
%    \vspace{-0.2cm}
% \end{figure*}

\begin{figure*}[t]
  \centering
   \begin{overpic}[width=\linewidth]{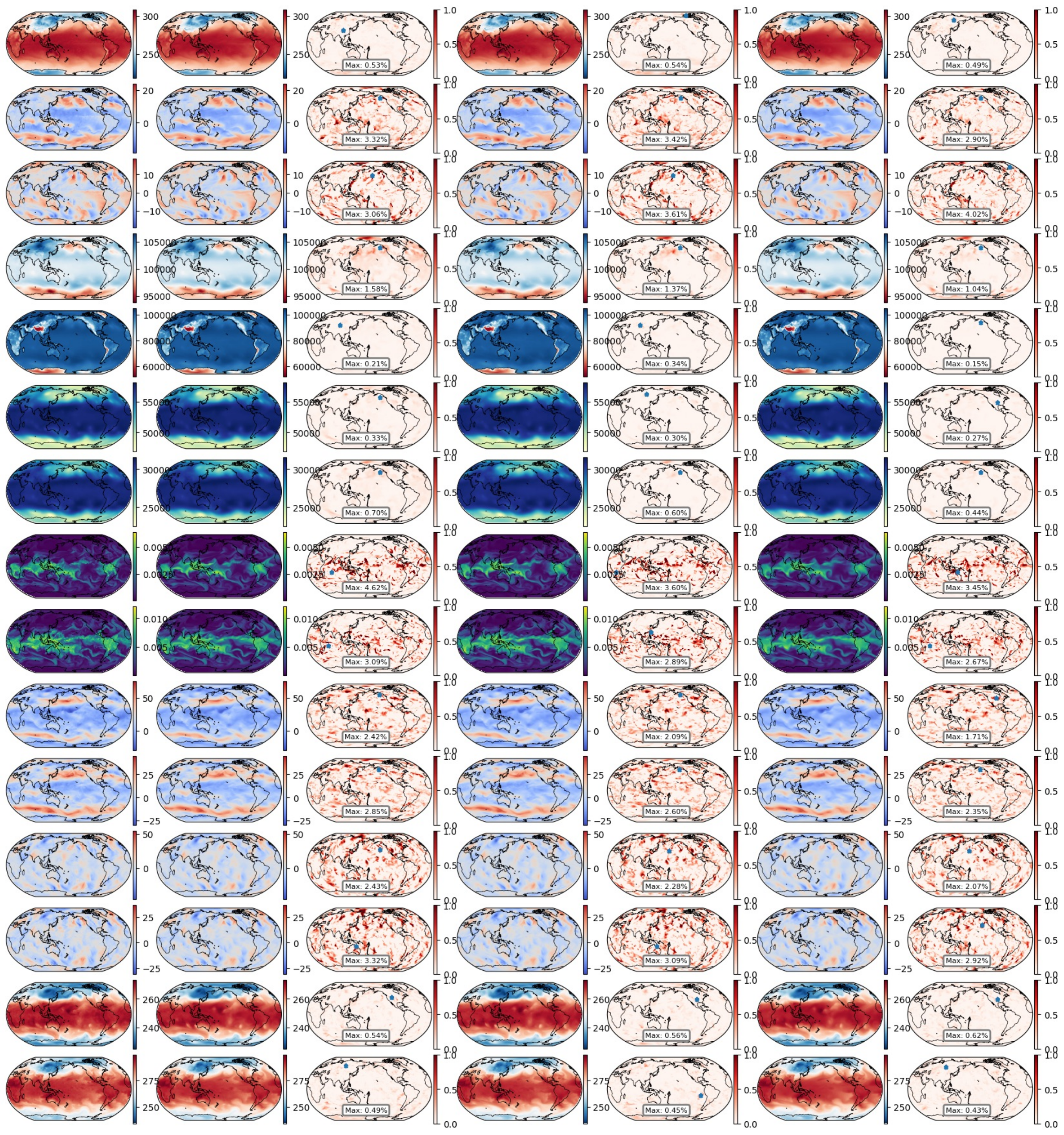}
        \put(5, 99.3){\scriptsize GT}
    \put(16.4, 99.3){\scriptsize Graphcast}
    \put(27.8, 99.3){\scriptsize Error@Graphcast}
    \put(42.4, 99.3){\scriptsize Oneforecast}
    \put(53.3, 99.3){\scriptsize Error@Oneforecast}
    \put(70.5, 99.3){\scriptsize Ours}
    \put(81.7, 99.3){\scriptsize Error@Ours}

    \put(-0.5, 68.4){\scriptsize \rotatebox{90}{ SP }}
    \put(-0.5, 74.5){\scriptsize \rotatebox{90}{ MSL }}
    \put(-0.5, 81.3){\scriptsize \rotatebox{90}{ V10 }}
    \put(-0.5, 88.1){\scriptsize \rotatebox{90}{ U10 }}
    \put(-0.5, 94.9){\scriptsize \rotatebox{90}{ T2M }}

    \put(-0.5, 34.8){\scriptsize \rotatebox{90}{ U500 }}
    \put(-0.5, 41.2){\scriptsize \rotatebox{90}{ Q700 }}
    \put(-0.5, 47.9){\scriptsize \rotatebox{90}{ Q500 }}
    \put(-0.5, 54.5){\scriptsize \rotatebox{90}{ Z700 }}
    \put(-0.5, 61){\scriptsize \rotatebox{90}{ Z500 }}
    
    \put(-0.5, 2){\scriptsize \rotatebox{90}{ T850 }}
    \put(-0.5, 9){\scriptsize \rotatebox{90}{ T500 }}
    \put(-0.5, 15.7){\scriptsize \rotatebox{90}{ V700 }}
    \put(-0.5, 22.3){\scriptsize \rotatebox{90}{ V500 }}
    \put(-0.5, 28.9){\scriptsize \rotatebox{90}{ U700 }}        
        \end{overpic}     \caption{3-day forecast results of global weather among different models.} 
   \label{fig:global11}
   \vspace{-0.2cm}
\end{figure*}

\begin{figure*}[t]
  \centering
   \begin{overpic}[width=\linewidth]{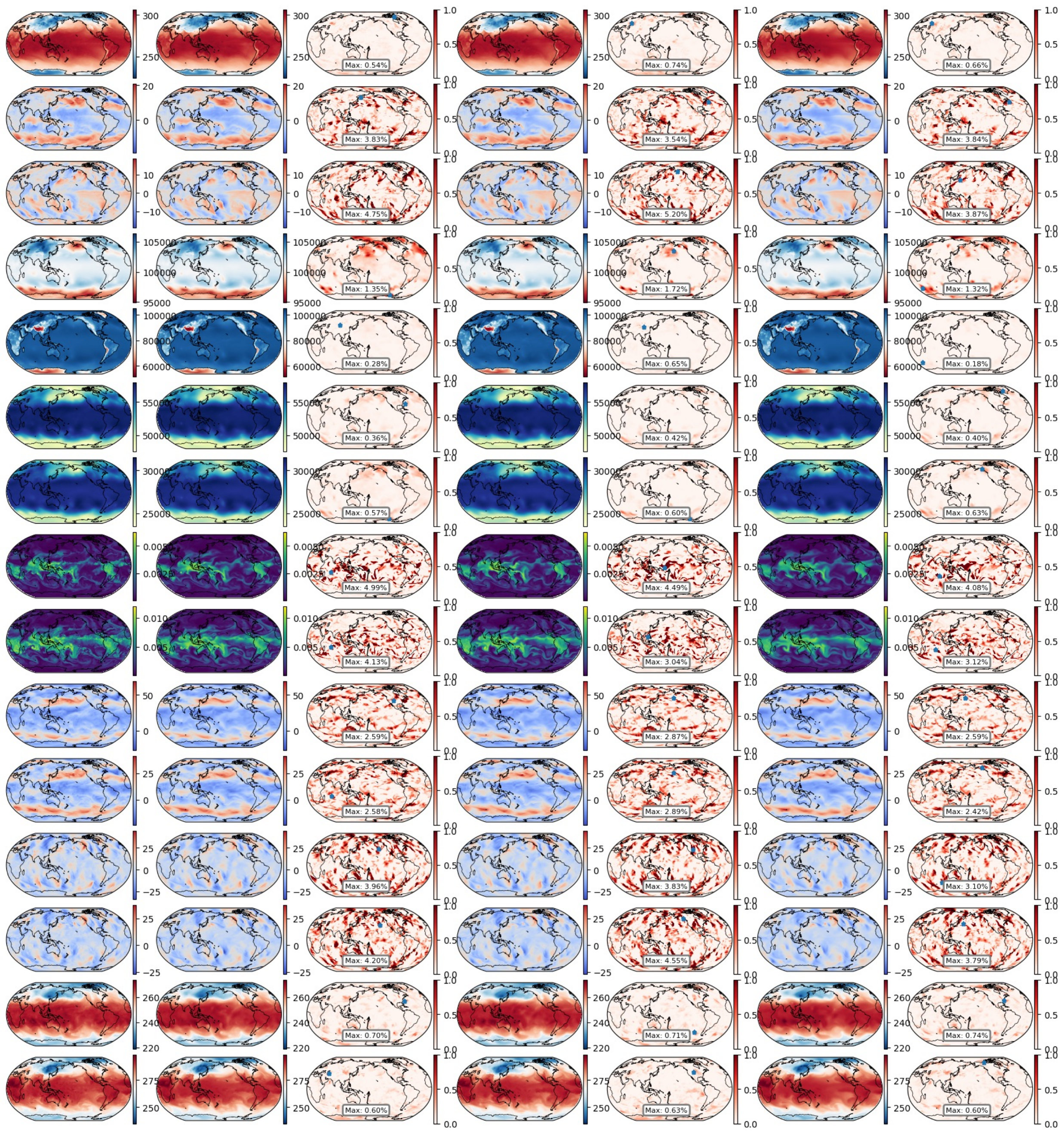}
        \put(5, 99.3){\scriptsize GT}
    \put(16.4, 99.3){\scriptsize Graphcast}
    \put(27.8, 99.3){\scriptsize Error@Graphcast}
    \put(42.4, 99.3){\scriptsize Oneforecast}
    \put(53.3, 99.3){\scriptsize Error@Oneforecast}
    \put(70.5, 99.3){\scriptsize Ours}
    \put(81.7, 99.3){\scriptsize Error@Ours}

    \put(-0.5, 68.4){\scriptsize \rotatebox{90}{ SP }}
    \put(-0.5, 74.5){\scriptsize \rotatebox{90}{ MSL }}
    \put(-0.5, 81.3){\scriptsize \rotatebox{90}{ V10 }}
    \put(-0.5, 88.1){\scriptsize \rotatebox{90}{ U10 }}
    \put(-0.5, 94.9){\scriptsize \rotatebox{90}{ T2M }}

    \put(-0.5, 34.8){\scriptsize \rotatebox{90}{ U500 }}
    \put(-0.5, 41.2){\scriptsize \rotatebox{90}{ Q700 }}
    \put(-0.5, 47.9){\scriptsize \rotatebox{90}{ Q500 }}
    \put(-0.5, 54.5){\scriptsize \rotatebox{90}{ Z700 }}
    \put(-0.5, 61){\scriptsize \rotatebox{90}{ Z500 }}
    
    \put(-0.5, 2){\scriptsize \rotatebox{90}{ T850 }}
    \put(-0.5, 9){\scriptsize \rotatebox{90}{ T500 }}
    \put(-0.5, 15.7){\scriptsize \rotatebox{90}{ V700 }}
    \put(-0.5, 22.3){\scriptsize \rotatebox{90}{ V500 }}
    \put(-0.5, 28.9){\scriptsize \rotatebox{90}{ U700 }}         
        \end{overpic}     \caption{5-day forecast results of global weather among different models.} 
   \label{fig:global19}
   \vspace{-0.2cm}
\end{figure*}

\begin{figure*}[t]
  \centering
   \begin{overpic}[width=\linewidth]{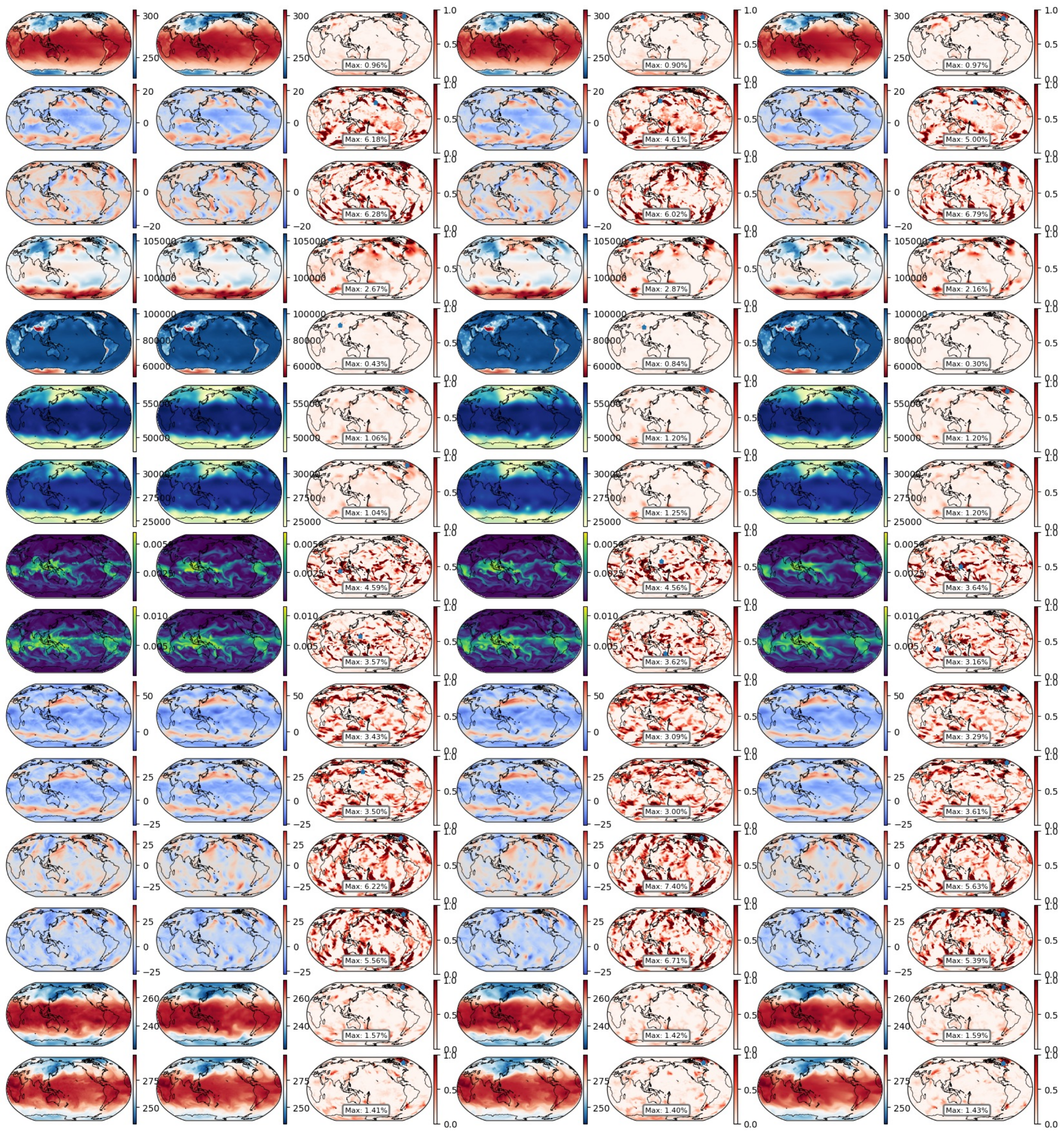}
        \put(5, 99.3){\scriptsize GT}
    \put(16.4, 99.3){\scriptsize Graphcast}
    \put(27.8, 99.3){\scriptsize Error@Graphcast}
    \put(42.4, 99.3){\scriptsize Oneforecast}
    \put(53.3, 99.3){\scriptsize Error@Oneforecast}
    \put(70.5, 99.3){\scriptsize Ours}
    \put(81.7, 99.3){\scriptsize Error@Ours}

    \put(-0.5, 68.4){\scriptsize \rotatebox{90}{ SP }}
    \put(-0.5, 74.5){\scriptsize \rotatebox{90}{ MSL }}
    \put(-0.5, 81.3){\scriptsize \rotatebox{90}{ V10 }}
    \put(-0.5, 88.1){\scriptsize \rotatebox{90}{ U10 }}
    \put(-0.5, 94.9){\scriptsize \rotatebox{90}{ T2M }}

    \put(-0.5, 34.8){\scriptsize \rotatebox{90}{ U500 }}
    \put(-0.5, 41.2){\scriptsize \rotatebox{90}{ Q700 }}
    \put(-0.5, 47.9){\scriptsize \rotatebox{90}{ Q500 }}
    \put(-0.5, 54.5){\scriptsize \rotatebox{90}{ Z700 }}
    \put(-0.5, 61){\scriptsize \rotatebox{90}{ Z500 }}
    
    \put(-0.5, 2){\scriptsize \rotatebox{90}{ T850 }}
    \put(-0.5, 9){\scriptsize \rotatebox{90}{ T500 }}
    \put(-0.5, 15.7){\scriptsize \rotatebox{90}{ V700 }}
    \put(-0.5, 22.3){\scriptsize \rotatebox{90}{ V500 }}
    \put(-0.5, 28.9){\scriptsize \rotatebox{90}{ U700 }}       
        \end{overpic}     
        \caption{7-day forecast results of global weather among different models.} 
   \label{fig:global27}
   \vspace{-0.2cm}
\end{figure*}

\begin{figure*}[t]
  \centering
   \begin{overpic}[width=\linewidth]{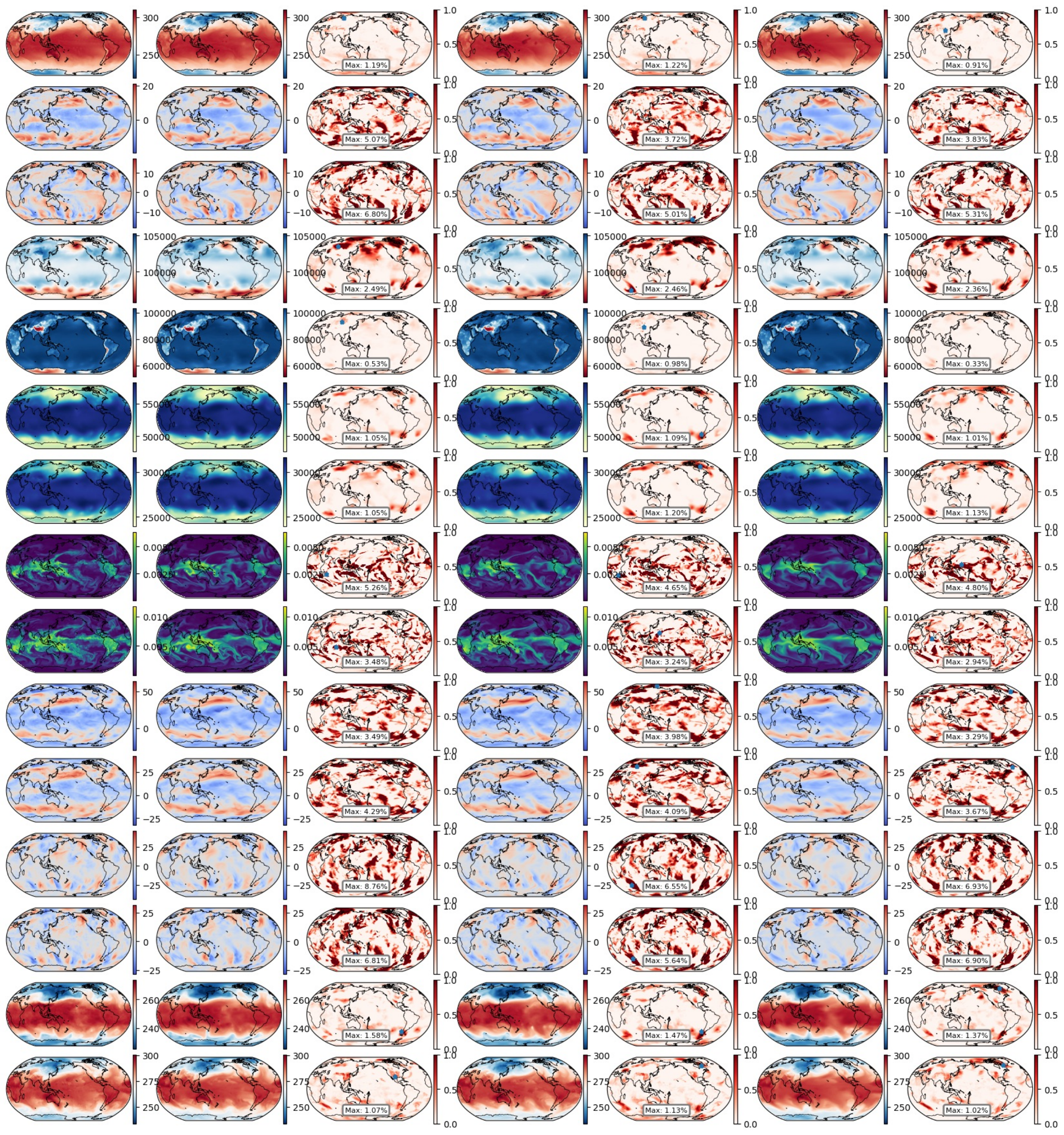}
        \put(5, 99.3){\scriptsize GT}
    \put(16.4, 99.3){\scriptsize Graphcast}
    \put(27.8, 99.3){\scriptsize Error@Graphcast}
    \put(42.4, 99.3){\scriptsize Oneforecast}
    \put(53.3, 99.3){\scriptsize Error@Oneforecast}
    \put(70.5, 99.3){\scriptsize Ours}
    \put(81.7, 99.3){\scriptsize Error@Ours}

    \put(-0.5, 68.4){\scriptsize \rotatebox{90}{ SP }}
    \put(-0.5, 74.5){\scriptsize \rotatebox{90}{ MSL }}
    \put(-0.5, 81.3){\scriptsize \rotatebox{90}{ V10 }}
    \put(-0.5, 88.1){\scriptsize \rotatebox{90}{ U10 }}
    \put(-0.5, 94.9){\scriptsize \rotatebox{90}{ T2M }}

    \put(-0.5, 34.8){\scriptsize \rotatebox{90}{ U500 }}
    \put(-0.5, 41.2){\scriptsize \rotatebox{90}{ Q700 }}
    \put(-0.5, 47.9){\scriptsize \rotatebox{90}{ Q500 }}
    \put(-0.5, 54.5){\scriptsize \rotatebox{90}{ Z700 }}
    \put(-0.5, 61){\scriptsize \rotatebox{90}{ Z500 }}
    
    \put(-0.5, 2){\scriptsize \rotatebox{90}{ T850 }}
    \put(-0.5, 9){\scriptsize \rotatebox{90}{ T500 }}
    \put(-0.5, 15.7){\scriptsize \rotatebox{90}{ V700 }}
    \put(-0.5, 22.3){\scriptsize \rotatebox{90}{ V500 }}
    \put(-0.5, 28.9){\scriptsize \rotatebox{90}{ U700 }}         
    \end{overpic}     
    \caption{10-day forecast results of global weather among different models.} 
   \label{fig:global39}
   \vspace{-0.2cm}
\end{figure*}

\end{document}